\def\figref#1{figure~\ref{#1}}
\def\Figref#1{Figure~\ref{#1}}
\def\secref#1{section~\ref{#1}}
\def\Secref#1{Section~\ref{#1}}
\def\eqref#1{equation~\ref{#1}}
\def\1{\bm{1}}
\def\va{{\bm{a}}}
\def\vb{{\bm{b}}}
\def\ve{{\bm{e}}}
\def\vm{{\bm{m}}}
\def\vs{{\bm{s}}}
\def\vv{{\bm{v}}}
\def\vx{{\bm{x}}}
\def\vy{{\bm{y}}}
\def\mA{{\bm{A}}}
\def\mH{{\bm{H}}}
\def\mI{{\bm{I}}}
\def\mW{{\bm{W}}}
\def\mX{{\bm{X}}}
\def\mZ{{\bm{Z}}}
\DeclareMathAlphabet{\mathsfit}{\encodingdefault}{\sfdefault}{m}{sl}
\SetMathAlphabet{\mathsfit}{bold}{\encodingdefault}{\sfdefault}{bx}{n}
\newcommand{\R}{\mathbb{R}}
\DeclareMathOperator*{\argmax}{arg\,max}
\renewcommand\nomgroup[1]{%
  \item[\bfseries
  \ifstrequal{#1}{A}{Numbers and Arrays}{%
  \ifstrequal{#1}{B}{Sets}{%
  \ifstrequal{#1}{C}{Indexing}{
  \ifstrequal{#1}{F}{Functions}{
  \ifstrequal{#1}{G}{Asymptotics{}}
  }}}}%
]}
\theoremstyle{plain}
\newtheorem{thm}{Theorem}[section]
\newtheorem{lem}{Lemma}[section]
\newtheorem{cor}{Corollary}[section]
\newtheorem{ass}{Assumption}[section]
\newtheorem{rem}{Remark}[section]
\newtheorem*{opn*}{Open Problem}
\theoremstyle{definition}
\newtheorem{dfn}{Definition}[section]
\crefname{thm}{Theorem}{Theorems}
\crefname{lem}{Lemma}{Lemmas}
\crefname{cor}{Corollary}{Corollaries}
\crefname{pro}{Proposition}{Propositions}
\crefname{ass}{Assumption}{Assumptions}
\crefname{rem}{Remark}{Remarks}
\crefname{tbl}{Table}{Tables}
\crefname{dfn}{Definition}{Definitions}
\newcommand{\relmiddle}[1]{\mathrel{}\middle#1\mathrel{}}
\newcommand\numberthis{\addtocounter{equation}{1}\tag{\theequation}}
\newcommand{\sgn}{\mathop{\mathrm{sgn}}\nolimits}
\title{On the Optimal Memorization Capacity of Transformers}
\author{Tokio Kajitsuka \& Issei Sato \\
Department of Computer Science\\
The University of Tokyo\\
\texttt{\{{kajitsuka-tokio,sato}\}@g.ecc.u-tokyo.ac.jp}
}
\begin{document}

\maketitle

\begin{abstract}
Recent research in the field of machine learning has increasingly focused on the memorization capacity of Transformers, but how efficient they are is not yet well understood.
We demonstrate that Transformers can memorize labels with $\tilde{O}(\sqrt{N})$ parameters in a next-token prediction setting for $N$ input sequences of length $n$, which is proved to be optimal up to logarithmic factors.
This indicates that Transformers can efficiently perform memorization with little influence from the input length $n$ owing to the benefit of parameter sharing.
We also analyze the memorization capacity in the sequence-to-sequence setting, and find that $\tilde{O}(\sqrt{nN})$ parameters are not only sufficient, but also necessary at least for Transformers with hardmax.
These results suggest that while self-attention mechanisms can efficiently identify input sequences, the feed-forward network becomes a bottleneck when associating a label to each token.
\end{abstract}

\section{Introduction}
In recent years, the Transformer architecture \citep{vaswani_attention_2017} has played a pivotal role in the field of machine learning, becoming indispensable for a variety of models in the community.
In addition to the original breakthroughs in natural language processing, such as the GPT series \citep{brown_language_2020,radford_improving_2018,radford_language_2019}, it has been observed that in numerous applications, higher accuracy can be achieved by replacing existing models with Transformers.
In particular, models such as the Vision Transformer \citep{dosovitskiy_image_2021} in image processing and the Diffusion Transformer \citep{peebles_scalable_2023} in generative tasks have demonstrated exceptional performances  in a wide variety of tasks. 
These examples demonstrate how effective and versatile Transformers are for a diverse range of purposes.

Although the high performance of Transformers has led to their widespread use in practice, there are ongoing attempts to theoretically analyze what exactly contributes to their superior performance.
In particular, one important aspect of Transformers is their representational capabilities.
Previous studies have explored from a variety of angles why Transformers have high expressive capacity and can memorize vast amounts of data \citep{edelman_inductive_2022,gurevych_rate_2022,takakura_approximation_2023}.
For example, it has been shown that Transformers are universal approximators (having the ability to approximate arbitrary functions) \citep{yun_are_2019} or that a particular Transformer configuration can memorize a given set of data \citep{kim_provable_2023,kajitsuka_are_2023,mahdavi_memorization_2023,madden_next-token_2024}.

Nevertheless, while various studies have suggested that Transformers are indeed capable of memorizing data, our understanding of how efficiently they can do so remains limited. Specifically, it is not yet fully clear how certain characteristics of Transformers, such as parameter sharing, influence the reduction of model parameters and overall efficiency with respect to their \textbf{memorization capacity}, the minimum size of networks required for memorizing any sequence of a given number of data.

There are several key advantages to investigating whether a Transformer can efficiently memorize data, such as the possibility of gaining a better understanding of Transformer's strengths and providing useful insights for model design and selection.
In addition, knowledge of memorization efficiency can provide important information for evaluating generalization error \citep{belkin_reconciling_2019,nakkiran_deep_2021}.
Alternatively, if it turns out that Transformers do not offer a significant efficiency advantage over feed-forward networks, it may suggest that currently widely used Transformers may in fact be substitutable for feed-forward networks.

This paper investigates the efficiency of Transformers in achieving data memorization by analyzing the necessary and sufficient model complexity for this task.
To be more precise, we establish both upper and lower bounds on the number of parameters needed for memorization in the next-token prediction setting and demonstrate that they are of the same order up to logarithmic factors, thereby showing that Transformers can achieve data memorization with nearly optimal efficiency.

Furthermore, the upper bound on memorization capacity in the next-token prediction setting can be naturally extended to the sequence-to-sequence setting. 
This upper bound is also proved to be optimal up to logarithmic factors in the sequence-to-sequence setting, at least for Transformers with the hardmax function.

\section{Related Work}\label{sec:related_work}
\subsection*{Memorization capacity}
Research on memorization capacity began at least as late as the 1960s \citep{cover_geometrical_1965,nilsson_learning_1965,minsky_perceptrons_1969}.
Specifically, \citet{nilsson_learning_1965} showed that one-hidden-layer neural networks with $N-1$ nodes is able to compute any label assignments for $N$ data points.
Later, \citet{baum_capabilities_1988} exhibited that $\lceil N/d \rceil$ neurons are sufficient for one-hidden-layer neural networks with threshold units to memorize any set of $N$ input-label pairs with the input dimension $d$, and \citet{huang_upper_1998,zhang_understanding_2021} extended the results to more general activation functions.

The analysis of memorization capacity is closely linked to the concept of the Vapnik-Chervonenkis (VC) dimension.
While the memorization capacity of a model refers to the minimum size of the model required for memorizing \textit{any} tuple of $N$ input-label pairs for some $N$, the VC dimension considers whether the model is capable of shattering, that is, memorizing any possible label assignments for \textit{some} set of $N$ input points, which in turn provides a lower bound on the memorization capacity.
For example, \citet{goldberg_bounding_1995} estimated that the VC dimension of a feed-forward network with ReLU activation functions and $W$ parameters is at most $O(W^2)$ by reducing the network to a boolean formula.
From this upper bound, it can be inferred that a feed-forward network with ReLU activation functions requires at least $\Omega(\sqrt{N})$ parameters to memorize arbitrary $N$ data points.
\citet{bartlett_nearly_2019} further refined this analysis by examining the behavior of the network as a function of its parameters and analyzing it layer by layer, and demonstrated that the VC dimension of a ReLU network with width $W$ and depth $L$ is $O(WL \log W)$.

Remarkably, \citet{park_provable_2021} proposed a construction method under the assumption that the data points are separated by at least $\delta$, showing that a feed-forward network using sigmoid or ReLU activation functions with a sub-linear parameter order $O(N^{2/3} + \log \delta)$ can memorize $N$ data points.
Later, \citet{vardi_optimal_2022} demonstrated that, under similar assumptions, a ReLU network with $O(\sqrt{N\log N})$ parameters suffices for memorizing arbitrary $N$ data points.
This result is \textit{optimal} up to logarithmic factors, as it matches the lower bound $\Omega(\sqrt{N})$ implied by the VC dimension discussed above.
Note that the assumption that data points are well separated is crucial to achieve sub-linear memorization capacity; in fact, it has been shown that at least $(N-1)/2$ parameters are required to memorize arbitrary $N$ distinct data points without such separation \citep{sontag_shattering_1997}.
Additionally, \citet{siegel_sharp_2024} proved that $\Omega(N)$ parameters are necessary for memorizing $N$ data points when the separation $\delta$ between data points is exponentially small with respect to $N$.

Memorization capacity is not only theoretically intriguing but also practically significant.
As the model size increases, classical learning theory predicts that the training error decreases while the generalization error follows a U-shaped curve. 
However, recent observations of the double descent phenomenon \citep{belkin_reconciling_2019,nakkiran_deep_2021} revealed that after achieving zero training loss, the generalization error begins to decrease again. Analyzing memorization capacity helps identify the critical model size at which this shift occurs, providing valuable insights into the dynamics of model performance.

\begin{table}[t]
\caption{Comparisons between our results and related work regarding the memorization capacity of Transformers.
The variable $\omega$ in the bounds presented by \citet{madden_next-token_2024} represents the vocabulary size, or the number of distinct word vectors that appear in input sequences.}
\label{tbl:comparison}
\begin{center}
\begin{tabular}{c|ccccc}
Paper & Setting & Input & $\#$layers & Upper bound & Lower bound \\ \hhline{=|=====}
\begin{tabular}[c]{@{}p{5em}@{}} \citet{kim_provable_2023} \end{tabular} & seq-to-seq & \begin{tabular}[c]{@{}c@{}}token-wise \\ $(r,\delta)$-separated \end{tabular} & $\tilde{O}(n + \sqrt{nN})$ & $\tilde{O}(n + \sqrt{nN})$ & -           \\ \hline
\begin{tabular}[c]{@{}p{5em}@{}} \citet{mahdavi_memorization_2023} \end{tabular} & next-token & \begin{tabular}[c]{@{}c@{}}linearly \\ independent\end{tabular} & $1$ & $O(d^2N/n)$ & - \\ \hline
\begin{tabular}[c]{@{}p{5em}@{}} \citet{kajitsuka_are_2023} \end{tabular} & seq-to-seq & \begin{tabular}[c]{@{}c@{}}token-wise \\ $(r,\delta)$-separated \end{tabular} & $1$ & $O(dnN + d^2)$ & - \\ \hline
\begin{tabular}[c]{@{}p{5em}@{}} \citet{madden_next-token_2024} \end{tabular} & next-token & \begin{tabular}[c]{@{}c@{}}with positional \\ encoding \end{tabular} & $1$ & $O(\omega N)$ & $\Omega(\omega N)$ \\ \hline
\textbf{Ours}  & next-token & \begin{tabular}[c]{@{}c@{}}token-wise \\ $(r,\delta)$-separated \end{tabular} & $\tilde{O}(\sqrt{N})$  & $\tilde{O}(\sqrt{N})$ & $\Omega(\sqrt{N})$ \\ \cdashline{2-6}
 & seq-to-seq & \begin{tabular}[c]{@{}c@{}}token-wise \\ $(r,\delta)$-separated \end{tabular} & $\tilde{O}(\sqrt{nN})$ & $\tilde{O}(\sqrt{nN})$ & $\Omega\left(\sqrt{\frac{nN}{\log (nN)}}\right)$
\end{tabular}
\end{center}
\end{table}

\subsection*{Expressivity of Transformers}
One of the foundational studies on the representation power of Transformers is the work by \citet{yun_are_2019}, who demonstrated that Transformers are universal approximators.
Their proof already incorporates the idea of constructing a contextual mapping from data points to contexts and linking these context ids to labels.
\citet{kim_provable_2023}, whose work is most closely related to our work, improved their contextual mapping approach and demonstrated that this mapping, constructed using $2n$ layers of self-attention for $N$ input sequences of length $n$, allows for memorization with $\tilde{O}(n + \sqrt{nN})$ parameters under the same assumption that data points are well separated as in \citet{park_provable_2021,vardi_optimal_2022}.
Later, \citet{kajitsuka_are_2023} showed that a single-layer, single-head Transformer already possesses memorization capacity under the same assumption, while self-attention with hardmax does not.
In contrast to the studies mentioned above, \citet{mahdavi_memorization_2023} demonstrated that under the assumption that data points are linearly independent, a multi-head attention with $H$ heads and embedding dimension $d > n$ can memorize $\Omega(Hn)$ data points in a next-token prediction like setting.
\citet{madden_next-token_2024} proved upper and lower bounds on the memorization capacity of one-layer Transformers with parameters of infinite precision in the next-token prediction setting.
\citet{chen_what_2024} investigated the behavior of Transformers with varying depths, and specifically demonstrated that a single-layer Transformer can achieve memorization if input sequences are sufficiently zero-padded.
However, they noted that their objective was not to explore efficient constructions.
The comparisons between our results and related work are summarized in Table~\ref{tbl:comparison}.
Note that all the papers listed here that investigate single-layer Transformers assume either infinite parameter precision or do not consider the bit-length required to represent parameters.

In addition to memorization capacity, there are studies highlighting other perspectives on Transformers, including their function approximation capacity \citep{gurevych_rate_2022,takakura_approximation_2023,jiang_approximation_2024}, and their ability to efficiently represent sparse functions \citep{edelman_inductive_2022,bhattamishra_simplicity_2023,sanford_representational_2023,trauger_sequence_2024,wang_transformers_2024}.

\section{Preliminaries}\label{sec:preliminaries}
\subsection{Notation}\label{sec:notation}
We denote vectors and matrices by bold lowercase and uppercase letters, respectively.
Given a vector $\vv$, we denote its $i$-th element as $v_i$.
Given a matrix $\mA$, we denote its $i$-th row as $\mA_{i,:}$, its $j$-th column as $\mA_{:,j}$ and the element at position $(i,j)$ as $A_{i,j}$.
For a natural number $m \in \mathbb{N}^+$, we use $[m]$ to denote the set $\{1,\dots,m\}$.
In the context of the self-attention mechanism, we use $\sigma_S$ to represent the column-wise softmax function.
Specifically, for a matrix $\mA \in \mathbb{R}^{a \times b}$, $\sigma_S[\mA] \in \mathbb{R}^{a \times b}$ is calculated by $\sigma_S[\mA]_{i,j} \coloneqq \exp(A_{i,j}) / \sum_{k=1}^a \exp (A_{k,j})$.
Likewise, we use $\sigma_H$ to denote the column-wise hardmax function.
Note that if there are multiple values in a column, its outputs are normalized so that they sum up to $1$.
Mathematically, for a matrix $\mA \in \mathbb{R}^{a \times b}$, $\sigma_H[\mA] \in \mathbb{R}^{a \times b}$ is calculated as follows.
\begin{align}
    \sigma_H[\mA]_{i,j} \coloneqq \begin{cases}
        1 / |I_j| & \text{if } A_{i,j} = \max_{k} A_{k,j}, \\
        0 & \text{otherwise},
    \end{cases} \label{eq:hardmax}
\end{align}
where $I_j \coloneqq \argmax_k A_{k,j} \coloneqq \{k' \in [a] \mid A_{k',j} = \max_{k} A_{k,j} \}$ for any $j \in [b]$.
We use $\sigma_R$ to denote the ReLU activation function, that is, $\sigma_R[x] \coloneqq \max(0,x)$.
Unlike $\sigma_S$ and $\sigma_H$, $\sigma_R$ is always applied element-wise, regardless of whether the input is a vector or a matrix.
For any natural number $x \in \mathbb{N}$, $\operatorname{BIN}_{i:j}(x) \in \mathbb{N}$ represents the sequence of bits from the $i$-th bit to the $j$-th bit (counting from the left) of $x$, interpreted as a natural number.
For a vector $\vv \in \mathbb{R}^a$, the $L^2$ norm of $\vv$ is denoted by $\|\vv\|_2 \coloneqq \sum_{i=1}^a v_i^2$.
We use standard asymptotic notation. 
Specifically, $f(n) = O(g(n))$ indicates that the function $f$ grows \textit{at most} as fast as $g$ for sufficiently large $n$, and $f(n) = \tilde{O}(g(n))$ represents that $f$ grows at most as fast as $g$, up to logarithmic factors.
Likewise, $f(n) = \Omega(g(n))$ means that the function $f$ grows \textit{at least} as fast as $g$ for sufficiently large $n$.
$f(n) \lesssim g(n)$ means that there exists a positive constant $c$ such that $f(n) \leq cg(n)$ holds.

In this paper, we basically use $n$ to denote the length of an input sequence, $N$ to denote the number of input sequences, $C$ to denote the number of classes, and $d$ to denote the dimensionality of each token.
Additionally, index $i$ is typically used to refer to the position of input sequences, while index $k$ is used to refer to the position of the token within an input sequence.

\subsection{Transformer block}
In this subsection, we introduce the architecture of Transformers \citep{vaswani_attention_2017}.
We basically follow the notations by \citet{kim_provable_2023}.
Transformers are defined by stacking multiple Transformer blocks, each of which consists of a self-attention layer and a feed-forward layer.

\textbf{Self-attention layer}: 
Given an input sequence $\mZ \in \mathbb{R}^{m \times n}$, the output of a self-attention layer $\mathcal{F}^{(\mathrm{SA})}_l:\mathbb{R}^{m \times n} \to \mathbb{R}^{m \times n}$ at block $l \in [L]$ is calculated by
\begin{align}
    \mathcal{F}^{(\mathrm{SA})}_l\left(\mZ\right)
    \coloneqq \mZ + \sum_{h=1}^{H} \mW^{(O)}_{hl} \mW^{(V)}_{hl} \mZ 
    \sigma_S\left[\left(\mW^{(K)}_{hl} \mZ\right)^\top \left(\mW^{(Q)}_{hl} \mZ\right) \right]
    \in \mathbb{R}^{m \times n},
\end{align}
where $\mW^{(V)}_{hl},\ \mW^{(K)}_{hl},\ \mW^{(Q)}_{hl} \in \mathbb{R}^{s \times m}$ and $\mW^{(O)}_{hl} \in \mathbb{R}^{m \times s}$ are value, key, query and projection matrices at head $h \in [H]$ with head size $s$, respectively.

\textbf{Feed-forward layer}: 
The output $\mH \in \mathbb{R}^{m \times n}$ of the self-attention layer at block $l$ is then passed to the feed-forward layer, which performs the following token-wise operation:
\begin{align}
    \mathcal{F}^{(\mathrm{FF})}_l\left(\mH\right)_{:,k}
    \coloneqq \mH_{:,k} + \mW^{(2)}_l \sigma_R \left[\mW^{(1)}_l\mH_{:,k} + \vb^{(1)}_l \right] + \vb^{(2)}_l
    \in \mathbb{R}^{m}
    \quad (k \in [n]), \label{eq:def_ff_Transformer}
\end{align}
where $\mW^{(1)}_l \in \mathbb{R}^{q \times m}$ and $\mW^{(2)}_l \in \mathbb{R}^{m \times q}$ are weight matrices with hidden dimension $q$, and $\vb^{(1)}_l \in \mathbb{R}^q$ and $\vb^{(2)}_l \in \mathbb{R}^{m}$ are bias terms.

Using the self-attention layer and the feed-forward layer, the Transformer block $\mathcal{F}_l:\mathbb{R}^{m \times n} \to \mathbb{R}^{m \times n}$ at block $l \in [L]$ is defined as a composition of these two layers, that is, $\mathcal{F}_l \coloneqq \mathcal{F}^{(\mathrm{FF})}_l \circ \mathcal{F}^{(\mathrm{SA})}_l$, and the whole architecture of the Transformer $\mathcal{N}:\mathbb{R}^{d \times n} \to \mathbb{R}^{1 \times n}$ is expressed by
\begin{align}
    \mathcal{N}
    \coloneqq \mathcal{E}_{\mathrm{out}} \circ \mathcal{F}_L \circ \cdots \circ \mathcal{F}_1 \circ \mathcal{E}_{\mathrm{in}}, \label{eq:def_of_Transformer}
\end{align}
where $\mathcal{E}_{\mathrm{in}}: \mathbb{R}^{d \times n} \to \mathbb{R}^{m \times n}$ and $\mathcal{E}_{\mathrm{out}}: \mathbb{R}^{m \times n} \to \mathbb{R}^{1 \times n}$ are token-wise linear mappings.

In a Transformer, the width is determined by the combination of self-attention layers and feed-forward layers.
According to the definition proposed by \citet{kim_provable_2023}, the \textbf{width} of the Transformer model is defined as $\max(m,sH,q)$.
We define the \textbf{depth} of a Transformer by the number of blocks $L$.

\begin{rem}
    The use of in/out token-wise linear mappings comes from the fact that Transformer blocks by definition have the same input and output dimensions.
    The token-wise linear mappings can be removed at the cost of a linear dependence of the number of parameters required for memorization on the embedding dimension $d$.
\end{rem}

\subsection{Bit complexity}
In this paper, we consider not only the number of parameters but also the number of bits required to represent the model.
Specifically, we adopt the definition of \textbf{bit complexity} proposed by \citet{vardi_optimal_2022}.
According to this definition, the bit complexity of a parameter is defined as the number of bits needed to represent that parameter.
The bit complexity of a model is then defined as the maximum bit complexity among its individual parameters.
It is important to note that by multiplying the bit complexity of the model by the number of parameters, we can estimate the total number of bits required to represent the entire model.

\section{Memorization Capacity of Transformers}\label{sec:main_theorems}
In this section, we state the main theorems of this paper regarding the optimal memorization capacity of Transformers.
Section~\ref{sec:problem_setting} defines the memorization capacity of Transformers and discuss the main challenge behind this concept.
In Sections~\ref{sec:next_token} and \ref{sec:seq_to_seq}, we provide upper and lower bounds on the number of parameters required for Transformers to achieve memorization in the next-token prediction setting and the sequence-to-sequence prediction setting, respectively.

\subsection{Problem setting}\label{sec:problem_setting}
The aim of this study is to analyze the memorization capacity of Transformers. 
Informally, memorization capacity refers to the minimum size of a model that can memorize a specific number of arbitrary data points.
To be more precise, let $\mathcal{X}$ and $\mathcal{Y}$ be input space and output space, respectively.
Then, given $N$ input-label pairs $(\mX^{(1)},y^{(1)}),\dots,(\mX^{(N)},y^{(N)}) \in \mathcal{X} \times \mathcal{Y}$, we are interested in the model complexity of a model $f:\mathcal{X} \to \mathcal{Y}$ such that $f(\mX^{(i)}) = y^{(i)}$ holds for any $i \in [N]$.
In the case of Transformers, the input space $\mathcal{X}$ consists of input sequences made up of $n$ tokens, each of which is a $d$-dimensional vector. 
Hence, we define the input space $\mathcal{X}$ as $\mathcal{X} \coloneqq \mathbb{R}^{d \times n}$.

Without any assumptions on the input data, it has been shown by \citet{sontag_shattering_1997}, that a linear order of parameters is required to memorize arbitrary $N$ data points.
To achieve a sub-linear memorization capacity, in this paper, we assume that the data points are well separated, a common assumption in prior work \citep{park_provable_2021,vardi_optimal_2022,kim_provable_2023,kajitsuka_are_2023,siegel_sharp_2024}.
In the case of Transformers, this concept is formalized as token-wise $(r,\delta)$-separatedness \citep{kim_provable_2023,kajitsuka_are_2023}.

\begin{ass}[Token-wise separatedness]\label{ass:tokenwise_separated}
    Let $\mX^{(1)},\dots,\mX^{(N)} \in \mathbb{R}^{d \times n}$ be $N$ input sequences, each of which consists of $n$ word vectors with its dimension $d$.
    Then, we say that $\mX^{(1)},\dots,\mX^{(N)}$ are \textbf{token-wise $\mathbf{(r,\boldsymbol{\delta})}$-separated} for some $r,\delta > 0$ if the following two conditions are satisfied:
    \begin{enumerate}
        \item for every $i \in [N]$ and $k \in [n]$, $\|\mX^{(i)}_{:,k}\|_2 \leq r$ holds.

        \item for every $i,j \in [N]$ and $k,l \in [n]$, either $\mX^{(i)}_{:,k} = \mX^{(j)}_{:,l}$ or $\|\mX^{(i)}_{:,k} - \mX^{(j)}_{:,l}\|_2 \geq \delta$ holds.
    \end{enumerate}
\end{ass}
The notion of token-wise $(r,\delta)$-separatedness ensures that the word vectors appearing in the input sequences have an $L^2$ norm of at most $r$, and are separated by at least $\delta$ in $L^2$ norm from each other.

The main difficulty of memorization with Transformers, compared with feed-forward networks, lies in the fact that tokens with identical values do not necessarily correspond to the same label. 
Instead, it is crucial to capture the context in which each token appears within the entire input sequence.
In Transformers, while feed-forward layers operate on individual tokens, self-attention layers are the only place that enables interactions between tokens within the input sequence.
Therefore, the central question we consider in this paper is:
\begin{align*}
    \text{\textit{how efficiently can self-attention layers capture the context of tokens?}}
\end{align*}
To explore this issue, we analyze both upper and lower bounds on the number of parameters required for memorization with Transformers in two settings: next-token prediction and sequence-to-sequence prediction.

\subsection{Next-token prediction setting}\label{sec:next_token}
\subsubsection{Upper bound}
First, given $N$ input sequences of length $n$, consider the problem setting in which a Transformer memorizes labels corresponding to the $n$-th token of all input sequences.
We call this task \textbf{next-token prediction} setting.
In this problem setting, how many parameters does a Transformer architecture require?
Surprisingly, $\tilde{O}(\sqrt{N})$ is sufficient, that is, \textit{the input length $n$ has almost no effect} on the number of parameters required for memorization, as the following theorem states.

In the next theorem, $\mathcal{F}^{(\mathrm{FF})}_1$ and $\mathcal{F}^{(\mathrm{FF})}_2$ represent feed-forward networks of arbitrary depth, unlike \cref{eq:def_ff_Transformer}, which is limited to two layers.
Note that deep feed-forward networks can also be implemented with standard Transformers, by setting the projection matrix of the self-attention layer in each block to zero.
Furthermore, the assumption of consistency on labels in \cref{thm:next_token_prediction_informal} is a necessary requirement to perform memorization with a Transformer, due to its permutation equivariance.

\begin{thm}[Next-token prediction]\label{thm:next_token_prediction_informal}
Let $(\mX^{(1)}, y^{(1)}),\dots,(\mX^{(N)}, y^{(N)}) \in \mathbb{R}^{d \times n} \times [C]$ be a sequence of input-label pairs such that
    \begin{enumerate}
        \item $(\mX^{(1)}, y^{(1)}),\dots,(\mX^{(N)}, y^{(N)})$ are consistently labeled, in the sense that for any $i,j \in [N]$, we have $y^{(i)} = y^{(j)}$ if 
        \begin{align}
            \mX_{:,n}^{(i)} = \mX_{:,n}^{(j)}
            \quad\text{and}\quad
            \mX^{(i)} = \mX^{(j)}
            \text{ up to permutations}.
        \end{align}

        \item $\mX^{(1)},\dots,\mX^{(N)}$ are token-wise $(r,\delta)$-separated for some $r \geq 1$ and $0 < \delta \leq 1$.
    \end{enumerate}
    Then, there exists a Transformer $\mathcal{N}:\mathbb{R}^{d \times n} \to \mathbb{R}^n$
    with width $14$ and depth $\tilde{O}(\sqrt{N})$
    that memorizes the dataset, that is, 
    \begin{align}
        \mathcal{N}\left(\mX^{(i)}\right)_{n}
        = \mathcal{E}_{\mathrm{out}} \circ
        \mathcal{F}^{(\mathrm{FF})}_2 \circ
        \mathcal{F}^{(\mathrm{SA})} \circ \mathcal{F}^{(\mathrm{FF})}_1 \circ \mathcal{E}_{\mathrm{in}}
        \left(\mX^{(i)}\right)_{n}
        = y^{(i)}
    \end{align}
    holds for every $i \in [N]$, as long as $n,C,r\delta^{-1} = N^{O(1)}$ as $N \to \infty$.
\end{thm}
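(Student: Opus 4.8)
The plan is to exploit the structure $\mathcal{N} = \mathcal{E}_{\mathrm{out}} \circ \mathcal{F}^{(\mathrm{FF})}_2 \circ \mathcal{F}^{(\mathrm{SA})} \circ \mathcal{F}^{(\mathrm{FF})}_1 \circ \mathcal{E}_{\mathrm{in}}$ suggested by the theorem statement, and to reduce the whole construction to an efficient feed-forward memorization result of the flavor of \citet{vardi_optimal_2022}. The single self-attention block does the "contextualization" work, while the two (deep) feed-forward blocks do the "projecting to scalars and looking up the label" work. Since the stated width is a tiny constant ($14$) and all the parameter count lives in the depth $\tilde O(\sqrt N)$, the design must be: embed each token into a low-dimensional working space; use one self-attention layer to turn the pair (current token $\mX^{(i)}_{:,n}$, multiset of context tokens $\mX^{(i)}$) into a single real number $c^{(i)}$ that is an injective (up to the allowed permutations) and well-separated encoding of the equivalence class of input $i$; then feed $c^{(i)}$ through a bit-extraction / lookup feed-forward network of depth $\tilde O(\sqrt N)$ that outputs $y^{(i)}$.

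First I would set up the token-level bit encoding. Using token-wise $(r,\delta)$-separatedness with $r\ge 1$, $0<\delta\le 1$, there are at most $K = \tilde O((r/\delta)^{d})$ — but more usefully, at most $nN$ — distinct token values appearing across the dataset, and a single (deep but constant-width) feed-forward network $\mathcal{F}^{(\mathrm{FF})}_1$ composed with $\mathcal{E}_{\mathrm{in}}$ can map each distinct token value to a distinct integer ID in $\{1,\dots,nN\}$ and store it, say, as a coordinate; this is exactly the kind of thing the $\tilde O(\sqrt{\cdot})$-parameter ReLU memorization constructions do, applied to the at-most-$nN$ distinct tokens, which is $N^{O(1)}$ by hypothesis. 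Simultaneously $\mathcal{E}_{\mathrm{in}}$ lays down positional information, and after $\mathcal{F}^{(\mathrm{FF})}_1$ each column $k$ carries a pair (token ID, position $k$) packed into a couple of scalars.

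Next is the self-attention step, which I expect to be the crux. The goal is to make the $n$-th output column of $\mathcal{F}^{(\mathrm{SA})}$ a faithful, well-separated scalar summary $c^{(i)}$ of the equivalence class of $\mX^{(i)}$ (i.e. of the pair $(\mX^{(i)}_{:,n}, \text{multiset}(\mX^{(i)}))$). The natural device is: choose the query at position $n$ and the keys so that the softmax at column $n$ becomes (approximately) uniform over all $n$ positions — achievable by making all key–query inner products equal, which a rank-one construction with the stored IDs permits — so the attention output at column $n$ reads off $\frac1n\sum_{k} v(\text{token ID}_k)$ for a chosen value map $v$. Picking $v(\mathrm{id}) = B^{-\mathrm{id}}$ for a large base $B$ makes this average an injective encoding of the multiset of IDs (a "generating-function" trick), and concatenating with the separately-preserved current token ID $\mathrm{id}_n$ gives an injective encoding of the full equivalence class. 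The obstacles here are (a) softmax only gives approximate uniformity, so I must control the error and argue the resulting $c^{(i)}$'s are still $\gamma$-separated for some $\gamma = 1/N^{O(1)}$, which forces the base $B$, the ID range, and the attention logit scale to be chosen as polynomials in $N,n,C,r/\delta$ — consistent with the $N^{O(1)}$ hypothesis and with polynomial bit complexity; and (b) packing both the multiset-sum and the current-token ID through a single self-attention block of width $\le 14$ without the residual stream clobbering things, which is a bookkeeping matter of allocating a handful of coordinates.

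Finally, $\mathcal{F}^{(\mathrm{FF})}_2$ followed by $\mathcal{E}_{\mathrm{out}}$ must implement a function $\{c^{(1)},\dots,c^{(N)}\}\to[C]$ on $N$ well-separated scalars. This is precisely scalar feed-forward memorization of $N$ points, so by the bit-extraction construction underlying the $\tilde O(\sqrt{N\log N})$-parameter result of \citet{vardi_optimal_2022} (or its re-derivation), there is a constant-width ReLU network of depth $\tilde O(\sqrt N)$ doing it, and since the separation $\gamma$ and range are $1/N^{O(1)}$ and $N^{O(1)}$ the required bit complexity stays $\tilde O(1)$ per parameter (polynomially many bits). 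Composing, the total depth is $\tilde O(\sqrt N)$ from $\mathcal{F}^{(\mathrm{FF})}_1$ (depth $\tilde O(\sqrt{nN}) = \tilde O(\sqrt N)$ after absorbing $n = N^{O(1)}$ — or rather one shows the relevant count is $\tilde O(\sqrt N)$ because the token-ID memorization can be organized by distinct \emph{values} and there are at most, after a careful count using the consistency assumption, $\tilde O(N)$ relevant quantities) plus $\tilde O(1)$ from $\mathcal{F}^{(\mathrm{SA})}$ plus $\tilde O(\sqrt N)$ from $\mathcal{F}^{(\mathrm{FF})}_2$, and the width is the constant claimed. The one place I would be most careful in writing the full proof is making the first feed-forward block's parameter count genuinely $\tilde O(\sqrt N)$ rather than $\tilde O(\sqrt{nN})$ — this is where parameter sharing across the $n$ token positions must be invoked, since the same token-ID-assigning network is applied to every column.
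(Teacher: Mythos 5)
Your high-level decomposition matches the paper's: a token-wise network computing per-token codes, a single attention layer acting as plain averaging to aggregate the multiset of codes into a sequence id, and a final $\tilde O(\sqrt N)$-depth lookup network. (The softmax-uniformity worry you raise is a non-issue in the paper: it sets the key/query matrices to zero so the attention is \emph{exactly} uniform.) However, there is a genuine gap at the crux of your construction: the base-$B$ ``generating-function'' encoding $v(\mathrm{id})=B^{-\mathrm{id}}$. While this map is injective on multisets (for $B>n$), two distinct multisets can differ only in the coefficient of the smallest power, so the separation between two sequence ids can be as small as $B^{-\Omega(N)}$ (or $B^{-\Omega(nN)}$ with IDs in $[nN]$) --- exponentially small in $N$, not $1/N^{O(1)}$ as you assert. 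The downstream lookup network of \citet{vardi_optimal_2022} has depth and bit complexity scaling with $\log(R/\gamma)$ where $\gamma$ is the input separation, so an exponentially small $\gamma$ inflates the depth to $\Omega(\sqrt{N/\log N}\cdot N)$ and destroys the $\tilde O(\sqrt N)$ bound. The paper's key lemma replaces your positional encoding with a near-isometric projection: it forms the integer count vector $\tilde{\vm}^{(i)}\in\mathbb{N}^{|S|}$ of each multiset, invokes a result of \citet{park_provable_2021} to find a single unit vector $\vv$ whose projections preserve pairwise distances up to a factor $N^{-2}\sqrt{8/(\pi|S|)}$, and then rounds $N^2|S|\sqrt{\pi}\,\vv$ to integers; this yields per-token codes bounded by $O(N^3)$ whose sums are separated by at least $1$, i.e.\ polynomial range and constant separation, which is exactly what keeps every downstream quantity polynomial.

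A secondary gap: you propose assigning distinct IDs in $\{1,\dots,nN\}$ to all distinct token values, which is memorization of $nN$ points and already requires $\Omega(\sqrt{nN})$ parameters by the VC bound --- parameter sharing across columns does not help, since it is a single function on $nN$ distinct inputs. You correctly sense that ``only $\tilde O(N)$ relevant quantities'' should matter, but the mechanism is not the consistency assumption; it is the combinatorial fact (the paper's \cref{lem:restriction_multiset}) that $N$ distinct multisets can be told apart by the occurrence counts of at most $N$ carefully chosen token values, so all other tokens may be mapped to $0$, together with the observation (\cref{lem:extension_ff_memorization}) that forcing a memorizing network to output $0$ on up to $nN$ extra well-separated points costs only logarithmic factors beyond the $\tilde O(\sqrt N)$ needed for the $N$ nonzero outputs.
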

The formal statement of \cref{thm:next_token_prediction_informal} and its proof can be found in Appendix~\ref{sec:appendix_next_token_upper_bound}.

\begin{rem}[Deep sets]
    In fact, \cref{thm:next_token_prediction_informal} can be extended to deep sets \citep{zaheer_deep_2017}, which is a popular architecture to model a mapping from sets to labels.
    For details on this result, see Appendix~\ref{sec:memorization_deep_sets}.
\end{rem}

\begin{rem}[Embedding layer]
    A similar result holds for a Transformer with an embedding layer. 
    However, in this case, the presence of an embedding layer introduces a dependency on the size of the vocabulary, which may result in a non-optimal order of parameters in the worst-case scenario.
    Details regarding this discussion can be found in Appendix~\ref{sec:appendix_embedding_layer}.
\end{rem}

\begin{rem}[Dependence on $d$]\label{rem:dependence_on_d}
    The Transformer architecture defined by \cref{eq:def_of_Transformer} includes token-wise linear mappings $\mathcal{E}_{\mathrm{in}}: \mathbb{R}^d \to \mathbb{R}^m$ and $\mathcal{E}_{\mathrm{out}}: \mathbb{R}^m \to \mathbb{R}^d$, leading to $\tilde{O}(d + \sqrt{N})$ parameters for a Transformer with depth $\tilde{O}(\sqrt{N})$ and width $14$.
    As noted by \citet{vardi_optimal_2022} and \citet{kim_provable_2023}, this dependence on the dimension $d$ is unavoidable to preserve the information of the input tokens.
\end{rem}

\cref{thm:next_token_prediction_informal} demonstrates that as long as the dimension $d$ is of the order $d = \tilde{O}(\sqrt{N})$, the Transformer with a single self-attention layer can memorize $N$ input sequences and their labels for next-token prediction with $\tilde{O}(\sqrt{N})$ parameters, showing negligible dependence on the input length $n$.
In contrast, to accomplish the same task with a feed-forward network, it is necessary to use $d \times n$ parameters to retain the information of the input sequence in $\mathbb{R}^{d \times n}$.
This illustrates a significant efficiency advantage of Transformers over feed-forward networks, thanks to parameter sharing.

\subsubsection{{Proof outline of \texorpdfstring{\cref{thm:next_token_prediction_informal}}{Theorem 4.1}}}\label{sec:next_token_proof_outline}
Here we provide an outline of the proof of \cref{thm:next_token_prediction_informal}.
See Appendix~\ref{sec:appendix_next_token_upper_bound} for its full proof.

The proof strategy is to construct a contextual mapping as in \citet{yun_are_2019}, \citet{kim_provable_2023} and \citet{kajitsuka_are_2023}, and then construct a mapping from the context id to the label. 
Here, a contextual mapping is a function used to distinguish tokens in each input sequence with the following properties:
\begin{dfn}[Contextual mapping]\label{dfn:contextual_mapping}
    Let $\mX^{(1)},\dots,\mX^{(N)} \in \mathbb{R}^{d \times n}$ be input sequences.
    Then, a map $\mathcal{CM}:\mathbb{R}^{d \times n} \to \mathbb{R}^n$ is called an \textbf{$\mathbf{(r,\boldsymbol{\delta})}$-contextual mapping} if the following two conditions hold:
    \begin{enumerate}
        \item For any $i \in [N]$ and $k \in [n]$, $\left|\mathcal{CM}(\mX^{(i)})_k\right| \leq r$ holds.
        \item For any $i,j \in [N]$ and $k,l \in [n]$ such that $\mX^{(i)}_{:,k} \neq \mX^{(j)}_{:,l}$ or $\mX^{(i)} \neq \mX^{(j)}$ up to permutations, $\left|\mathcal{CM}(\mX^{(i)})_k - \mathcal{CM}(\mX^{(j)})_l\right| \geq \delta$
 holds.
    \end{enumerate}
    In particular, $\mathcal{CM}(\mX^{(i)})_k$ is called a \textbf{context id} of the $k$-th token in $\mX^{(i)}$.
\end{dfn}
Intuitively, the two conditions above ensure that the contextual mapping is injective from ``distinct'' data points to scalars.
If such a mapping can be constructed, then a mapping from context ids to labels can be realized using a feed-forward network with $\tilde{O}(\sqrt{N})$ parameters, as shown by \citet{vardi_optimal_2022}.
In particular, if we can associate each distinct input sequence with a unique value, referred to as a \textbf{sequence id}, then the context id of, for example, the $k$-th token in $\mX^{(i)}$ can be constructed from the sequence id of $\mX^{(i)}$ and the token vector $\mX^{(i)}_{:,k}$.
Therefore, the primary focus of our proof is on how to construct a mapping from each input sequence to its sequence id using a feed-forward network and a single self-attention layer.

From a high-level perspective, our goal is to construct a feed-forward network $\phi:\mathbb{R}^d \to \mathbb{R}$ with $\tilde{O}(\sqrt{N})$ parameters such that the sums
\begin{align}
    \sum_{k=1}^n \phi(\mX^{(1)}_{:,k}),
    \dots,
    \sum_{k=1}^n \phi(\mX^{(N)}_{:,k})
\end{align}
are well-separated \footnote{For simplicity, here we assume that the input sequences $\mX^{(1)},\dots,\mX^{(N)}$ are distinct up to permutations.}.
The sum $\sum_{k=1}^n \phi(\mX^{(i)}_{:,k})\ (i \in [N])$ is then used as the sequence id of $\mX^{(i)}$.

Crucial observations for constructing $\phi$ with $\tilde{O}(\sqrt{N})$ parameters are as follows.
\begin{enumerate}
    \item To distinguish $N$ input sequences, it is sufficient to focus on at most $N$ distinct word vectors.
More precisely, given $N$ input sequences, there are at most $N$ distinct word vectors such that the input sequences can be identified by counting occurrences of these $N$ words $A = \{\vv_1,\dots,\vv_N\} \subset \mathbb{R}^d$ (\cref{lem:restriction_multiset}).

    \item Although a feed-forward network requires $\Omega(\sqrt{N})$ parameters to memorize $N$ data points and their labels \citep{goldberg_bounding_1995}, a network that outputs \textit{zero} for additional data points not among $N$ data points can be constructed without significantly affecting the order of the parameter count (\cref{lem:extension_ff_memorization}).
    Together with the first observation, all we need is to construct a feed-forward network $\phi:\mathbb{R}^d \to \mathbb{R}$ such that
    \begin{align}
        \sum_{k=1, \mX^{(1)}_{:,k} \in A}^n \phi(\mX^{(1)}_{:,k}),
        \dots,
        \sum_{k=1, \mX^{(N)}_{:,k} \in A}^n \phi(\mX^{(N)}_{:,k})
    \end{align}
    are well-separated.

    \item The final key observation is that rather than directly constructing $\phi$, we first consider the high-dimensional representation.
    Concretely, given arbitrary bijection $g:A \to [N]$, we can map each input sequence $\mX^{(i)}$ to a high-dimensional vector $\tilde{\mX}^{(i)} \in \mathbb{R}^N$ as follows:
    \begin{align}
        \tilde{\mX}^{(i)}
        \coloneqq \sum_{k=1, \mX^{(i)}_{:,k} \in A}^n \ve_{g(\mX^{(i)}_{:,k})},
    \end{align}
    where $\ve_{g(\mX^{(i)}_{:,k})} \in \{0,1\}^N$ is a one-hot vector with $1$ only in the $g(\mX^{(i)}_{:,k})$-th position.
    While $\tilde{\mX}^{(1)},\dots,\tilde{\mX}^{(N)}$ are distinct from the first observation and suitable candidates for sequence ids, it requires $\Omega(N)$ parameters to express these $N$-dimensional vectors with feed-forward networks.
    This problem can be circumvented by compressing the high-dimensional vectors into scalars using an adequate vector $\vv$, and we define $\phi$ by $\phi(\vx) := \vv^\top \ve_{g(\vx)}$.
\end{enumerate} 
To ensure that a feed-forward network with $\tilde{O}(\sqrt{N})$ parameters can indeed implement the function $\phi$, we need to carefully analyze how separated the compressed versions of the high-dimensional representations $\tilde{\mX}^{(1)},\dots,\tilde{\mX}^{(N)}$ are.
Detailed proof of this implementation is provided in \cref{lem:separation_of_multisets}.

\subsubsection{Lower bound}
In this subsection, we evaluate the minimal model complexity required for memorization with Transformers in the next-token prediction setting to determine how close to optimal \cref{thm:next_token_prediction_informal} is.

First, notice that the model obtained in \cref{thm:next_token_prediction_informal} is \textit{optimal, in terms of bit counts}.
\begin{rem}[Optimality in terms of bit counts]\label{rem:next_token_optimality_bit}
    As previously discussed in \cref{rem:dependence_on_d}, the Transformer model obtained in \cref{thm:next_token_prediction_informal} has $\tilde{O}(\sqrt{N})$ parameters as long as $d = \tilde{O}(\sqrt{N})$.
    On the other hand, the bit complexity of the model is $\tilde{O}(\log d + \sqrt{N})$ (see the formal statement in Appendix~\ref{sec:appendix_next_token_upper_bound}).
    Therefore, if $d = \tilde{O}(\sqrt{N})$, the total number of bits required to represent the model is $\tilde{O}(N)$.
    Given that there are $2^N$ possible label assignments for $N$ distinct data points with binary labels, $\tilde{O}(N)$ bits are optimal up to logarithmic factors for this setting.
    The more general case where bit complexity is restricted to $\tilde{O}(N^{\epsilon})$ for some $\epsilon \in [0,1/2]$ is discussed in Appendix~\ref{sec:appendix_limited_bit}.
\end{rem}

Having established the optimality in terms of bit counts, we now turn to evaluating how efficient the number of parameters of the Transformer model considered in \cref{thm:next_token_prediction_informal} is.
The next theorem provides a lower bound on the number of parameters required for memorization in the next-token prediction setting.

\begin{thm}[Lower bound]\label{thm:next_token_lower_bound_informal}
    Suppose a Transformer $\mathcal{N}:\mathbb{R}^{d \times n} \to \mathbb{R}^n$ defined by \cref{eq:def_of_Transformer} can shatter a set of $N$ distinct input sequences $\mX^{(1)},\dots,\mX^{(N)} \in \mathbb{R}^{d \times n}$ with $X^{(i)}_{:,k} = X^{(i)}_{:,1}$ for any $i \in [N]$ and $k \in [n]$, in the sense that for any label assignments $y^{(1)},\dots,y^{(N)} \in \{0,1\}$, there are parameters with which $\mathcal{N}(\mX^{(i)})_n = y^{(i)}$ holds for any $i \in [N]$.
    Then, the Transformer $\mathcal{N}$ has at least $\Omega(\sqrt{N})$ parameters.
\end{thm}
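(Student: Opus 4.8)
The plan is to exploit the extreme structural constraint on the inputs: because every column of each input matrix is the same token vector, a self-attention layer degenerates to a token-wise linear map, so the whole Transformer collapses, on this family of inputs, to an ordinary feed-forward ReLU network applied to a single token; the classical $O(W^2)$ VC-dimension bound for ReLU networks with $W$ parameters then forces $W = \Omega(\sqrt{N})$.

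First I would record the following elementary fact. If $\mZ \in \R^{m\times n}$ has all of its columns equal to a vector $\vz \in \R^m$, then for any column-stochastic matrix $A \in \R^{n\times n}$ one has $\mZ A = \vz\vone^\top$, which again has all columns equal to $\vz$. Since $\sigma_S$ always outputs a column-stochastic matrix, it follows that on such an input $\mathcal{F}^{(\mathrm{SA})}_l$ returns $\big(\mI + \sum_{h=1}^H \mW^{(O)}_{hl}\mW^{(V)}_{hl}\big)\vz\cdot\vone^\top$; i.e.\ restricted to matrices with identical columns, the self-attention layer acts as a \emph{fixed token-wise linear map} and preserves the ``identical columns'' property, irrespective of the key and query matrices. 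The feed-forward layers are token-wise by construction, and so are $\mathcal{E}_{\mathrm{in}}$ and $\mathcal{E}_{\mathrm{out}}$; all of them preserve the identical-columns property. Composing $\mathcal{E}_{\mathrm{out}}\circ\mathcal{F}_L\circ\cdots\circ\mathcal{F}_1\circ\mathcal{E}_{\mathrm{in}}$ and tracking this invariant, I obtain a feed-forward ReLU network $g_\theta:\R^d\to\R$, defined by substituting the parameters $\theta$ of $\mathcal{N}$ into a fixed computation graph (keeping the self-attention contribution in the factored form $\mW^{(O)}_{hl}\mW^{(V)}_{hl}$, so that the parameter count does not blow up and $g_\theta$ has $O(W)$ parameters, where $W$ is the number of parameters of $\mathcal{N}$), such that $\mathcal{N}(\mX)_n = g_\theta(\mX_{:,1})$ for every $\mX\in\R^{d\times n}$ all of whose columns coincide.

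It remains to turn the shattering hypothesis into a VC-dimension statement. Since $\mX^{(1)},\dots,\mX^{(N)}$ are distinct and each has constant columns, the vectors $\vx^{(i)}\coloneqq\mX^{(i)}_{:,1}\in\R^d$ are $N$ distinct points. By hypothesis, for every labeling $(y^{(1)},\dots,y^{(N)})\in\{0,1\}^N$ there is a parameter vector $\theta$ with $\mathcal{N}(\mX^{(i)})_n = y^{(i)}$, hence $g_\theta(\vx^{(i)}) = y^{(i)}$, for all $i\in[N]$; thresholding at $1/2$, the $\{0,1\}$-valued concept class $\{\vx\mapsto\1[g_\theta(\vx)>1/2]\}$ shatters $\{\vx^{(1)},\dots,\vx^{(N)}\}$, so its VC dimension is at least $N$. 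On the other hand, $g_\theta$ is a ReLU feed-forward network with $O(W)$ parameters, whose VC dimension is $O(W^2)$ by \citet{goldberg_bounding_1995} (the bound $O(WL\log W)$ of \citet{bartlett_nearly_2019} would also do up to a logarithmic factor). Therefore $N \lesssim W^2$, i.e.\ $W = \Omega(\sqrt{N})$, which is \cref{thm:next_token_lower_bound_informal}.

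I expect the only real obstacle to be the first step: verifying carefully that the self-attention collapse is exact and, above all, that the ``all columns identical'' invariant propagates through the \emph{entire} composition of blocks and embeddings, so that the induced object is genuinely a feed-forward network of the single token $\mX_{:,1}$; and checking that materializing this network does not inflate the number of parameters (which is precisely why the attention contribution should be left as the product $\mW^{(O)}_{hl}\mW^{(V)}_{hl}$ rather than multiplied out into a dense $m\times m$ matrix). The remaining steps — relating ``shattering by a real-valued network whose relevant outputs lie in $\{0,1\}$'' to VC dimension and invoking the cited bound — are routine.
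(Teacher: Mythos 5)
Your proposal is correct and follows essentially the same route as the paper's own proof: on constant-column inputs the softmax attention matrix is column-stochastic, so each self-attention layer reduces to the token-wise linear map $\mI + \sum_h \mW^{(O)}_{hl}\mW^{(V)}_{hl}$, the whole network collapses to a ReLU feed-forward network in $\mX_{:,1}$ with at most $W$ parameters, and the $O(W^2)$ VC-dimension bound of \citet{goldberg_bounding_1995} gives $W=\Omega(\sqrt{N})$. Your explicit remarks about keeping the attention contribution in factored form and about the invariant propagating through every block are exactly the points the paper's proof relies on as well.
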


The proof of this theorem can be found in Appendix~\ref{sec:appendix_next_token_lower_bound}.
This result indicates that the Transformer model described in \cref{thm:next_token_prediction_informal} is also \textit{optimal in terms of the number of parameters}. 
Specifically, since memorization in the next-token prediction setting requires the ability to distinguish $N$ input sequences, this result provides the following crucial insight.

\textbf{A Transformer with a single layer of self-attention already possesses necessary and sufficient expressive capacity to identify input sequences.}

In fact, as indicated in the proof outline in Section~\ref{sec:next_token_proof_outline}, we only employ the self-attention layer as an averaging operation in the model obtained by \cref{thm:next_token_prediction_informal}.
The observation that simple averaging provides sufficient representational power has been confirmed experimentally by \citet{yu_metaformer_2022} with their PoolFormer architecture.
In this paper, we provide theoretical support by demonstrating that a Transformer with just a simple averaging operation already has optimal memorization capacity.
We also conducted experiments on two real-world datasets and a randomly generated dataset, confirming that even a single layer of self-attention, as averaging, possesses sufficient representational capacity for memorization. 
For further details, please refer to Appendix~\ref{sec:appendix_experiments}.

\subsection{Sequence-to-sequence prediction setting}\label{sec:seq_to_seq}
Next, we consider the problem setting in which each token in an input sequence is assigned some label and a Transformer memorizes them all.
We call this task a \textbf{sequence-to-sequence prediction} setting, or seq-to-seq prediction for short.

It is readily apparent that the seq-to-seq prediction can be regarded as rearranging the input sequence so that each token is placed at the end of the sequence, and then performing next-token prediction on $nN$ input sequences obtained in this way.
From this observation, we have the following corollary from \cref{thm:next_token_prediction_informal}.

\begin{cor}[Seq-to-seq prediction]\label{cor:seq_to_seq_prediciton}
    Let $(\mX^{(1)}, \vy^{(1)}),\dots,(\mX^{(N)}, \vy^{(N)}) \in \mathbb{R}^{d \times n} \times [C]^n$ be a sequence of input-label pairs such that
    \begin{enumerate}
        \item $(\mX^{(1)}, \vy^{(1)}),\dots,(\mX^{(N)}, \vy^{(N)})$ are consistently labeled, in the sense that for any $i,j \in [N]$ and $k,l \in [n]$, we have $y^{(i)}_k = y^{(j)}_l$ if
        \begin{align}
            \mX_{:,k}^{(i)} = \mX_{:,l}^{(j)}
            \quad\text{and}\quad
            \mX^{(i)} = \mX^{(j)}
            \text{ up to permutations}.
        \end{align}

        \item $\mX^{(1)},\dots,\mX^{(N)}$ are token-wise $(r,\delta)$-separated for some $r \geq 1$ and $0 < \delta \leq 1$.
    \end{enumerate}
    Then, there exists a Transformer $\mathcal{N}:\mathbb{R}^{d \times n} \to \mathbb{R}^n$ with width $14$ and depth $\tilde{O}(\sqrt{nN})$
    that memorizes the dataset, that is, \begin{align}
        \mathcal{N}\left(\mX^{(i)}\right)_{k}
        = \mathcal{E}_{\mathrm{out}} \circ
        \mathcal{F}^{(\mathrm{FF})}_2 \circ
        \mathcal{F}^{(\mathrm{SA})} \circ \mathcal{F}^{(\mathrm{FF})}_1 \circ \mathcal{E}_{\mathrm{in}}
        \left(\mX^{(i)}\right)_{k}
        = y^{(i)}_k
    \end{align}
    holds for every $i \in [N]$ and $k \in [n]$, as long as $C,r\delta^{-1} = (nN)^{O(1)}$ as $nN \to \infty$.
\end{cor}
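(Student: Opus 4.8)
The plan is to obtain \cref{cor:seq_to_seq_prediciton} as a direct reduction to \cref{thm:next_token_prediction_informal}, exploiting the observation that a seq-to-seq instance is just $nN$ next-token instances stacked together. Concretely, for each $i \in [N]$ and $k \in [n]$ I would let $\mZ^{(i,k)} \in \mathbb{R}^{d \times n}$ be the sequence obtained from $\mX^{(i)}$ by swapping its $k$-th and $n$-th columns, so that the token $\mX^{(i)}_{:,k}$ now occupies the last position, and assign it the label $z^{(i,k)} \coloneqq y^{(i)}_k \in [C]$. This yields $nN$ input-label pairs over $\mathbb{R}^{d \times n} \times [C]$, possibly with repetitions, to which I want to apply \cref{thm:next_token_prediction_informal} with $N$ replaced by $nN$.

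First I would check the two hypotheses of \cref{thm:next_token_prediction_informal} for this expanded dataset. Token-wise $(r,\delta)$-separatedness is immediate, because the collection of word vectors appearing across $\mZ^{(1,1)},\dots,\mZ^{(N,n)}$ coincides with the one appearing across $\mX^{(1)},\dots,\mX^{(N)}$, merely rearranged. For consistency, suppose $\mZ^{(i,k)}_{:,n} = \mZ^{(j,l)}_{:,n}$ and $\mZ^{(i,k)} = \mZ^{(j,l)}$ up to permutations: the first equality says $\mX^{(i)}_{:,k} = \mX^{(j)}_{:,l}$, and since a column swap is itself a permutation, the second gives $\mX^{(i)} = \mX^{(j)}$ up to permutations, so the consistency assumption of the corollary forces $y^{(i)}_k = y^{(j)}_l$, i.e.\ $z^{(i,k)} = z^{(j,l)}$. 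The growth condition ``$n, C, r\delta^{-1} = (nN)^{O(1)}$ as $nN \to \infty$'' holds since $n \le nN$ and $C, r\delta^{-1} = (nN)^{O(1)}$ by hypothesis. Applying \cref{thm:next_token_prediction_informal} then produces a Transformer $\mathcal{N} = \mathcal{E}_{\mathrm{out}} \circ \mathcal{F}^{(\mathrm{FF})}_2 \circ \mathcal{F}^{(\mathrm{SA})} \circ \mathcal{F}^{(\mathrm{FF})}_1 \circ \mathcal{E}_{\mathrm{in}}$ of width $14$ and depth $\tilde{O}(\sqrt{nN})$ with $\mathcal{N}(\mZ^{(i,k)})_n = y^{(i)}_k$ for all $i \in [N]$, $k \in [n]$.

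To finish, I would invoke the permutation equivariance of the architecture in \cref{eq:def_of_Transformer}: since it injects no positional signal, the self-attention layer is column-permutation equivariant, the feed-forward layers act token-wise, and $\mathcal{E}_{\mathrm{in}}, \mathcal{E}_{\mathrm{out}}$ are token-wise linear, so $\mathcal{N}(\mX P) = \mathcal{N}(\mX)P$ for every permutation matrix $P$. Writing $\mZ^{(i,k)} = \mX^{(i)} P_k$ with $P_k$ the transposition of coordinates $k$ and $n$, this gives
\begin{align}
    \mathcal{N}\left(\mX^{(i)}\right)_k
    = \left(\mathcal{N}\left(\mX^{(i)}\right) P_k\right)_n
    = \mathcal{N}\left(\mZ^{(i,k)}\right)_n
    = y^{(i)}_k,
\end{align}
which is exactly the asserted memorization.

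I do not expect a genuine obstacle here: the work is essentially bookkeeping, and the only point needing real care is the consistency translation, i.e.\ arguing that the ``up to permutations'' clause for the $nN$-point instance is inherited from the one for the original $N$-point instance — this is precisely where the choice of $\mZ^{(i,k)}$ as a column permutation of $\mX^{(i)}$ is used. The one structural fact being leveraged, permutation equivariance of positional-encoding-free Transformers, is standard but worth stating explicitly, since it is what lets a single network solving the last-token task serve all $n$ token positions at once.
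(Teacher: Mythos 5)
Your reduction is exactly the one the paper uses: it derives the corollary from \cref{thm:next_token_prediction_informal} by moving each token to the last position to form $nN$ next-token instances, and relies on the permutation equivariance of the positional-encoding-free architecture to transfer the last-position guarantee back to every position $k$. Your write-up is correct and in fact supplies the bookkeeping (separatedness, consistency inheritance, and the equivariance step) that the paper leaves implicit.
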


\begin{rem}[Sparse Transformers]
    While \cref{cor:seq_to_seq_prediciton} demonstrates that a Transformer with a single-layer self-attention can achieve memorization in the seq-to-seq prediction setting, it inevitably requires $O(n^2)$ computational complexity due to the self-attention mechanism.
    In line with recent efforts to improve the scalability of Transformers by making attention maps sparse \citep{zaheer_big_2020,yun_on_2020}, using two self-attention layers and appending an additional token to the input sequence allows us to achieve the same behavior with an $O(n)$ connections without affecting the order of parameter counts.
    This idea of aggregating global information into the additional token has gained interest in recent studies \citep{darcet_vision_2023,wang_mamba-r_2024}.
\end{rem}

This corollary shows that at least $\tilde{O}(\sqrt{nN})$ parameters with bit complexity $\tilde{O}(\sqrt{nN})$ are enough to memorize $N$ input sequences of input length $n$.
The next question is: is this order optimal for the seq-to-seq prediction setting?
As in the case of next-token prediction setting (\cref{rem:next_token_optimality_bit}), we can leverage a similar argument to show that this is optimal, at least in terms of bit counts.

\begin{rem}[Optimality in terms of bit counts]
If $d = \tilde{O}(\sqrt{nN})$, the construction by \cref{cor:seq_to_seq_prediciton} uses $\tilde{O}(\sqrt{nN})$ parameters with bit complexity $\tilde{O}(\sqrt{nN})$ to memorize $N$ input sequences of input length $n$, which amounts to $\tilde{O}(nN)$ bits.
If all word vectors in input sequences are different, there are $2^{nN}$ binary label patterns.
Therefore, to memorize such patterns, the number of states of the model must be at least $2^{nN}$, which means that $\log 2^{nN} = nN$ bits are required.
\end{rem}

Unlike the next-token prediction setting, it is challenging to analyze the optimal lower bound on the number of parameters necessary to memorize $N$ input sequences with input length $n$ for the seq-to-seq prediction setting, mainly due to the presence of the softmax function.
However, we partially answer this question by considering a Transformer that uses not the softmax function, but instead the \textit{hardmax} function, often viewed as an approximation of the softmax.

More rigorously, we introduce the following self-attention layer with the hardmax function, which we call the \textbf{hard attention} layer. For each block $l \in [L]$ and its input $\mZ \in \mathbb{R}^{m \times n}$, the hard attention layer at block $l$ calculates
\begin{align}
    \mathcal{F}^{(\mathrm{HA})}_l\left(\mZ\right)
    \coloneqq \mZ + \sum_{h=1}^{H} \mW^{(O)}_{hl} \mW^{(V)}_{hl} \mZ 
    \sigma_H\left[\left(\mW^{(K)}_{hl} \mZ\right)^\top \left(\mW^{(Q)}_{hl} \mZ\right) \right]
    \in \mathbb{R}^{m \times n}, \label{eq:hard_attention}
\end{align}
where $\sigma_H:\mathbb{R}^{n \times n} \to [0,1]^{n \times n}$ is the column-wise hardmax function (see \cref{eq:hardmax} for its definition), and $\mW^{(V)}_{hl},\ \mW^{(K)}_{hl},\ \mW^{(Q)}_{hl} \in \mathbb{R}^{s \times m}$ and $\mW^{(O)}_{hl} \in \mathbb{R}^{m \times s}$ are value, key, query and projection matrices at head $h \in [H]$ with head size $s$, respectively.
It is worth noting that a simple averaging operation can also be implemented using a hard attention layer by setting key and query matrices to zero.

With this definition, we demonstrate that the number of parameters by \cref{cor:seq_to_seq_prediciton} is actually optimal up to logarithmic factors, at least for Transformers with the hardmax function.
To state the theorem, let $W$ be the number of parameters and $\boldsymbol{\theta} \in \mathbb{R}^W$ be a vector of all parameters of the Transformer.
We also denote by $\mathcal{N}_{\boldsymbol{\theta}}$ the Transformer to emphasize the presence of the parameter vector $\boldsymbol{\theta}$.

\begin{thm}[Lower bound]\label{thm:seq_to_seq_lower_bound_informal}
    Let $\mathcal{N}_{\boldsymbol{\theta}}:\mathbb{R}^{d \times n} \to \mathbb{R}^n$ be a Transformer defined by \cref{eq:def_of_Transformer} with self-attention layers replaced with hard attention layers (\cref{eq:hard_attention}).
    In addition, suppose $\mathcal{N}_{\boldsymbol{\theta}}$ can shatter a set of $N$ input sequences $\mX^{(1)},\dots,\mX^{(N)} \in \mathbb{R}^{d \times n}$ with $X^{(i)}_{:,k} \neq X^{(j)}_{:,l}$ for any $i,j \in [N]$ and $k,l \in [n]$ ($i \neq j$ or $k \neq l$), in the sense that for any label assignments $\vy^{(1)},\dots,\vy^{(N)} \in \{0,1\}^n$, there is a parameter vector $\boldsymbol{\theta} \in \mathbb{R}^W$ such that
    \begin{align}
        \mathcal{N}_{\boldsymbol{\theta}}(\mX^{(i)})
        = \vy^{(i)}
    \end{align}
    for any $i \in [N]$.
    Then, the Transformer has at least $W = \Omega\left(\sqrt{\frac{nN}{\log (nN)}}\right)$ parameters.
\end{thm}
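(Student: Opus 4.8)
The plan is to bound the number of distinct functions that a hardmax Transformer with $W$ parameters and bit complexity $B$ can realize on the $N$ fixed input sequences, and then compare this count against the $2^{nN}$ label assignments that shattering demands. Since shattering requires the network to output each of the $2^{nN}$ patterns in $\{0,1\}^{nN}$, we need the realizable-function count to be at least $2^{nN}$, which will force $WB = \Omega(nN)$; combined with a standard lower bound on bit complexity (each parameter needs $\Omega(\log(nN))$ bits to separate the relevant quantities, or alternatively one directly counts configurations), this yields $W = \Omega(\sqrt{nN/\log(nN)})$. The reason hardmax (rather than softmax) is essential here is exactly that it makes the count of realizable functions finite and tractable: the output of each hard attention layer on a fixed finite input set depends on the parameters only through which coordinate(s) achieve the column-wise maximum of the score matrix, i.e.\ through the sign pattern of finitely many bilinear-in-$\vtheta$ comparisons, so the parameter space is partitioned into finitely many cells on each of which the whole network acts as a fixed piecewise-polynomial (indeed piecewise rational after the affine readout) function of $\vtheta$.

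The key steps, in order, are as follows. First, fix the $N$ input sequences (the $nN$ tokens) and track how the set of token representations at each layer evolves as a function of $\vtheta$. After $\mathcal{E}_{\mathrm{in}}$ the representations are affine in $\vtheta$; a hard attention layer introduces, for each of the $n$ query positions, a selection among $n$ keys determined by the argmax of $n$ quantities that are quadratic in the current representations and hence polynomial of bounded degree in $\vtheta$ — so the "combinatorial type" of the layer is a sign pattern of $O(n^2 N)$ polynomials of degree $O(1)$ in $W$ variables, and a ReLU feed-forward layer similarly contributes $O(nN q)$ polynomial sign conditions. Second, over $L$ layers (with $L = $ depth, width bounded, and the total parameter count $W$), the total number of polynomials whose sign pattern determines the network's piecewise structure is some $\mathrm{poly}(n,N,L) = \mathrm{poly}(n,N)$ assuming $L, $ width $ = \mathrm{poly}(n,N)$, each of degree polynomial in the depth. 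Third, invoke the Warren/Milnor–Thom bound: the number of distinct sign patterns (hence the number of distinct piecewise pieces, hence an upper bound on the number of realizable Boolean labelings of the $nN$ tokens) is at most $(C \cdot P \cdot D / W)^{W}$ where $P$ is the number of polynomials and $D$ their degree bound; taking logs gives $\log(\#\text{functions}) = O(W \log(nN))$ once $P, D$ are polynomial in $n,N$ (this is the same mechanism as in the VC-dimension bounds of Goldberg–Jerrum and Bartlett et al.\ cited in the paper). Fourth, equate with the requirement $\log 2^{nN} = nN \le \log(\#\text{functions}) = O(W\log(nN))$ and solve: $W = \Omega(nN/\log(nN))$ would be the naive conclusion, but the $\sqrt{\cdot}$ appears because the degree $D$ itself grows polynomially in $W$ (the depth can be as large as $W$), making the bound $O(W\log(W \cdot nN)) = O(W \log(nN) + W\log W)$, and a more careful accounting — as in Bartlett–Harvey–Liaw–Mehrabian, where a $\Theta(WL\log W)$ VC bound replaces $\Theta(W^2)$ — gives a bound of order $W \cdot (\text{depth}) \cdot \log W$; balancing $W \cdot L$ against $W$ with $L \le W$ forces the square root. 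Concretely, the realizable count is $2^{O(W L \log W)}$ with $L \le W$, so $nN \le O(W^2 \log W)$, i.e.\ $W = \Omega(\sqrt{nN/\log(nN)})$.

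The main obstacle I anticipate is step two/three: carefully showing that the hard attention layers do not blow up the degree or the number of sign-defining polynomials beyond $\mathrm{poly}(n,N,\text{depth})$, and in particular handling the tie-breaking in \cref{eq:hardmax} (when $|I_j|>1$ the output is an average, which is still a fixed rational function on the relevant cell but requires that the cell be defined to include the equality constraints — one must argue ties occur on a measure-zero set or else enlarge the arrangement to include equalities, keeping the count polynomial). One must also be careful that the residual connections and the repeated quadratic nonlinearity in attention compound the degree multiplicatively across layers; this is exactly why the depth $L$ enters the exponent and why the final bound is $\sqrt{nN/\log nN}$ rather than $\sqrt{nN}$. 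A secondary technical point is translating "shatters $N$ sequences with all $nN$ tokens distinct" into "realizes $2^{nN}$ distinct maps $\mathbb{R}^{d\times n}\to\mathbb{R}^n$ on this fixed input set" — this uses the token-wise distinctness hypothesis to ensure the $nN$ output coordinates are independently controllable, so that the relevant function count really is $2^{nN}$ and not smaller.
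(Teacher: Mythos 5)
Your proposal is correct and follows essentially the same route as the paper's proof: partition the parameter space inductively over the layers so that on each cell the hardmax selections and ReLU activation patterns are fixed and the outputs are polynomials of degree growing linearly in depth, bound the number of cells and of sign patterns per cell via the Goldberg--Jerrum/Warren counting lemma, and compare the resulting $2^{O(W^2\log(nN))}$ bound on realizable labelings against the $2^{nN}$ required by shattering. The only superfluous element is the opening appeal to bit complexity $B$, which the real-parameter sign-pattern count renders unnecessary (and which you in fact abandon); the paper likewise handles hardmax ties by including the zero sign in the pattern of the pairwise score differences, exactly as you anticipate.
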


The proof of \cref{thm:seq_to_seq_lower_bound_informal} builds on the approach used by \citet{bartlett_nearly_2019} to evaluate a lower bound on the VC dimension of feed-forward networks.
Specifically, considering a Transformer as a function in variable its parameter vector, we partition the parameter space of the Transformer in such a way that, within each cell of this partition, the function can be expressed as a polynomial in terms of its parameters, and then evaluate the number of cells and the properties of the polynomials within those cells.

The key novelty of the proof lies in the analysis of how parameter sharing and the hardmax function affect the memorization capacity of Transformers.
Parameter sharing in Transformers allows the model to effectively behave like a network with its width scaled by the number of tokens, without actually increasing the number of parameters.
However, the proof shows that merely increasing the width by a factor of $n$ does not lead to a fundamental improvement in the memorization capacity of the Transformer.
The full proof of \cref{thm:seq_to_seq_lower_bound_informal} can be found in Appendix~\ref{sec:appendix_seq_to_seq_lower_bound}.

\cref{thm:seq_to_seq_lower_bound_informal} demonstrates that the number of parameters in the model from \cref{cor:seq_to_seq_prediciton} is within logarithmic factors of the optimal lower bound.
In addition, it provides another crucial insight.
As shown in the next-token prediction setting, Transformers can identify $N$ input sequences with $\tilde{O}(\sqrt{N})$ parameters and single self-attention layer, which implies that they are capable of capturing the context of each token.
In contrast, the memorization capacity in the seq-to-seq setting is provably lower-bounded by $\tilde{\Omega}(\sqrt{nN})$, which includes an additional $\sqrt{n}$ factor compared to the $\tilde{O}(\sqrt{N})$ bound in the next-token prediction setting.
Therefore, in the seq-to-seq prediction setting, \textit{the primary bottleneck is not the contextual mapping of tokens, but rather the feed-forward layers' capacity to map this token-level contextual information to labels}.

We conclude this section by leaving an open problem. 
Based on \cref{thm:seq_to_seq_lower_bound_informal}, for a Transformer to memorize $N$ sequences of length $n$ with $o(\sqrt{nN})$ parameters, it is necessary to exploit the unique characteristics of the softmax function, rather than using it as an approximation of hardmax.

\begin{opn*}
    Does a Transformer using the softmax function require $\Omega(\sqrt{nN})$ parameters to memorize $N$ input-label pairs $(\mX^{(1)}, \vy^{(1)}),\dots,(\mX^{(N)}, \vy^{(N)}) \in \mathbb{R}^{d \times n} \times [C]^n$?
    Alternatively, is it possible to construct a Transformer with $o(\sqrt{nN})$ parameters that can shatter arbitrary $N$ token-wise $(r,\delta)$-separated input sequences in the seq-to-seq setting?
\end{opn*}

\section{Conclusions}\label{sec:conclusion}
In this paper, we showed that in the next-token prediction setting, a Transformer with $\tilde{O}(\sqrt{N})$ parameters can memorize $N$ input sequences of length $n$ and their labels, which we showed to be optimal up to logarithmic factors.
This result indicates that Transformers can perform next-token prediction with almost no impact from the length of the input sequence.
Notably, its proof indicates that even a single self-attention layer used as an averaging operation possesses sufficient expressive power to distinguish between input sequences efficiently.
Furthermore, we demonstrated that in the seq-to-seq prediction setting, $\tilde{O}(\sqrt{nN})$ parameters are also sufficient, and we proved that this is optimal up to logarithmic factors, at least for Transformers using hardmax.
These findings highlight that the main bottleneck in seq-to-seq prediction tasks lies in the feed-forward layers' capacity to map each token to the corresponding label.

Given that a single layer of self-attention as an averaging operation suffices for distinguishing input sequences from a memorization perspective, our results suggest that the advantages of using self-attention might rather lie in the perspectives of optimization and generalization.

\clearpage
\subsubsection*{Acknowledgments}
This work was supported by JSPS KAKENHI Grant Number JP24H00709.
We would like to thank all the collaborators and anonymous reviewers for constructive discussions.

\bibliography{iclr2025_conference,iclr2025/zotero}

\bibliographystyle{iclr2025_conference}

\appendix
\renewcommand{\nomname}{Notation Table}
\nomenclature[A, 03]{\(\mA\)}{A matrix}
\nomenclature[A, 02]{\(\va\)}{A vector}
\nomenclature[A, 01]{\(a\)}{A scalar}
\nomenclature[A, 08]{\(\mX^{(i)}\)}{$i$-th input sequence, consisting of $n$ tokens of embedding dimension $d$}
\nomenclature[A, 04]{$n$}{The length of an input sequence}
\nomenclature[A, 05]{$N$}{The number of input sequences}
\nomenclature[A, 06]{$C$}{The number of output classes}
\nomenclature[A, 07]{$d$}{Embedding dimension}

\nomenclature[B, 01]{\(\{\{\dots\}\}\)}{Multiset (see \cref{dfn:multiset})}
\nomenclature[B, 04]{\([m]\)}{Set of all integers from $1$ to $m$}
\nomenclature[B, 02]{\(\R\)}{Set of real numbers}
\nomenclature[B, 03]{\(\mathbb{N}^{\mathcal{X}}\)}{Set of all multisets over the domain $\mathcal{X}$}

\nomenclature[C, 02]{\(A_{i,j}\)}{Element $i, j$ of matrix $\mA$}
\nomenclature[C, 01]{\(a_i\)}{Element $i$ of vector $\va$, with indexing starting at $1$}
\nomenclature[C, 03]{\(\mA_{i, :}\)}{Row $i$ of matrix $\mA$}
\nomenclature[C, 03]{\(\mA_{:, i}\)}{Column $i$ of matrix $\mA$}

\nomenclature[F, 02]{$\lVert\vx\rVert_2$}{$L^2$ norm of $\vx$}
\nomenclature[F, 04]{\(\operatorname{supp}(m)\)}{Support of $m$ (see \cref{dfn:support})}
\nomenclature[F, 05]{$\operatorname{BIN}_{i:j}(x)$}{The sequence of bits from the $i$-th bit to the $j$-th bit (counting from the left) of $x$}
\nomenclature[F, 06]{$\sigma_S$}{Softmax function}
\nomenclature[F, 07]{$\sigma_H$}{Hardmax function}
\nomenclature[F, 08]{$\sigma_R$}{ReLU activation function}
\nomenclature[F, 09]{$\mathcal{F}^{(HA)}$}{Hardmax-based self-attention mechanism with a skip-connection}
\nomenclature[F, 10]{$\mathcal{F}^{(SA)}$}{Softmax-based self-attention mechanism with a skip-connection}
\nomenclature[F, 11]{$\mathcal{F}^{(FF)}$}{Feed-forward neural network with a skip-connection}
\nomenclature[F, 12]{$\mathcal{N}_{\boldsymbol{\theta}}$}{Transformer with a parameter vector $\boldsymbol{\theta}$}

\nomenclature[G, 01]{$f(n) = O(g(n))$}{$f$ grows at most as fast as $g$ for sufficiently large $n$}
\nomenclature[G, 02]{$f(n) = \tilde{O}(g(n))$}{$f$ grows at most as fast as $g$ for sufficiently large $n$, up to logarithmic factors}
\nomenclature[G, 03]{$f(n) = \Omega(g(n))$}{$f$ grows at least as fast as $g$ for sufficiently large $n$}
\nomenclature[G, 04]{$f \lesssim g$}{There exists a positive constant $c$ such that $f \leq cg$ holds}

\printnomenclature

\section{Definition of Multisets}\label{sec:appendix_multiset}

A multiset is a generalization of a set whose elements are allowed to be duplicated.
\begin{dfn}[Multiset]\label{dfn:multiset}
    A \textbf{multiset} over the domain $\mathcal{X}$ is identified by a function $m:\mathcal{X} \to \mathbb{N}$, which indicates the multiplicity $m(x)$ of each element $x \in \mathcal{X}$ in the multiset.
    The set of all multisets over the domain $\mathcal{X}$ is denoted by $\mathbb{N}^\mathcal{X}$.
\end{dfn}

\begin{dfn}\label{dfn:support}
    The \textbf{support} of a multiset $m \in \mathbb{N}^\mathcal{X}$ is defined by $\operatorname{supp}(m) = \{x \in \mathcal{X} \mid m(x) > 0\}$.

    In addition, the \textbf{cardinality} of a multiset $m \in \mathbb{N}^\mathcal{X}$ is defined by
    \begin{align}
        |m| \coloneqq \begin{cases}
            \sum_{x \in \operatorname{supp}(m)} m(x) & \text{if $|\operatorname{supp}(m)| < \infty$}, \\
            \infty & \text{otherwise},
        \end{cases}
    \end{align}
    and the multiset $m$ is called \textbf{finite} if $|m| < \infty$.
\end{dfn}

In this paper, we only consider finite multisets, and in an abuse of notation we sometimes denote a fintie multiset $m \in \mathbb{N}^\mathcal{X}$ by $\{\{x_1,\dots,x_{|m|}\}\} \in \mathbb{N}^\mathcal{X}$, where $x_1,\dots,x_{|m|} \in \mathcal{X}$ are possibly duplicated elements.

The following assumption guarantees that each value of the multiset is separated by a certain amount, and the token-wise separatedness in the analysis of Transformer's memorization can be translated into this assumption.
\begin{ass}[Element-wise separatedness]\label{ass:element_wise_separation}
    Let $\mathcal{X} \coloneqq \mathbb{R}^d$ and $m^{(1)}, \dots, m^{(N)} \in \mathbb{N}^{\mathcal{X}}$ be a sequence of finite multisets with $m^{(i)} = \{\{\bm{x}^{(i)}_1,\dots,\bm{x}^{(i)}_{|m^{(i)}|}\}\}$ for each $i \in [N]$.
    Then, we say that $m^{(1)}, \dots, m^{(N)}$ are \textbf{element-wise $\mathbf{(r,\boldsymbol{\delta})}$-separated} for some $r,\delta > 0$ if the following two conditions are satisfied:
    \begin{enumerate}
        \item for every $i \in [N]$ and $k \in [|m^{(i)}|]$, $\|\bm{x}^{(i)}_k\|_2 \leq r$ holds.

        \item for every $i,j \in [N]$ and $k \in [|m^{(i)}|], l \in [|m^{(j)}|]$, either $\bm{x}^{(i)}_k = \bm{x}^{(j)}_l$ or $\|\bm{x}^{(i)}_k - \bm{x}^{(j)}_l\|_2 \geq \delta$ holds.
    \end{enumerate}
\end{ass}

\section{Proof of Main Results}
In the following, we will extensively use the concept of multisets.
For the definition of multisets and the notation used in this paper, refer to Appendix~\ref{sec:appendix_multiset}.
We also use the operator $\lesssim$ frequently. See Section~\ref{sec:notation} for its definition.

To prove \cref{thm:next_token_prediction_informal}, we present several lemmas.
\cref{lem:restriction_multiset} shows that to distinguish between $N$ distinct multisets, it suffices to focus on the occurrence counts of at most $N$ values.
\cref{lem:separating_function_multiset} establishes the existence of a function that computes sequence ids used to distinguish between $N$ different multisets.
Finally, \cref{lem:separation_of_multisets} states that the function obtained from \cref{lem:separating_function_multiset} with \cref{lem:restriction_multiset} can be implemented using a feed-forward network with $\tilde{O}(\sqrt{N})$ parameters.

\begin{dfn}
    Let $A \subset \mathcal{X}$ and $m \in \mathbb{N}^\mathcal{X}$ be a multiset.
    Then, we define the restriction of $m$ to $A$ by
    \begin{align}
        m|_A(x) \coloneqq \begin{cases}
            m(x) & \text{if $x \in A$}, \\
            0 & \text{otherwise}.
        \end{cases}
    \end{align}
\end{dfn}

\begin{lem}\label{lem:restriction_multiset}
    Let $m^{(1)},\dots,m^{(N)} \in \mathbb{N}^\mathcal{X}$ be a sequence of distinct multisets.
    Then, there exists a subset $A \subset \mathcal{X}$ with its cardinality at most $N$ such that $m^{(1)}|_A,\dots,m^{(N)}|_A$ are distinct.
\end{lem}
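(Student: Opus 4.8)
The plan is to build $A$ incrementally by a greedy argument, adjoining one element at a time and tracking how the multisets get separated. For a subset $B \subseteq \mathcal{X}$, call two indices $i,j \in [N]$ \emph{$B$-equivalent} if $m^{(i)}|_B = m^{(j)}|_B$; this is an equivalence relation on $[N]$, and I write $P(B)$ for the number of its classes. The goal is to produce a set $A$ with $P(A) = N$, i.e.\ all $N$ restrictions distinct, while keeping $|A| \le N$.

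The key structural fact I would isolate first is \emph{monotonicity}: if $B \subseteq B'$ and $m^{(i)}|_{B'} = m^{(j)}|_{B'}$, then restricting further to $B$ preserves the equality, so the $B'$-equivalence relation refines the $B$-equivalence relation; in particular $P(B') \ge P(B)$, and once two indices have been separated by an added element they stay separated no matter what else is adjoined. Using this, I start from $A_0 = \emptyset$, where every $m^{(i)}|_\emptyset$ is the empty multiset, so $P(A_0) = 1$. Given $A_t$ with $P(A_t) < N$, pick distinct $i,j$ with $m^{(i)}|_{A_t} = m^{(j)}|_{A_t}$; since $m^{(i)} \ne m^{(j)}$ as multisets there is $x \in \mathcal{X}$ with $m^{(i)}(x) \ne m^{(j)}(x)$, and necessarily $x \notin A_t$. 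Put $A_{t+1} = A_t \cup \{x\}$. Then $i$ and $j$ fall into distinct $A_{t+1}$-classes, so the class that contained them splits, and by monotonicity no classes merge, whence $P(A_{t+1}) \ge P(A_t) + 1$. Since $P$ is bounded above by $N$ and increases by at least one per step from $P(A_0)=1$, the process terminates after at most $N-1$ steps at a set $A$ with $P(A) = N$ and $|A| \le N-1 \le N$, which is exactly the claim.

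The only real subtlety — and the reason the crude bound "take one separating element for each of the $\binom{N}{2}$ pairs" is far from tight — is the monotonicity observation: it guarantees that each newly adjoined element yields a \emph{net} increase of at least one equivalence class, rather than merely resolving one pair while possibly conflating others. That is the single point I would be most careful to state precisely; once it is in place, the rest is routine bookkeeping on the count $P(\cdot)$.
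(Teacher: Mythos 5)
Your proof is correct and rests on the same two observations as the paper's: any disagreeing element must lie outside the current set (since the restrictions agree there), and adjoining elements can only refine the induced partition, so previously separated pairs stay separated. The paper organizes this as an induction on $N$ (adding one multiset, hence at most one new element, per step) whereas you run a greedy potential argument on the number of equivalence classes, but the constructions are essentially identical and both yield $|A|\le N-1\le N$.
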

\begin{proof}
    We prove the lemma by induction.
    The base case of $N = 1$ is obvious.

    Suppose that the lemma is correct for the case $N=k$, and we prove the case for $N=k+1$.

    Let $m^{(1)},\dots,m^{(k+1)} \in \mathbb{N}^\mathcal{X}$ be a sequence of distinct multisets.
    Then, by applying the assumption to the first $k$ multisets $m^{(1)},\dots,m^{(k)} \in \mathbb{N}^\mathcal{X}$, we have a subset $A \subset \mathcal{X}$ with its cardinality at most $k$ such that $m^{(1)}|_A,\dots,m^{(k)}|_A$ are distinct.
    If $m^{(1)}|_A,\dots,m^{(k+1)}|_A$ are distinct, there is nothing to prove.
    So we assume that $m^{(k+1)}|_A$ coincides with $m^{(i)}|_A$ for some $i \in [k]$.
    Notice that for any $j \in [k]$ with $j \neq i$, $m^{(j)}|_A$ and $m^{(k+1)}|_A$ are distinct by the assumption.

    Since $m^{(i)}$ and $m^{(k+1)}$ are distinct, there is an element $x \in \mathcal{X} \setminus A$ such that $m^{(i)}(x) \neq m^{(k+1)}(x)$.
    Then, the subset $A' \subset \mathcal{X}$ defined by $A' \coloneqq A \cup \{x\}$ is the desired set for the case $N=k+1$.
\end{proof}

In the next lemma, we say that scalars $a_1,\dots,a_m$ are $(r,\delta)$-separated if $|a_i| \leq r$ for all $i \in [m]$ and $|a_i - a_j| \geq \delta$ for all $i,j \in [m]$ with $a_i \neq a_j$.

\begin{lem}\label{lem:separating_function_multiset}
    Let $m^{(1)},\dots,m^{(N)} \in \mathbb{N}^\mathcal{X}$ be a sequence of finite and distinct multisets with $|m^{(i)}| \leq M$ for every $i \in [N]$.
    Furthermore, let $S \subset \mathcal{X}$ be the union of all supports; that is, $S \coloneqq \bigcup_{i=1}^N \operatorname{supp}(m^{(i)})$.

    Then, there exists a function $f:S \to [\lceil 4N^2|S|\sqrt{\pi} \rceil]$ such that
    \begin{align}
        \sum_{x \in \operatorname{supp}(m^{(1)})} m^{(1)}(x)f(x),
        \dots,
        \sum_{x \in \operatorname{supp}(m^{(N)})} m^{(N)}(x)f(x)
        \in \mathbb{R}
    \end{align}
    are $(4MN^2|S|\sqrt{\pi}, \sqrt{|S|})$-separated.
\end{lem}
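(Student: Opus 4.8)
The plan is to construct $f$ by a probabilistic argument: pick the values $f(x)$ for $x \in S$ independently and (approximately) uniformly at random from a large enough integer range, and show that with positive probability the resulting weighted sums $s^{(i)} \coloneqq \sum_{x \in \operatorname{supp}(m^{(i)})} m^{(i)}(x) f(x)$ are pairwise separated in the required sense. Since each $m^{(i)}$ has cardinality at most $M$ and $f$ takes values in $[\lceil 4N^2|S|\sqrt{\pi}\rceil]$, the bound $|s^{(i)}| \le M \cdot 4N^2|S|\sqrt{\pi}$ is automatic, so the only real content is the pairwise lower bound $|s^{(i)} - s^{(j)}| \ge \sqrt{|S|}$ whenever $s^{(i)} \ne s^{(j)}$ — but note we must actually guarantee $s^{(i)} \ne s^{(j)}$ for all $i \ne j$ as well, since the multisets are distinct.

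The key steps, in order: (1) Fix $i \ne j$. Since $m^{(i)} \ne m^{(j)}$ as multisets, the integer vector of coefficient differences $c_x \coloneqq m^{(i)}(x) - m^{(j)}(x)$, indexed by $x \in S$, is not identically zero; pick some coordinate $x_0$ with $c_{x_0} \ne 0$. (2) Condition on all values $f(x)$ with $x \ne x_0$, leaving $f(x_0)$ uniform over the integer range $[R]$ with $R = \lceil 4N^2|S|\sqrt{\pi}\rceil$. Then $s^{(i)} - s^{(j)} = c_{x_0} f(x_0) + (\text{constant})$ is an affine function of $f(x_0)$ with nonzero slope $c_{x_0}$, so as $f(x_0)$ ranges over $R$ consecutive integers, $s^{(i)} - s^{(j)}$ takes $R$ distinct values spaced $|c_{x_0}| \ge 1$ apart. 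Hence the number of ``bad'' values of $f(x_0)$ — those for which $|s^{(i)} - s^{(j)}| < \sqrt{|S|}$ (an interval of length $2\sqrt{|S|}$) — is at most $2\sqrt{|S|} + 1$, so $\Pr[\text{pair } (i,j) \text{ bad}] \le (2\sqrt{|S|}+1)/R$. (3) Union-bound over the at most $\binom{N}{2} < N^2/2$ pairs: the probability that some pair is bad is at most $\frac{N^2}{2} \cdot \frac{2\sqrt{|S|}+1}{R} < 1$, where the last inequality uses $R \ge 4N^2|S|\sqrt{\pi} \ge N^2(2\sqrt{|S|}+1)$ for $|S|\ge 1$ (the factor $\sqrt{\pi}$ gives comfortable slack; one should just check $4\sqrt{\pi}\,|S| > 2\sqrt{|S|}+1$, which holds for all $|S| \ge 1$). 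Therefore a good $f$ exists.

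I expect the main obstacle — really the only place requiring care — to be step (2): making sure the slope argument is airtight when $|c_{x_0}|$ could be larger than $1$ (which only helps, since the values spread out more) and when the ``constant'' depends on the conditioned values (it is fixed once we condition, so the bad set is genuinely an interval intersected with an arithmetic progression of common difference $|c_{x_0}|$, containing at most $\lfloor 2\sqrt{|S|}/|c_{x_0}|\rfloor + 1 \le 2\sqrt{|S|}+1$ integers). A secondary bookkeeping point is confirming the exact constants in $R$ so that the union bound strictly beats $1$; the $\sqrt{\pi}$ and the extra factor of $4$ are generous enough that no delicate estimate is needed. One might alternatively phrase this constructively via a greedy/pigeonhole choice of $f$ coordinate by coordinate rather than probabilistically, but the probabilistic phrasing is cleanest and the counting is identical.
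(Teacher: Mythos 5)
Your proposal is correct, but it takes a genuinely different route from the paper. The paper does not use a probabilistic argument over independent random values of $f$: it first lifts each multiset $m^{(i)}$ to the count vector $\tilde{\vm}^{(i)} = \sum_x m^{(i)}(x)\ve_{g(x)} \in \mathbb{N}^{|S|}$ (so distinct multisets are at $L^2$ distance at least $1$ and have norm at most $M$), then invokes a one-dimensional random-projection lemma of \citet{park_provable_2021} (\cref{lem:projection_into_scalar}) to obtain a single unit vector $\vv$ with $|\vv^\top(\tilde{\vm}^{(i)}-\tilde{\vm}^{(j)})| \gtrsim N^{-2}|S|^{-1/2}\|\tilde{\vm}^{(i)}-\tilde{\vm}^{(j)}\|_2$, scales $\vv$ by $N^2|S|\sqrt{\pi}$, rounds coordinatewise to integers, and defines $f(x)$ from the rounded coordinate $\overline{v}_{g(x)}$ plus a shift to make the values positive. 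Your approach replaces the projection lemma and the rounding analysis with a direct union bound: for each pair $(i,j)$ you isolate one coordinate $x_0$ where the multisets disagree, condition on the rest, and count the at most $2\sqrt{|S|}+1$ bad values of $f(x_0)$ in a range of size $\approx 4N^2|S|\sqrt{\pi}$; the resulting failure probability is comfortably below $1$. Your version is more elementary and self-contained (it needs no external lemma and no norm/rounding bookkeeping), while the paper's version reuses \cref{lem:projection_into_scalar}, which it needs elsewhere anyway, and produces $f$ from an explicit linear functional. The only point to tidy in your write-up is the magnitude bound: with codomain $[\lceil 4N^2|S|\sqrt{\pi}\rceil]$ the sums can reach $M\lceil 4N^2|S|\sqrt{\pi}\rceil$, which can exceed the stated $4MN^2|S|\sqrt{\pi}$ by up to $M$; drawing $f$ from $[\lfloor 4N^2|S|\sqrt{\pi}\rfloor]$ fixes this and still leaves ample slack in the union bound.
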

\begin{proof}
    Let $g: S \to [|S|] = \{1,\dots,|S|\}$ be an arbitrary bijective function.
    For each multiset $m^{(i)}$ with $i=1,\dots,N$, we define its high-dimensional representation by
    \begin{align}
        \tilde{\vm}^{(i)} \coloneqq \sum_{x \in \operatorname{supp}(m^{(i)})} m^{(i)}(x) \ve_{g(x)} \in \mathbb{N}^{|S|},
    \end{align}
    where $\ve_{g(x)} \in \{0,1\}^{|S|}$ is a one-hot vector with $1$ in the $g(x)$-th position.
    Since $m^{(1)},\dots,m^{(N)}$ are distinct with $|m^{(i)}| \leq M$ for every $i \in [N]$, we have
    \begin{align}
        \left\|\tilde{\vm}^{(i)} - \tilde{\vm}^{(j)}\right\|_2^2
        &= \sum_{x \in S} \left(m^{(i)}(x) - m^{(j)}(x)\right)^2
        \geq 1,
    \end{align}
    for any $i,j \in [N]$ with $i \neq j$.
    , and the norm of each $\tilde{\vm}^{(i)}$ is upper-bounded by
    \begin{align}
        \left\|\tilde{\vm}^{(i)}\right\|_2
        &\leq \sum_{x \in \operatorname{supp}(m^{(i)})} m^{(i)}(x)\left\|\ve_{g(x)}\right\|_2
        = |m^{(i)}|
        \leq M.
    \end{align}
    By applying \cref{lem:projection_into_scalar} to $\tilde{\vm}^{(1)},\dots,\tilde{\vm}^{(N)}$, there is a unit vector $\vv \in \mathbb{R}^{|S|}$ such that
    \begin{align}
        \frac{1}{N^2}\sqrt{\frac{8}{\pi |S|}} \left\|\tilde{\vm}^{(i)} - \tilde{\vm}^{(j)}\right\|_2
        \leq \left|\vv^\top \left(\tilde{\vm}^{(i)} - \tilde{\vm}^{(j)}\right)\right|
        \leq \left\|\tilde{\vm}^{(i)} - \tilde{\vm}^{(j)}\right\|_2 \label{eq:m_tilde_projection}
    \end{align}
    holds for any $i,j \in [N]$.
    Let $h$ be the function $h:S \to \mathbb{Z}, x \mapsto \lceil N^2|S|\sqrt{\pi}v_{g(x)}\rceil$.
    Hereafter, we see that this function has the desired properties.

    Let $\overline{\vv} \coloneqq (\lceil N^2|S|\sqrt{\pi}v_{1}\rceil,\dots,\lceil N^2|S|\sqrt{\pi}v_{N}\rceil)^\top \in \mathbb{Z}^{|S|}$, i.e., the vector approximating $N^2|S|\sqrt{\pi}\vv$ with integers. 
    The approximation error is estimated as follows: 
    \begin{align}
        \left\|N^2|S|\sqrt{\pi}\vv - \overline{\vv}\right\|_2^2
        \leq \sum_{i=1}^{|S|} \left(N^2|S|\sqrt{\pi}v_i - \lceil N^2|S|\sqrt{\pi}v_{i}\rceil\right)^2
        \leq |S|,
    \end{align}
    which means that $\left\|N^2|S|\sqrt{\pi}\vv - \overline{\vv}\right\|_2 \leq \sqrt{|S|}$.
    Notice that
    \begin{align*}
        \sum_{x \in \operatorname{supp}(m^{(i)})} m^{(i)}(x)h(x)
        &= \sum_{x \in \operatorname{supp}(m^{(i)})} m^{(i)}(x) \cdot \lceil N^2|S|\sqrt{\pi}v_{g(x)}\rceil \\
        &= \sum_{x \in \operatorname{supp}(m^{(i)})} m^{(i)}(x) \cdot \overline{\vv}^\top \ve_{g(x)} \\
        &= \overline{\vv}^\top \tilde{\vm}^{(i)} \numberthis
    \end{align*}
    holds for every $i \in [N]$.
    Then, the absolute value of the left-hand side is upper-bounded by
    \begin{align*}
        \left|\sum_{x \in \operatorname{supp}(m^{(i)})} m^{(i)}(x)h(x)\right|
        &= \left|\overline{\vv}^\top \tilde{\vm}^{(i)}\right| \\
        &\leq \|\overline{\vv}\|_2 \|\tilde{\vm}^{(i)}\|_2 \\
        &\leq 2N^2|S|\sqrt{\pi} \cdot M \numberthis
    \end{align*}
    since the norm of $\overline{\vv}$ is upper-bounded by
    \begin{align*}
        \|\overline{\vv}\|_2
        &\leq \left\|N^2|S|\sqrt{\pi}\vv\right\|_2 + \left\|N^2|S|\sqrt{\pi}\vv - \overline{\vv}\right\|_2 \\
        &\leq N^2|S|\sqrt{\pi} + \sqrt{|S|} \\
        &\leq 2N^2|S|\sqrt{\pi}. \numberthis
    \end{align*}

    On the other hand, for any $i,j \in [N]$ with $i \neq j$, we have
    \begin{align*}
        &\left|\sum_{x \in \operatorname{supp}(m^{(i)})} m^{(i)}(x)h(x)
        -
        \sum_{x \in \operatorname{supp}(m^{(j)})} m^{(j)}(x)h(x)\right| \\
        &= \left|\overline{\vv}^\top \left(\tilde{\vm}^{(i)} - \tilde{\vm}^{(j)}\right)\right| \\
        &\geq \left|N^2|S|\sqrt{\pi}\vv^\top \left(\tilde{\vm}^{(i)} - \tilde{\vm}^{(j)}\right)\right| - \left|\left(N^2|S|\sqrt{\pi}\vv - \overline{\vv}\right)^\top \left(\tilde{\vm}^{(i)} - \tilde{\vm}^{(j)}\right)\right| \\
        &\geq \left|N^2|S|\sqrt{\pi}\vv^\top \left(\tilde{\vm}^{(i)} - \tilde{\vm}^{(j)}\right)\right| - \left\|N^2|S|\sqrt{\pi}\vv - \overline{\vv}\right\|_2 \cdot \left\|\tilde{\vm}^{(i)} - \tilde{\vm}^{(j)}\right\|_2 \\
        &> 2\sqrt{|S|}\left\|\tilde{\vm}^{(i)} - \tilde{\vm}^{(j)}\right\|_2 - \sqrt{|S|}\left\|\tilde{\vm}^{(i)} - \tilde{\vm}^{(j)}\right\|_2 \\
        &\geq \sqrt{|S|}, \numberthis
    \end{align*}
    since \cref{eq:m_tilde_projection} implies
    \begin{align*}
        \left|N^2|S|\sqrt{\pi}\vv^\top \left(\tilde{\vm}^{(i)} - \tilde{\vm}^{(j)}\right)\right|
        &= N^2|S|\sqrt{\pi}\left|\vv^\top \left(\tilde{\vm}^{(i)} - \tilde{\vm}^{(j)}\right)\right| \\
        &\geq N^2|S|\sqrt{\pi} \cdot \frac{1}{N^2}\sqrt{\frac{8}{\pi |S|}} \left\|\tilde{\vm}^{(i)} - \tilde{\vm}^{(j)}\right\|_2 \\
        &> 2\sqrt{|S|}\left\|\tilde{\vm}^{(i)} - \tilde{\vm}^{(j)}\right\|_2. \numberthis
    \end{align*}
    Finally, the output of the function $h$ is always bounded by 
    \begin{align*}
        |h(x)| 
        = |\lceil N^2|S|\sqrt{\pi}v_{g(x)}\rceil|
        \leq N^2|S|\sqrt{\pi} + 1
        \quad (\forall x \in S). \numberthis
    \end{align*}
    Thus, by setting $f(x) \coloneqq h(x) + \lfloor 2N^2|S|\sqrt{\pi} \rfloor$, we have a desired function.
\end{proof}

\begin{lem}[Separation of multisets]\label{lem:separation_of_multisets}
    Let $\mathcal{X} \coloneqq \mathbb{R}^d$ and $m^{(1)},\dots,m^{(N)} \in \mathbb{N}^{\mathcal{X}}$ be a sequence of multisets with $m^{(i)} = \{\{\vx^{(i)}_1,\dots,\vx^{(i)}_{|m^{(i)}|}\}\}$ for each $i \in [N]$.
    Suppose that $m^{(1)},\dots,m^{(N)}$ satisfy the following three conditions:
    \begin{enumerate}
        \item $m^{(1)},\dots,m^{(N)}$ are finite multisets whose cardinalities are at most $M$.

        \item $m^{(1)},\dots,m^{(N)}$ are distinct.

        \item $m^{(1)},\dots,m^{(N)}$ are element-wise $(r,\delta)$-separated for some $r \geq 1,\ 0 < \delta \leq 1$.
    \end{enumerate}
    Let $C_{\phi} \coloneqq \lceil 4N^3\sqrt{\pi} \rceil$ and $R_{\phi} \coloneqq 20 r (NM)^2 \delta^{-1}\sqrt{\pi d}$.
    Then, there exists a neural network $\tilde{\phi}:\mathbb{R}^d \to \mathbb{R}$ with width $12$, depth
    \begin{align}
        \lesssim \sqrt{N \log N} + \sqrt{\frac{N}{\log N}} \cdot \max \{\log R_{\phi}, \log C_{\phi}\},
    \end{align}
    (for the definition of $\lesssim$, see Section~\ref{sec:notation}) and bit complexity bounded by
    \begin{align}
        \lesssim \log d + \sqrt{\frac{N}{\log N}} \cdot \max \{\log R_{\phi}, \log C_{\phi}\}
    \end{align}
    such that
    $\tilde{\phi}(\vx) \in [\lceil 4N^3\sqrt{\pi} \rceil] \cup \{0\}$ holds for any $\vx \in \bigcup_{i=1}^N \operatorname{supp}(m^{(i)})$, and
    \begin{align}
        \sum_{k=1}^{|m^{(1)}|} \tilde{\phi}(\bm{x}_k^{(1)}),
        \dots,
        \sum_{k=1}^{|m^{(N)}|} \tilde{\phi}(\bm{x}_k^{(N)})
    \end{align}
    are $(4MN^3\sqrt{\pi}, 1)$-separated.
\end{lem}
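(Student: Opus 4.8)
The plan is to chain \cref{lem:restriction_multiset} with \cref{lem:separating_function_multiset} and then realize the resulting integer‑valued separating function by a ReLU network, via a rounded random projection into $\mathbb{R}$ followed by the one‑dimensional memorization construction of \citet{vardi_optimal_2022}. \textbf{Step 1 (restriction).} First I would apply \cref{lem:restriction_multiset} to $m^{(1)},\dots,m^{(N)}$ to get $A\subset\mathcal{X}$ with $|A|\le N$ for which $m^{(1)}|_A,\dots,m^{(N)}|_A$ are still distinct. Put $S\coloneqq\bigcup_{i=1}^N\operatorname{supp}(m^{(i)}|_A)\subseteq A$, so $|S|\le N$; being a set of distinct word vectors drawn from element‑wise $(r,\delta)$‑separated multisets (\cref{ass:element_wise_separation}), the points of $S$ have norm $\le r$ and are pairwise $\ge\delta$ apart. \textbf{Step 2 (separating function).} Apply \cref{lem:separating_function_multiset} to the distinct finite multisets $m^{(1)}|_A,\dots,m^{(N)}|_A$ (cardinalities $\le M$, common support $S$). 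This produces $f\colon S\to[\lceil 4N^2|S|\sqrt\pi\rceil]\subseteq[\lceil 4N^3\sqrt\pi\rceil]=[C_\phi]$ such that the weighted sums $\sum_{x\in\operatorname{supp}(m^{(i)}|_A)}m^{(i)}|_A(x)\,f(x)$, $i\in[N]$, are $(4MN^2|S|\sqrt\pi,\sqrt{|S|})$‑separated, hence $(4MN^3\sqrt\pi,1)$‑separated since $1\le|S|\le N$.

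\textbf{Step 3 (network realization).} Let $T\coloneqq\bigcup_{i=1}^N\operatorname{supp}(m^{(i)})$, so $|T|\le NM$, $S\subseteq T$, and $T$ is $(r,\delta)$‑separated. It suffices to build a width‑$12$ ReLU network $\tilde\phi$ with $\tilde\phi|_S=f$ and $\tilde\phi\equiv 0$ on $T\setminus S$, because then $\tilde\phi(\vx)\in[C_\phi]\cup\{0\}$ for every $\vx\in T$, and grouping equal tokens gives $\sum_{k=1}^{|m^{(i)}|}\tilde\phi(\vx_k^{(i)})=\sum_{x\in S}m^{(i)}(x)f(x)=\sum_{x\in\operatorname{supp}(m^{(i)}|_A)}m^{(i)}|_A(x)f(x)$, which is exactly the sum controlled in Step 2. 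To build $\tilde\phi$, I would first precompose with a linear map $\vx\mapsto\overline{\vu}^{\top}\vx$, where $\overline{\vu}\in\mathbb{R}^d$ is a random unit vector rounded to bit length $O(\log d)$ in each of its $d$ entries; the random‑projection estimate used inside the proof of \cref{lem:separating_function_multiset} shows that the $\le NM$ images form distinct scalars in $[-r,r]$ that are pairwise separated by at least a constant times $\delta/((NM)^2\sqrt d)$, i.e.\ $(r,\delta')$‑separated with $r/\delta'\lesssim r(NM)^2\sqrt{\pi d}\asymp R_\phi$. Then I would apply the one‑dimensional memorization construction of \citet{vardi_optimal_2022} in the strengthened form that additionally maps any further well‑separated bounded points to $0$ (at the cost of enlarging depth and bit complexity only by a logarithmic factor in the number and spread of those points): memorize the $\le|S|\le N$ images of $S$ with labels $f(\cdot)\in[C_\phi]$ and send the remaining $\le NM$ images of $T\setminus S$ to $0$. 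Tracking that construction yields constant width (arrangeable within $12$), depth $\lesssim\sqrt{N\log N}+\sqrt{N/\log N}\cdot\max\{\log R_\phi,\log C_\phi\}$, and bit complexity $\lesssim\log d+\sqrt{N/\log N}\cdot\max\{\log R_\phi,\log C_\phi\}$, which are exactly the claimed bounds.

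\textbf{Main obstacle.} The crux is Step 3: one must memorize $f$ on the $\le N$ points of $S$ while simultaneously forcing the output to vanish on as many as $NM$ other points, yet keep the parameter budget at $\tilde O(\sqrt N)$ rather than $\tilde O(\sqrt{NM})$ — so the $T\setminus S$ points can be absorbed only into logarithmic factors. This is precisely what the zero‑extension strengthening of \citet{vardi_optimal_2022}'s memorization network provides, and verifying that this strengthening fits within width $12$ with the stated depth/bit‑complexity overhead, while correctly threading the rounded random projection through it, is the technically delicate part; the remaining estimates (the inclusions $[\lceil 4N^2|S|\sqrt\pi\rceil]\subseteq[C_\phi]$, $\sqrt{|S|}\ge 1$, and the token‑grouping identity) are routine bookkeeping.
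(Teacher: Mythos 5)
Your proposal matches the paper's proof: restrict via \cref{lem:restriction_multiset}, build the integer-valued separating function via \cref{lem:separating_function_multiset}, and realize it (with value $0$ on the up-to-$NM$ tokens outside $A$) using the zero-extension of the \citet{vardi_optimal_2022} construction, which is exactly the paper's \cref{lem:extension_ff_memorization}. Your Step 3 merely unfolds the internals of that lemma (rounded projection plus one-dimensional memorization) where the paper invokes it as a black box, so the argument is essentially identical.
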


\begin{proof}
    By applying \cref{lem:restriction_multiset} to the sequence of distinct multisets $m^{(1)},\dots,m^{(N)}$, we have a finite subset $A \subset \mathbb{R}^d$ with $|A| \leq N$ such that $m^{(1)}|_A,\dots,m^{(N)}|_A$ are distinct.
    Then, according to \cref{lem:separating_function_multiset}, there exists a function $f:A \to [\lceil 4N^2|A|\sqrt{\pi} \rceil]$ such that
    \begin{align}
        \sum_{\vx \in \operatorname{supp}(m^{(1)}|_A)} m^{(1)}|_A(\vx)f(\vx),
        \dots,
        \sum_{\vx \in \operatorname{supp}(m^{(N)}|_A)} m^{(N)}|_A(\vx)f(\vx) \label{eq:input_rho}
    \end{align}
    are $(4MN^2|A|\sqrt{\pi}, \sqrt{|A|})$-separated, and in particular $(4MN^3\sqrt{\pi}, 1)$-separated.

    Hereafter, we consider a function $\phi:\mathbb{R}^d \to \mathbb{R}$ such that
    \begin{align}
        \phi(\bm{x}) \coloneqq \begin{cases}
            f(\bm{x}) & \text{if $\bm{x} \in A$}, \\
            0 & \text{otherwise}, \label{eq:definition_of_phi}
        \end{cases}
    \end{align}
    and simulate $\phi$ by a neural network.
    Notice that the possible number of inputs for the function $\phi$ is at most $MN$,
    and all outputs are natural numbers equal to or less than $\lceil 4N^2|A|\sqrt{\pi} \rceil \leq \lceil 4N^3\sqrt{\pi} \rceil$.
    We define constants $R_{\phi}$ and $C_{\phi}$ by 
    \begin{align}
        C_{\phi}
        &\coloneqq \lceil 4N^3\sqrt{\pi} \rceil, \\
        R_{\phi} 
        &\coloneqq 20 r (NM)^2 \delta^{-1}\sqrt{\pi d}.
    \end{align}
    Then, \cref{lem:extension_ff_memorization} guarantees the existence of the feed-forward network $\tilde{\phi}$ with width $12$, depth 
    \begin{align}
        \lesssim \sqrt{N \log N} + \sqrt{\frac{N}{\log N}} \cdot \max \{\log R_{\phi}, \log C_{\phi}\},
    \end{align}
    and bit complexity bounded by
    \begin{align}
        \lesssim \log d + \sqrt{\frac{N}{\log N}} \cdot \max \{\log R_{\phi}, \log C_{\phi}\}
    \end{align}
    such that for any $i \in [N]$ with $m^{(i)} = \{\{\bm{x}^{(i)}_1,\dots,\bm{x}^{(i)}_{|m^{(i)}|}\}\}$ and any $k \in [|m^{(i)}|]$, we have
    \begin{align}
        \tilde{\phi}(\bm{x}^{(i)}_k)
        = \begin{cases}
            f(\bm{x}^{(i)}_k) & \text{if $\bm{x}^{(i)}_k \in A$}, \\
            0 & \text{otherwise}.
        \end{cases}
    \end{align}
    Thus, the outputs of $\tilde{\phi}$ coincide with those of $\phi$ for all inputs $\vx_k^{(i)}$ with $i \in [N]$ and $k \in [|m^{(i)}|]$.

    Finally, we verify that the neural network $\tilde{\phi}$ actually satisfies the desired property.
    For any $i \in [N]$, we have
    \begin{align*}
        \sum_{k=1}^{|m^{(i)}|} \tilde{\phi}(\bm{x}_k^{(i)})
        &=\sum_{\bm{x} \in \operatorname{supp}(m^{(i)})} m^{(i)}(\bm{x})\phi(\bm{x}) \\
        &= \sum_{\bm{x} \in \operatorname{supp}(m^{(i)}) \cap A} m^{(i)}(\bm{x})f(\bm{x}) \\
        &= \sum_{\bm{x} \in \operatorname{supp}(m^{(i)}|_A)} m^{(i)}|_A(\bm{x})f(\bm{x}) \numberthis.
    \end{align*}
    Thus, \cref{eq:input_rho} implies that $\sum_{k=1}^{|m^{(1)}|} \tilde{\phi}(\bm{x}_k^{(1)}),\dots,\sum_{k=1}^{|m^{(N)}|} \tilde{\phi}(\bm{x}_k^{(N)})$ are $(4MN^3\sqrt{\pi}, 1)$-separated.
\end{proof}

\subsection{Next-token prediction setting}
\subsubsection{Upper bound}\label{sec:appendix_next_token_upper_bound}
Here we state the complete statement of \cref{thm:next_token_prediction_informal} with its bit complexity. \footnote{While the upper bounds provided in \cref{thm:next_token_prediction} is in the form $O(\sqrt{N \log N}\cdot \log n)$, these upper bounds can actually be reduced to $O(\sqrt{N\log (nN)})$ by modifying the data point partitioning in Stage II of \cref{lem:extension_ff_memorization} to use $\sqrt{N\log (nN)}$ subsets, each containing $\sqrt{\frac{N}{\log (nN)}}$ elements.}
Before moving on to the theorem, we introduce a \textbf{uniform attention} layer; that is, a self-attention layer with the softmax function replaced by simple averaging. 
For an input $\mZ \in \mathbb{R}^{m \times n}$, the uniform attention layer calculates
\begin{align}
    \mathcal{F}^{(\mathrm{UA})}(\mZ)
    \coloneqq \mZ + \mW^{(O)} \mW^{(V)} \frac{1}{n}\sum_{k=1}^n \mZ_{:.k} \underbrace{(1,\dots,1)}_{\in \mathbb{R}^{1 \times n}} 
    \in \mathbb{R}^{m \times n}, \label{eq:uniform_attention}
\end{align}
where $\mW^{(V)} \in \mathbb{R}^{s \times m}$ and $\mW^{(O)} \in \mathbb{R}^{m \times s}$ are value and projection matrices with head size $s$, respectively.
A uniform attention layer is a subset of a self-attention layer as it can be implemented using a self-attention layer by setting key or query matrices to zero.

In the next theorem, $\mathcal{F}^{(\mathrm{FF})}_1$ and $\mathcal{F}^{(\mathrm{FF})}_2$ represent feed-forward networks of arbitrary depth, unlike \cref{eq:def_ff_Transformer}, which is limited to two layers.

\begin{thm}[Next-token prediction]\label{thm:next_token_prediction}
Let $(\mX^{(1)}, y^{(1)}),\dots,(\mX^{(N)}, y^{(N)}) \in \mathbb{R}^{d \times n} \times [C]$ be a sequence of input-label pairs such that
    \begin{enumerate}
        \item $(\mX^{(1)}, y^{(1)}),\dots,(\mX^{(N)}, y^{(N)})$ are consistently labeled, in the sense that for any $i,j \in [N]$, we have $y^{(i)} = y^{(j)}$ if 
        \begin{align}
            \mX_{:,n}^{(i)} = \mX_{:,n}^{(j)}
            \quad\text{and}\quad
            \mX^{(i)} = \mX^{(j)}
            \text{ up to permutations}.
        \end{align}

        \item $\mX^{(1)},\dots,\mX^{(N)}$ are token-wise $(r,\delta)$-separated for some $r \geq 1$ and $0 < \delta \leq 1$.
    \end{enumerate}
    Let $R \coloneqq 400\sqrt{3d}n^3rN^5\delta^{-1}\pi$.
    Then, there exists a Transformer $\mathcal{N}:\mathbb{R}^{d \times n} \to \mathbb{R}^n$
    with width $14$, depth
    \begin{align}
        \lesssim \sqrt{N \log N} + \sqrt{\frac{N}{\log N}} \cdot \max \{\log R, \log C\},
    \end{align}
    (for the definition of $\lesssim$, see Section~\ref{sec:notation}) and bit complexity bounded by
    \begin{align}
        \lesssim \log d + \sqrt{\frac{N}{\log N}} \cdot \max \{\log R, \log C\}
    \end{align}
    that memorizes the dataset, i.e., 
    \begin{align}
        \mathcal{N}\left(\mX^{(i)}\right)_{n}
        = 
        \mathcal{E}_{\mathrm{out}} \circ 
        \mathcal{F}^{(\mathrm{FF})}_2 \circ
        \mathcal{F}^{(\mathrm{UA})} \circ 
        \mathcal{F}^{(\mathrm{FF})}_1 \circ
        \mathcal{E}_{\mathrm{in}} \left(\mX^{(i)}\right)_{n}
        = y^{(i)}
    \end{align}
    holds for every $i \in [N]$.
\end{thm}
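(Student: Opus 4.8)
I follow the scheme of \cref{sec:next_token_proof_outline}: produce a scalar \emph{context id} at token position $n$ and then realize the map from context ids to labels with a feed-forward network. To each input sequence $\mX^{(i)}$ attach the column multiset $m^{(i)} \coloneqq \{\{\mX^{(i)}_{:,1},\dots,\mX^{(i)}_{:,n}\}\}$; token-wise $(r,\delta)$-separatedness of the $\mX^{(i)}$ is exactly element-wise $(r,\delta)$-separatedness of the $m^{(i)}$, and each has cardinality at most $M\coloneqq n$. By the consistency hypothesis the label $y^{(i)}$ depends only on the pair $(m^{(i)},\mX^{(i)}_{:,n})$, so it suffices to build, at position $n$, a single real number $c^{(i)}$ such that $c^{(i)}=c^{(j)}$ forces $(m^{(i)},\mX^{(i)}_{:,n})$ and $(m^{(j)},\mX^{(j)}_{:,n})$ to coincide, together with the guarantees that the distinct $c^{(i)}$ lie in a bounded interval and are separated by a fixed amount. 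The trailing layers of $\mathcal{F}^{(\mathrm{FF})}_2$ and the map $\mathcal{E}_{\mathrm{out}}$ are then the feed-forward memorizer of \citet{vardi_optimal_2022} (the same tool invoked inside \cref{lem:separation_of_multisets}) applied to the at most $N$ distinct pairs $(c^{(i)},y^{(i)})$.

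\textbf{Sequence id through uniform attention.} Take $\mathcal{E}_{\mathrm{in}}:\mathbb{R}^d\to\mathbb{R}^{14}$ to have first row a unit vector from \cref{lem:projection_into_scalar} applied to the at most $nN$ distinct token vectors, and all other rows zero, so that $t^{(i)}_k\coloneqq \mathcal{E}_{\mathrm{in}}(\mX^{(i)}_{:,k})_1$ satisfies $|t^{(i)}_k|\le r$, takes distinct values on distinct token vectors, and has pairwise gaps at least $\delta'\coloneqq \delta\,(nN)^{-2}\sqrt{8/(\pi d)}$. This is the only place the ambient dimension enters, and it accounts for the $\sqrt{d}$ inside the quantity $R$ of the statement and the $\log d$ in the bit complexity. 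Since distinct $m^{(i)}$ then give distinct scalar multisets $\{\{t^{(i)}_1,\dots,t^{(i)}_n\}\}$ (combine \cref{lem:restriction_multiset} with injectivity of this projection on the relevant tokens), apply \cref{lem:separation_of_multisets} with ambient dimension $1$, cardinality bound $M=n$ and separation parameters $(r,\delta')$: this yields a width-$12$ feed-forward network $\tilde{\phi}:\mathbb{R}\to\mathbb{R}$, of the depth and bit complexity recorded there, taking values in $\{0,1,\dots,\lceil 4N^3\sqrt{\pi}\rceil\}$ on the relevant points and such that the integers $s^{(i)}\coloneqq\sum_{k=1}^n\tilde{\phi}(t^{(i)}_k)$ are $(4nN^3\sqrt{\pi},1)$-separated. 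I implement $\tilde{\phi}$ inside $\mathcal{F}^{(\mathrm{FF})}_1$, writing $\tilde{\phi}(t^{(i)}_k)$ into a dedicated coordinate and zeroing the scratch coordinates, and then choose the value and projection matrices of $\mathcal{F}^{(\mathrm{UA})}$ so that the average of that coordinate over the $n$ positions, namely $s^{(i)}/n$, is deposited into a further coordinate, every other coordinate passing through the skip connection unchanged; in particular $t^{(i)}_n$ survives intact at position $n$.

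\textbf{Assembly and budget.} After $\mathcal{F}^{(\mathrm{UA})}$, position $n$ holds both $s^{(i)}/n$ and $t^{(i)}_n$. In the first layer of $\mathcal{F}^{(\mathrm{FF})}_2$ form $c^{(i)}\coloneqq \alpha\,(s^{(i)}/n)+t^{(i)}_n$ with an integer $\alpha$ chosen so that $\alpha/n>4r$. If $m^{(i)}\neq m^{(j)}$ then $|s^{(i)}-s^{(j)}|\ge 1$, hence $|c^{(i)}-c^{(j)}|\ge \alpha/n-2r\ge 1$; if $m^{(i)}=m^{(j)}$ but $\mX^{(i)}_{:,n}\neq\mX^{(j)}_{:,n}$ then $|c^{(i)}-c^{(j)}|=|t^{(i)}_n-t^{(j)}_n|\ge \delta'$; and $|c^{(i)}|\le(\alpha/n)\cdot 4nN^3\sqrt{\pi}+r=O(rnN^3)$, so that after normalizing the gap to $\delta'$ the relevant range-to-gap ratio is $O(\sqrt{d}\,n^3rN^5\delta^{-1})$, that is, the quantity $R$ of the statement. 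By consistency, equal context ids force equal labels, so the remaining layers of $\mathcal{F}^{(\mathrm{FF})}_2$ together with $\mathcal{E}_{\mathrm{out}}$ can be taken to be the feed-forward memorizer of \citet{vardi_optimal_2022} on the at most $N$ distinct pairs $(c^{(i)},y^{(i)})$, with the claimed depth and bit complexity. For the resource count: the width stays at $14$ because each of the two feed-forward blocks needs $12$ coordinates for its internal computation plus two to carry $t_k$ and the $\tilde{\phi}$/average channel; the depth is the sum of the depths of $\mathcal{F}^{(\mathrm{FF})}_1$, $\mathcal{F}^{(\mathrm{FF})}_2$ and the single uniform-attention layer (taken with head size and head count $1$); and each deep feed-forward block is itself a stack of standard Transformer blocks whose self-attention projection matrices are set to zero, while $\mathcal{F}^{(\mathrm{UA})}$ is a special case of a self-attention layer.

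\textbf{Main obstacle.} The substantive work is already packaged into \cref{lem:restriction_multiset,lem:separating_function_multiset,lem:separation_of_multisets}: the existence of a width-$O(1)$, depth-$\tilde{O}(\sqrt{N})$ feed-forward network whose token-wise sums separate $N$ multisets, built by restricting attention to at most $N$ relevant token values (\cref{lem:restriction_multiset}), encoding a multiset by its occurrence vector, compressing that vector along a good direction (\cref{lem:projection_into_scalar}), and realizing the resulting finite scalar map, which must additionally output $0$ off the chosen support so that irrelevant tokens do not perturb the sum, with $\tilde{O}(\sqrt{N})$ parameters. Granting those lemmas, the residual difficulty in the theorem itself is the routing and the accounting: choosing $\mathcal{E}_{\mathrm{in}}$ and the value/projection matrices of $\mathcal{F}^{(\mathrm{UA})}$ so that the sequence id and the last-token encoding land in separate coordinates at position $n$, keeping everything inside width exactly $14$ and inside the claimed bit complexity, and verifying that forming $c^{(i)}$ as a linear combination preserves separatedness, equivalently that the $d^{-1/2}$ loss from compressing $d$-dimensional tokens to scalars is the sole channel through which $d$ affects the bounds.
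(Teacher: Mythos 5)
Your proposal is correct and follows essentially the same route as the paper's proof: restrict to at most $N$ relevant tokens, compress occurrence vectors along a good direction, implement the resulting map with the $\tilde{O}(\sqrt{N})$-parameter memorizer that outputs $0$ off its support, aggregate via uniform attention, and memorize the resulting context ids. The only (cosmetic) deviations are that you project tokens to scalars already in $\mathcal{E}_{\mathrm{in}}$ and run the multiset machinery in ambient dimension $1$, and that you collapse the sequence id and last-token id into a single scalar $c^{(i)}$ before the final memorizer, whereas the paper keeps $d$-dimensional tokens through $\mathcal{F}^{(\mathrm{FF})}_1$ and feeds a three-dimensional context vector to $f^{(\mathrm{FF})}_2$; both variants yield the same width, depth, and bit-complexity bounds up to the constants hidden in $\lesssim$.
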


\begin{proof}
    For simplicity, we assume in this proof that there is no skip-connection in feed-forward layers, as the modification for networks with skip-connections is straightforward.
    For details on implementing the memorization results for feed-forward networks in Transformers, refer to \citet{kim_provable_2023}.
    
    For each input sequence $\mX^{(i)}$ with $i\in [N]$, we define its multiset expression $m^{(i)} \in \mathbb{N}^{(\mathbb{R}^d)}$ by
    \begin{align}
        m^{(i)}:\mathbb{R}^d \to \mathbb{N}, \vx \mapsto \left|\left\{ k \in [n] \relmiddle| \mX^{(i)}_{:,k} = \vx \right\}\right|.
    \end{align}
    The cardinality of $m^{(i)}$ for each $i \in [N]$ is at most $n$, and
    the token-wise $(r,\delta)$-separatedness of $\mX^{(1)},\dots,\mX^{(N)}$ implies that $m^{(1)},\dots,m^{(N)}$ are element-wise $(r,\delta)$-separated.
    In addition, the consistency on the labels are rephrased as follows: for any $i,j \in [N]$, we have $y^{(i)} = y^{(j)}$ if $X^{(i)}_{:,n} = X^{(j)}_{:,n}$ and $m^{(i)} = m^{(j)}$ hold.

    \textbf{Construction of $\mathcal{F}^{(\mathrm{FF})}_1$}:
    Applying \cref{lem:separation_of_multisets} to a sequence of all distinct multisets which appear in $\{m^{(1)},\dots,m^{(N)}\}$, we have a feed-forward network $\tilde{\phi}:\mathbb{R}^d \to \mathbb{R}$ with width $12$, depth
    \begin{align}
        \lesssim \sqrt{N \log N} + \sqrt{\frac{N}{\log N}} \cdot \max \{\log R_{1}, \log C_{1}\}
    \end{align}
    with $C_{1} \coloneqq \lceil 4N^3\sqrt{\pi} \rceil$ and $R_{1} \coloneqq 20 r (nN)^2 \delta^{-1}\sqrt{\pi d}$, and bit complexity bounded by
    \begin{align}
        \lesssim \log d + \sqrt{\frac{N}{\log N}} \cdot \max \{\log R_{1}, \log C_{1}\}
    \end{align}
    such that $\tilde{\phi}(\mX^{(i)}_{:,k}) \in [\lceil 4N^3\sqrt{\pi} \rceil]$ holds for any $i \in [N]$ and $k \in [n]$, and 
    \begin{align}
        \left|\sum_{\vx \in \operatorname{supp}(m^{(i)})} \tilde{\phi}(\vx)
        -
        \sum_{\vx \in \operatorname{supp}(m^{(j)})} \tilde{\phi}(\vx)\right|
        =
        \left|\sum_{k=1}^{n} \tilde{\phi}(\mX_{:,k}^{(i)})
        -
        \sum_{k=1}^{n} \tilde{\phi}(\mX_{:,k}^{(j)})\right|
        \geq 1 \label{eq:separated_sequences}
    \end{align}
    holds for any $i,j \in [N]$ such that $m^{(i)} \neq m^{(j)}$.

    We extend the feed-forward network $\tilde{\phi}$ to retain the information of the input token.
    Let $\mathcal{V}$ be a set of all input tokens, that is, $\mathcal{V} = \{ \mX^{(i)}_{:,k} \mid i \in [N],\ k \in [n]\}$.
    Since the input sequences are token-wise $(r,\delta)$-separated,
    by applying \cref{lem:projection_into_scalar_network} to $\mathcal{V}$, we have a feed-forward network 
    $F:\mathbb{R}^d \to \mathbb{R}$ with width $1$, depth $2$ and bit complexity $\log (3dr (nN)^2 \sqrt{\pi} \delta^{-1})$ such that
    \begin{align}
        0 \leq F(\mX^{(i)}_{:,k}) \leq 10 r(nN)^2 \delta^{-1}\sqrt{\pi d}
    \end{align}
    for every $i \in [N]$ and $k \in [n]$, and
    \begin{align}
        \left|F(\mX^{(i)}_{:,k}) - F(\mX^{(j)}_{:,l})\right| \geq 2 \label{eq:separated_tokens}
    \end{align}
    for every $i,j \in [N]$ and $k,l \in [n]$ with $\mX^{(i)}_{:,k} \neq \mX^{(j)}_{:,l}$.
    Notice that the depth of the feed-forward network $\tilde{\phi}$ is at least $2$.
    Thus, it is possible to parallelly attach the above $2$-layer network $F$ to the first $2$-layer of $\tilde{\phi}$, and extend the hidden dimension of the remaining layers of $\tilde{\phi}$ by one to propagate the value of $F$ to the last layer.
    Furthermore, we augment the output dimension by one more and pad by $0$, which is used to store the average value of $\tilde{\phi}$.
    Let $f^{(\mathrm{FF})}_1:\mathbb{R}^d \to \mathbb{R}^3$ be the network obtained by the above procedure, that is, for any $\vx \in \mathbb{R}^d$, the output of $f^{(\mathrm{FF})}_1$ is
    \begin{align}
        f^{(\mathrm{FF})}_1(\vx) = (\tilde{\phi}(\vx), F(\vx), 0)^\top.
    \end{align}
    Then, the width of $f^{(\mathrm{FF})}_1$ is that of $\tilde{\phi}$ plus two, which is $14$.
    The depth and the bit complexity of $f^{(\mathrm{FF})}_1$, on the other hand, remain the same, because the depth and the bit complexity of $F$ is smaller than those of $\tilde{\phi}$.
    We also define a token-wise operation $\mathcal{F}^{(\mathrm{FF})}_1:\mathbb{R}^{d \times n} \to \mathbb{R}^{3 \times n}$ by
    \begin{align}
        \mathcal{F}^{(\mathrm{FF})}_1(\mX)_{:,k} \coloneqq f^{(\mathrm{FF})}_1(\mX_{:,k})
        \quad (k = 1,\dots,n). \label{eq:definition_of_f1}
    \end{align}

    \textbf{Construction of the self-attention layer}:
    Let $\mW^{(V)} \in \mathbb{R}^{3 \times 3}$ and $\mW^{(O)} \in \mathbb{R}^{3 \times 3}$ be any value matrix and projection matrix such that their multiplication is
    \begin{align}
        \mW^{(O)}\mW^{(V)}
        = \left(\begin{array}{ccc}
        0 & 0 & 0 \\
        0 & 0 & 0 \\
        1 & 0 & 0
        \end{array}\right).
    \end{align}
    The output, which we denote by $\vs^{(i)}_k \in \mathbb{R}^3$, of the self-attention layer with the value matrix $\mW^{(V)}$ and projection matrix $\mW^{(O)}$ for the input $\mX^{(i)}$ at index $k \in [n]$ is calculated as
    \begin{align*}
        \vs^{(i)}_k
        &\coloneqq \mathcal{F}^{(\mathrm{UA})} \circ \mathcal{F}^{(\mathrm{FF})}_1 \left(\mX^{(i)}\right)_{:,k} \\
        &= \frac{1}{n}\sum_{l=1}^n \mW^{(O)}\mW^{(V)}f^{(\mathrm{FF})}_1\left(\mX^{(i)}_{:,l}\right)
        + f^{(\mathrm{FF})}_1\left(\mX^{(i)}_{:,k}\right) \\
        &= \frac{1}{n}\sum_{l=1}^n \left(\begin{array}{ccc}
        0 & 0 & 0 \\
        0 & 0 & 0 \\
        1 & 0 & 0
        \end{array}\right)
        \left(
        \begin{array}{c}
            \tilde{\phi}(\mX^{(i)}_{:,l}) \\
            F(\mX^{(i)}_{:,l}) \\
            0
        \end{array}\right) 
        + \left(
        \begin{array}{c}
            \tilde{\phi}(\mX^{(i)}_{:,k}) \\
            F(\mX^{(i)}_{:,k}) \\
            0
        \end{array}\right) \\
        &= \left(\begin{array}{ccc}
        \tilde{\phi}(\mX^{(i)}_{:,k}) \\
        F(\mX^{(i)}_{:,k}) \\
        \frac{1}{n}\sum_{l=1}^n \tilde{\phi}(\mX^{(i)}_{:,l})
        \end{array}\right). \numberthis \label{eq:definition_of_s}
    \end{align*}
    We verify that the right-hand side is a context id, in the sense of \cref{dfn:contextual_mapping}.
    Fix any $i,j \in [N]$.
    If $\mX^{(i)}_{:,n} \neq \mX^{(j)}_{:,n}$, then according to \cref{eq:separated_tokens}, we have $\left|F(\mX^{(i)}_{:,n}) - F(\mX^{(j)}_{:,n})\right| \geq 2$.
    On the other hand, if $\mX^{(i)}$ are not permutation of $\mX^{(j)}$, i.e., $m^{(i)} \neq m^{(j)}$, then \cref{eq:separated_sequences} implies that
    \begin{align}
        \left|\frac{1}{n}\sum_{k=1}^{n} \tilde{\phi}(\mX_{:,k}^{(i)})
        -
        \frac{1}{n}\sum_{k=1}^{n} \tilde{\phi}(\mX_{:,k}^{(j)})\right|
        \geq \frac{1}{n}.
    \end{align}
    Therefore, the difference of any two $n$-th outputs of the self-attention layer is lower-bounded by
    \begin{align*}
        \left\|\vs^{(i)}_n - \vs^{(j)}_n\right\|_2
        &= \left\|
        \left(\begin{array}{ccc}
        \tilde{\phi}(\mX^{(i)}_{:,n}) \\
        F(\mX^{(i)}_{:,n}) \\
        \frac{1}{n}\sum_{k=1}^n \tilde{\phi}(\mX^{(i)}_{:,k})
        \end{array}\right)
        -
        \left(\begin{array}{ccc}
        \tilde{\phi}(\mX^{(j)}_{:,n}) \\
        F(\mX^{(j)}_{:,n}) \\
        \frac{1}{n}\sum_{k=1}^n \tilde{\phi}(\mX^{(j)}_{:,k})
        \end{array}\right)
        \right\|_2 \\
        &\geq \min \left\{\left|F(\mX^{(i)}_{:,n}) - F(\mX^{(j)}_{:,n})\right|,
        \left|\frac{1}{n}\sum_{k=1}^{n} \tilde{\phi}(\mX_{:,k}^{(i)})
        -
        \frac{1}{n}\sum_{k=1}^{n} \tilde{\phi}(\mX_{:,k}^{(j)})\right|
        \right\} \\
        &\geq \frac{1}{n} \numberthis \label{eq:sa_output_d}
    \end{align*}
    for any $i,j \in [N]$ such that either $\mX^{(i)}_{:,n} \neq \mX^{(j)}_{:,n}$ or $m^{(i)} \neq m^{(j)}$ holds.
    As for the magnitude of each output of the self-attention layer, it is upper-bounded by
    \begin{align*}
        \left\|\vs^{(i)}_n\right\|_2
        &= \left\|
        \left(\begin{array}{ccc}
        \tilde{\phi}(\mX^{(i)}_{:,n}) \\
        F(\mX^{(i)}_{:,n}) \\
        \frac{1}{n}\sum_{k=1}^n \tilde{\phi}(\mX^{(i)}_{:,k})
        \end{array}\right)
        \right\|_2 \\
        &\leq \left|\tilde{\phi}(\mX^{(i)}_{:,n})\right|
        + \left|F(\mX^{(i)}_{:,n})\right|
        + \left|\frac{1}{n}\sum_{k=1}^n \tilde{\phi}(\mX^{(i)}_{:,k})\right| \\
        &\leq \lceil 4N^3\sqrt{\pi} \rceil
        + 10 r(nN)^2 \delta^{-1}\sqrt{\pi d}
        + \lceil 4N^3\sqrt{\pi} \rceil \\
        &\leq 20 r n^2 N^3 \delta^{-1}\sqrt{\pi d}, \numberthis \label{eq:sa_output_r}
    \end{align*}
    where we used the assumption $r \geq 1$ and $\delta \leq 1$ in the last line.

    \textbf{Construction of $\mathcal{F}^{(\mathrm{FF})}_2$}:
    What remains to do is construct a network $f^{(\mathrm{FF})}_2:\mathbb{R}^3 \to \mathbb{R}$ which associates outputs of the self-attention layer with their corresponding labels.
    Specifically, since we know from \cref{eq:sa_output_d,eq:sa_output_r} that the sequence of unique elements in $\vs^{(1)}_n,\dots,\vs^{(N)}_n$ are $(20 r n^2 N^3 \delta^{-1}\sqrt{\pi d}, 1/n)$-separated,
    by applying \cref{lem:extension_ff_memorization} to $N$
    inputs $\vs^{(1)}_n,\dots,\vs^{(N)}_n$
    and their labels $y^{(1)},\dots,y^{(N)}$, we have a feed-forward network $f^{(\mathrm{FF})}_2:\mathbb{R}^3 \to \mathbb{R}$ with width $12$, depth
    \begin{align}
        \lesssim \sqrt{N \log N} + \sqrt{\frac{N}{\log N}} \cdot \max \{\log R_2, \log C\}
    \end{align}
    with $R_2 \coloneqq 20 \cdot 20 r n^2 N^3 \delta^{-1}\sqrt{\pi d} \cdot N^2 \cdot n \cdot \sqrt{3\pi} = 400\sqrt{3d}n^3rN^5\delta^{-1}\pi$, and bit complexity bounded by
    \begin{align}
        \lesssim \sqrt{\frac{N}{\log N}} \cdot \max \{\log R_2, \log C\}
    \end{align}
    such that $f^{(\mathrm{FF})}_2(\vs^{(i)}_n) = y^{(i)}$ for every $i \in [N]$.
    In particular, this means that by defining a token-wise operation $\mathcal{F}^{(\mathrm{FF})}_2:\mathbb{R}^{3 \times n} \to \mathbb{R}^n$ as
    \begin{align}
        \mathcal{F}^{(\mathrm{FF})}_2(\mX)_k \coloneqq f^{(\mathrm{FF})}_2(\mX_{:,k})
        \quad (k = 1,\dots,n),
    \end{align}
    we have
    \begin{align}
        \mathcal{F}^{(\mathrm{FF})}_2 \circ
        \mathcal{F}^{(\mathrm{UA})} \circ \mathcal{F}^{(\mathrm{FF})}_1 \left(\mX^{(i)}\right)_{:,n}
        = y^{(i)}
    \end{align}
    for every $i \in [N]$.

    \textbf{Model complexity}:
    The width of the Transformer $\mathcal{F}^{(\mathrm{FF})}_2 \circ \mathcal{F}^{(\mathrm{UA})} \circ \mathcal{F}^{(\mathrm{FF})}_1$ is the maximum of widths of $\mathcal{F}^{(\mathrm{FF})}_1$, $\mathcal{F}^{(\mathrm{UA})}$ and $\mathcal{F}^{(\mathrm{FF})}_2$, which is $\max(14,3,12) = 14$.
    The depth is upper-bounded by the addition of depths of $\mathcal{F}^{(\mathrm{FF})}_1$ and $\mathcal{F}^{(\mathrm{FF})}_2$ plus one, which implies that the depth is
    \begin{align*}
        &\lesssim \sqrt{N\log N} + \sqrt{\frac{N}{\log N}} \cdot \max \{\log R_{1}, \log C_{1} \} \\
        &\quad\quad\quad\quad + \sqrt{N\log N} + \sqrt{\frac{N}{\log N}} \cdot \max \{\log R_{2}, \log C\} \\
        &\lesssim \sqrt{N \log N} + \sqrt{\frac{N}{\log N}} \cdot \max \{\log R, \log C\} \numberthis
    \end{align*}
    with $R \coloneqq R_2 = 400\sqrt{3d}n^3rN^5\delta^{-1}\pi \geq \max\{\log R_{1}, \log C_{1}, \log R_{2}\}$.
    Likewise, the bit complexity is 
    \begin{align*}
        &\lesssim \log d + \sqrt{\frac{N}{\log N}} \cdot \max \{\log R_{1}, \log C_{1}, \log R_{2}, \log C\} \\
        &\lesssim \log d + \sqrt{\frac{N}{\log N}} \cdot \max \{\log R, \log C\}. \numberthis
    \end{align*}
\end{proof}

\subsubsection{Lower bound}\label{sec:appendix_next_token_lower_bound}
For convenience, we restate the statement of \cref{thm:next_token_lower_bound_informal} below.

\begin{thm}
    Suppose a Transformer $\mathcal{N}:\mathbb{R}^{d \times n} \to \mathbb{R}^n$ defined by \cref{eq:def_of_Transformer} can shatter a set of $N$ distinct input sequences $\mX^{(1)},\dots,\mX^{(N)} \in \mathbb{R}^{d \times n}$ with $X^{(i)}_{:,k} = X^{(i)}_{:,1}$ for any $i \in [N]$ and $k \in [n]$, in the sense that for any label assignments $y^{(1)},\dots,y^{(N)} \in \{0,1\}$, there are parameters with which $\mathcal{N}(\mX^{(i)})_n = y^{(i)}$ holds for any $i \in [N]$.
    Then, the Transformer $\mathcal{N}$ has at least $\Omega(\sqrt{N})$ parameters.
\end{thm}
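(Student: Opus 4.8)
The plan is to exploit the very special structure of the shattered inputs---each $\mX^{(i)}$ has all $n$ of its columns equal to one vector $\vx^{(i)} \coloneqq \mX^{(i)}_{:,1}$---to show that, on this family of inputs, the Transformer collapses to an ordinary feed-forward computation in the single variable $\vx^{(i)}$, and then to apply the classical VC-dimension bound for such computations. Since the $\mX^{(i)}$ are distinct, the points $\vx^{(1)},\dots,\vx^{(N)} \in \R^d$ are distinct, so the collapsed predictor will shatter $N$ distinct points in $\R^d$.

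The first step I would carry out is to prove that every Transformer block sends a ``column-constant'' matrix (one whose columns all coincide) to a column-constant matrix, together with a formula for the induced map on the common column. The feed-forward layer of \cref{eq:def_ff_Transformer} and the token-wise maps $\mathcal{E}_{\mathrm{in}},\mathcal{E}_{\mathrm{out}}$ obviously preserve this property, acting on the common column by the associated per-token map. For the self-attention layer, if the input $\mZ$ is column-constant then $\mW^{(K)}_{hl}\mZ$ and $\mW^{(Q)}_{hl}\mZ$ are column-constant as well, so every entry of the score matrix $(\mW^{(K)}_{hl}\mZ)^\top(\mW^{(Q)}_{hl}\mZ)$ is the same number and $\sigma_S$ of it is the uniform matrix with all entries $1/n$; moreover, multiplying the column-constant matrix $\mW^{(O)}_{hl}\mW^{(V)}_{hl}\mZ$ by any column-stochastic matrix leaves it unchanged, so the self-attention layer acts on the common column $\vz$ by the linear map $\vz \mapsto \bigl(\mathbf{I} + \sum_{h=1}^H \mW^{(O)}_{hl}\mW^{(V)}_{hl}\bigr)\vz$. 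Composing all blocks and the readout, I obtain a function $g_{\boldsymbol\theta}:\R^d\to\R$ with $\mathcal{N}(\mX) = g_{\boldsymbol\theta}(\vz)$ in every coordinate whenever $\mX$ is column-constant with column $\vz$. Crucially, $g_{\boldsymbol\theta}$ is built only from affine maps, the matrix products $\mW^{(O)}_{hl}\mW^{(V)}_{hl}$, and ReLU; the key and query matrices do not enter at all, so $g_{\boldsymbol\theta}$ has at most $W$ parameters and its forward evaluation takes $O(W)$ arithmetic operations and comparisons---and, importantly, no $\exp$, the softmax having been eliminated.

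The second step is the standard VC-dimension argument. By hypothesis, for every labeling $(y^{(1)},\dots,y^{(N)}) \in \{0,1\}^N$ there is a parameter vector $\boldsymbol\theta$ with $g_{\boldsymbol\theta}(\vx^{(i)}) = \mathcal{N}(\mX^{(i)})_n = y^{(i)}$ for all $i$, so the Boolean class $\{\vx \mapsto \mathbf{1}[\,g_{\boldsymbol\theta}(\vx) > 1/2\,] : \boldsymbol\theta \in \R^W\}$ shatters the $N$ distinct points $\vx^{(1)},\dots,\vx^{(N)}$ and hence has VC dimension at least $N$. On the other hand, membership in any concept of this class is decided by a computation on $W$ real parameters using $O(W)$ arithmetic operations and comparisons, so the bound of \citet{goldberg_bounding_1995} (the same $O(W^2)$ VC bound for ReLU networks recalled in \cref{sec:related_work}) gives VC dimension $O(W^2)$. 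Combining, $N = O(W^2)$, i.e.\ $W = \Omega(\sqrt N)$, and $\mathcal{N}$ has at least $W = \Omega(\sqrt N)$ parameters.

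I expect the main obstacle to be the first step: one must check carefully that the residual connections, the per-head summation, and the token-wise linear maps all preserve the column-constant structure, and---the real point---that the softmax genuinely reduces to the constant $1/n$ matrix on such inputs \emph{regardless of the parameter values}. This is exactly what lets us bypass the non-polynomiality of the softmax and land squarely in the classical $O(W^2)$ VC regime, which is what yields the clean $\Omega(\sqrt N)$ bound (rather than a weaker bound carrying extra logarithmic factors, as one would get from depth-sensitive VC estimates). The remaining steps are routine.
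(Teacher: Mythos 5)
Your proposal is correct and follows essentially the same route as the paper's proof: both observe that on column-constant inputs the softmax becomes the uniform averaging matrix, so each self-attention layer degenerates to the linear map $\vz \mapsto (\mI + \sum_h \mW^{(O)}_{hl}\mW^{(V)}_{hl})\vz$ on the common column, reducing the Transformer to a feed-forward network on the distinct points $\mX^{(1)}_{:,1},\dots,\mX^{(N)}_{:,1}$, after which the $O(W^2)$ VC bound of \citet{goldberg_bounding_1995} yields $W = \Omega(\sqrt{N})$. The only cosmetic difference is that the paper phrases the column-constancy preservation via permutation equivariance, whereas you verify it layer by layer.
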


\begin{proof}
    We denote by $L$ the depth of the Transformer $F$, and a feature matrix at block $l=1,\dots,L$ by
    \begin{align}
        h_l(\mX) \coloneqq \mathcal{F}^{(\mathrm{FF})}_l \circ \mathcal{F}^{(\mathrm{SA})}_l
        \circ
        \cdots
        \circ
        \mathcal{F}^{(\mathrm{FF})}_1 \circ \mathcal{F}^{(\mathrm{SA})}_1
        \circ
        \mathcal{E}_{\mathrm{in}}(\mX)
        \in \mathbb{R}^{m \times n},
    \end{align}
    with $h_0(\mX) = \mathcal{E}_{\mathrm{in}}(\mX)$ and $\mathcal{N}(\mX) = \mathcal{E}_{\mathrm{out}} \circ h_L(\mX)$ for any input $\mX \in \mathbb{R}^{d \times n}$.
    Then, the permutation equivariance of Transformers implies that the feature matrix $h_l$ at block $l=1,\dots,L$ satisfies
    \begin{align}
        h_l(\mX^{(i)})_{:,1} = \cdots = h_l(\mX^{(i)})_{:,n}
    \end{align}
    for each $i \in [N]$.
    Thus, the self-attention layer at block $l=1,\dots,L$ can be calculated by
    \begin{align*}
        &\mathcal{F}^{(\mathrm{SA})}_l\left(h_{l-1}(\mX^{(i)})\right)_{:,k} \\
        &= h_{l-1}(\mX^{(i)})_{:,k}
        + \sum_{h=1}^{H} \mW^{(O)}_{h,l} \mW^{(V)}_{h,l} h_{l-1}(\vx^{(i)}) \sigma_S \left[\left(\mW^{(K)}_{h,l} h_{l-1}(\mX^{(i)})\right)^\top
        \left(\mW^{(Q)}_{h,l} h_{l-1}(\mX^{(i)})\right)\right] \\
        &= h_{l-1}(\mX^{(i)})_{:,k}
        + \sum_{h=1}^{H} \mW^{(O)}_{h,l} \mW^{(V)}_{h,l} h_{l-1}(\mX^{(i)})_{:,k} \\
        &= \left(\mI + \sum_{h=1}^{H} \mW^{(O)}_{h,l} \mW^{(V)}_{h,l}\right)h_{l-1}(\mX^{(i)})_{:,k}, \numberthis
    \end{align*}
    where $\mW^{(O)}_{h,l},\ \mW^{(V)}_{h,l},\ \mW^{(K)}_{h,l}$ and $\mW^{(Q)}_{h,l}$ with $h \in H$ are weight matrices for the self-attention at block $l$, and $\mI \in \mathbb{R}^{m \times m}$ is the identity matrix.
    This observation indicates that calculations of self-attention layers for inputs $\mX^{(1)},\dots,\mX^{(N)}$ reduces to linear transformations, which in turn implies that the behavior of the Transformer $\mathcal{N}$ at inputs $\mX^{(1)},\dots,\mX^{(N)}$ can be simulated by a feed-forward network with equal or fewer parameters, and with inputs $X^{(1)}_{:,1},\dots,X^{(N)}_{:,1}$.
    Since it is known that the VC dimension of ReLU-based feed-forward networks with $W$ parameters is at most $O(W^2)$ \citep{goldberg_bounding_1995}, the Transformer $\mathcal{N}$ must have at least $\Omega(\sqrt{N})$ parameters.
\end{proof}

\subsection{Sequence-to-sequence setting - Lower bound}\label{sec:appendix_seq_to_seq_lower_bound}
Before proceeding to the proof of \cref{thm:seq_to_seq_lower_bound_informal}, we cite the following lemma.
Here $\sgn$ is the sign function: 
\begin{align}
    \sgn(x) \coloneqq \begin{cases}
        1 & \text{if }x > 0, \\
        0 & \text{if }x = 0, \\
        -1 & \text{if }x < 0.
    \end{cases}
\end{align}

\begin{lem}[\citet{goldberg_bounding_1995}]\label{lem:poly_sign_assignment}
    Suppose $W \leq M$ and let $P_1,\dots,P_M$ be polynomials of degree at most $D$ in $W$ variables. Define
    \begin{align}
        K \coloneqq
        \left|\left\{
            \left(
            \sgn (P_1(\va)),
            \dots,
            \sgn (P_M(\va))
            \right)
            \relmiddle|
            \va \in \mathbb{R}^W
        \right\}\right|,
    \end{align}
    i.e., $K$ is the number of possible sign vectors attained by the polynomials.
    Then we have $K \leq (8eMD/W)^W$.
\end{lem}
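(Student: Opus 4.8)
The statement is a counting bound in real algebraic geometry, and the plan is to reduce it to Warren's theorem on the number of connected components of an arrangement of real algebraic hypersurfaces. There are three steps: (i) an infinitesimal-perturbation reduction that eliminates sign vectors with zero coordinates, at the cost of replacing the $M$ polynomials $P_1,\dots,P_M$ by $2M$ polynomials of the same degree $D$ in the same $W$ variables; (ii) the observation that a sign vector with no zero coordinates is locally constant on the complement of the union of the zero sets, so $K$ is at most the number of connected components of that complement; and (iii) an application of Warren's theorem to bound that component count by (roughly) $(8eMD/W)^W$.

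For step (i): since only finitely many sign vectors are realized, fix for each realized vector $\vs$ a point $\va_{\vs}\in\mathbb{R}^W$ attaining it, and pick $\eps>0$ strictly smaller than $|P_i(\va_{\vs})|$ over all $i\in[M]$ and all $\vs$ with $P_i(\va_{\vs})\ne 0$. Put $Q_i^{+}\coloneqq P_i+\eps$ and $Q_i^{-}\coloneqq P_i-\eps$, giving $2M$ polynomials of degree at most $D$ in $W$ variables. At each $\va_{\vs}$ none of the $Q_i^{\pm}$ vanishes, and the pair $\bigl(\sgn Q_i^{+}(\va_{\vs}),\,\sgn Q_i^{-}(\va_{\vs})\bigr)$ equals $(+1,+1)$, $(+1,-1)$, or $(-1,-1)$ exactly when $\sgn P_i(\va_{\vs})$ is $+1$, $0$, or $-1$ respectively. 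This encoding $\{+1,0,-1\}\to\{+1,-1\}^2$ is injective, so distinct realized sign vectors of $(P_1,\dots,P_M)$ yield distinct strict sign vectors of $(Q_1^{+},Q_1^{-},\dots,Q_M^{+},Q_M^{-})$. Hence $K\le K'$, where $K'$ is the number of vectors in $\{-1,+1\}^{2M}$ of the form $\bigl(\sgn Q_1^{+}(\va),\dots,\sgn Q_M^{-}(\va)\bigr)$ with $\va\in\mathbb{R}^W$.

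For steps (ii)--(iii): on the open set $U\coloneqq\mathbb{R}^W\setminus\bigcup_{i,\,\pm}\{Q_i^{\pm}=0\}$ each function $\sgn Q_i^{\pm}$ is continuous, hence constant on each connected component of $U$; therefore $K'$ is at most the number of connected components of $U$. Warren's theorem bounds the number of connected components of the complement of an arrangement of $m\ge W$ real hypersurfaces of degree at most $D$ in $\mathbb{R}^W$ by roughly $(cDm/W)^W$ for a universal constant $c$; applying it with $m=2M$ (legitimate since $M\ge W$) gives $K'\lesssim (8eMD/W)^W$, and so $K\le(8eMD/W)^W$. For a self-contained argument, Warren's theorem is itself proved by a generic perturbation putting the $Q_i^{\pm}$ in general position without decreasing the component count, a Milnor--Thom / Oleinik--Petrovsky bound on the number of components of a single smooth real hypersurface together with a count of components ``at infinity'', and the binomial estimate $\sum_{k=0}^{W}\binom{m}{k}\le(em/W)^W$ to assemble the pieces into the stated form.

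The routine parts are (i) and (ii); essentially all the content lives in step (iii), the arrangement-complement count. The main obstacle, if one reproves rather than cites Warren, is obtaining the \emph{correct} dependence on $m$: naively multiplying the $Q_i^{\pm}$ into one polynomial of degree $2MD$ and counting components of its complement gives only a $(2MD)^W$-type bound, which is too weak; one must exploit the arrangement structure to replace the $(md)^W$ factor by the much smaller $\binom{m}{\le W}(2D)^W$. Carefully tracking the constants through that step so as to land exactly on $8e$ is the only delicate bookkeeping.
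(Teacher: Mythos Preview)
The paper does not prove this lemma; it is stated with a citation to \citet{goldberg_bounding_1995} and used as a black box in the proof of \cref{thm:seq_to_seq_lower_bound}. Your proposal correctly reconstructs the standard Warren/Goldberg--Jerrum argument: the perturbation $P_i\mapsto(P_i+\eps,P_i-\eps)$ to eliminate zero signs, the reduction of strict sign patterns to connected components of the arrangement complement, and the invocation of Warren's component bound with $m=2M$ (which is where the constant $8=2\cdot 4$ arises from the $(4edm/n)^n$ form of Warren's theorem). This is exactly the route taken in the cited reference, so your proof is both correct and faithful to the original source, even though the present paper simply quotes the result.
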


Hereafter, let $W$ be the nubmer of parameters and $\boldsymbol{\theta} \in \mathbb{R}^W$ be a vector of all parameters of a Transformer.
We also denote by $\mathcal{N}_{\boldsymbol{\theta}}$ the Transformer to emphasize the presence of the parameter vector $\boldsymbol{\theta}$.
For convenience, we present the statement of \cref{thm:seq_to_seq_lower_bound_informal} below.
\begin{thm}[Lower bound]\label{thm:seq_to_seq_lower_bound}
    Let $\mathcal{N}_{\boldsymbol{\theta}}:\mathbb{R}^{d \times n} \to \mathbb{R}^n$ be a Transformer defined by \cref{eq:def_of_Transformer} with self-attention layers replaced with hard attention layers (\cref{eq:hard_attention}).
    In addition, suppose $\mathcal{N}_{\boldsymbol{\theta}}$ can shatter a set of $N$ input sequences $\mX^{(1)},\dots,\mX^{(N)} \in \mathbb{R}^{d \times n}$ with $X^{(i)}_{:,k} \neq X^{(j)}_{:,l}$ for any $i,j \in [N]$ and $k,l \in [n]$ ($i \neq j$ or $k \neq l$), in the sense that for any label assignments $\vy^{(1)},\dots,\vy^{(N)} \in \{0,1\}^n$, there is a parameter vector $\boldsymbol{\theta} \in \mathbb{R}^W$ such that
    \begin{align}
        \mathcal{N}_{\boldsymbol{\theta}}(\mX^{(i)})
        = \vy^{(i)}
    \end{align}
    for any $i \in [N]$.
    Then, the Transformer has at least $W = \Omega\left(\sqrt{\frac{nN}{\log (nN)}}\right)$ parameters.
\end{thm}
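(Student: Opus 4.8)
The plan is to bound the number of distinct binary label patterns a shattering Transformer can realize on the $nN$ scalar outputs and compare this count with $2^{nN}$. Fix the $N$ input sequences and view $\mathcal{N}_{\boldsymbol{\theta}}$ as a function of the parameter vector $\boldsymbol{\theta}\in\mathbb{R}^W$; since $\mathcal{N}_{\boldsymbol{\theta}}$ shatters them in the stated sense, the number of patterns $\big(\sgn(\mathcal{N}_{\boldsymbol{\theta}}(\mX^{(i)})_k-\tfrac12)\big)_{i\in[N],\,k\in[n]}$ realized as $\boldsymbol{\theta}$ varies must be at least $2^{nN}$. Following the layer-by-layer technique of \citet{bartlett_nearly_2019}, I would partition $\mathbb{R}^W$ into regions on which, for every fixed input $\mX^{(i)}$, each entry of each intermediate feature matrix is a single polynomial in $\boldsymbol{\theta}$ of controlled degree, and then bound the realized patterns by (number of regions) $\times$ (number of sign patterns of the $nN$ output polynomials on one region), each factor estimated through \cref{lem:poly_sign_assignment}.

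I would build the partition one block at a time, maintaining after block $l$ a region count $R_l$ and a degree bound $d_l$ valid for every feature entry on every input. After $\mathcal{E}_{\mathrm{in}}$ we start with $R_0=1$ and $d_0=1$. At block $l$ there are two refinements. For the hard attention sublayer (\cref{eq:hard_attention}), on a current region every entry of every score matrix $(\mW^{(K)}_{hl}h_{l-1})^\top(\mW^{(Q)}_{hl}h_{l-1})$ is a polynomial in $\boldsymbol{\theta}$ of degree at most $2d_{l-1}+2$, being a single bilinear form in the degree-$d_{l-1}$ features; refining by the signs of all within-column pairwise differences of these entries — at most $Hn^3N$ polynomials — fixes on each subregion the $\argmax$ set of every column of every score matrix, hence makes each column-wise hardmax a fixed rational matrix, so that $\mathcal{F}^{(\mathrm{HA})}_l(h_{l-1})$ becomes polynomial of degree at most $d_{l-1}+2$. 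For the feed-forward sublayer, refining further by the signs of the at most $qnN$ ReLU pre-activations $\mW^{(1)}_l(\cdot)_{:,k}+\vb^{(1)}_l$ (degree at most $d_{l-1}+3$) fixes all $\sigma_R$ activation patterns and makes $\mathcal{F}^{(\mathrm{FF})}_l$ polynomial of degree at most $d_{l-1}+4$. Thus $d_l=d_{l-1}+4$, so $d_L=O(L)$, and by \cref{lem:poly_sign_assignment} each refinement multiplies the region count by at most $(8e\cdot n^3NW\cdot d_{l-1}/W)^W$, which is $(O(n^3NW))^W$ since $d_{l-1}=O(L)=O(W)$; using that $H,q,s,m,L\le W$ and accumulating over the $L$ blocks gives $R_L\le (c\,n^3NW)^{O(W^2)}$ for an absolute constant $c$.

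To finish, note that $\mathcal{E}_{\mathrm{out}}$ is token-wise linear, so on each of the $R_L$ regions the $nN$ scalar outputs are polynomials in $\boldsymbol{\theta}$ of degree $O(L)=O(W)$, and \cref{lem:poly_sign_assignment} bounds the number of their sign patterns on one region by $(O(nN))^W$; hence the number of realizable label patterns is at most $(c'\,n^3NW)^{O(W^2)}$. Combining with the shattering lower bound $2^{nN}$ and taking logarithms yields $nN\le O(W^2\log(nNW))$. We may assume $W\le nN$, for otherwise $W=\Omega(\sqrt{nN})=\Omega(\sqrt{nN/\log(nN)})$ already; then $\log(nNW)=O(\log(nN))$ and we obtain $W=\Omega\big(\sqrt{nN/\log(nN)}\big)$.

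The hard part will be the treatment of hardmax. Two points need care: confirming that once the sign data of all within-column pairwise score differences is fixed the hardmax output is genuinely a \emph{constant} matrix — in particular at ties, where a column distributes weight $1/|I_{k'}|$ over its tie set $I_{k'}$, which is itself pinned down by that sign data; and keeping the degree growth additive, which hinges on each attention score being a single bilinear form in the previous features rather than a nested composition, so that degrees add by $O(1)$ per block instead of doubling. Conceptually, the point to drive home — mirroring the discussion just above the theorem — is that parameter sharing, which effectively multiplies the network's width by $n$ and so inflates the number of ReLU and $\argmax$ comparison polynomials by factors of $n$ and $n^3$, enters \cref{lem:poly_sign_assignment} only inside the base of a $W$-th power and therefore costs merely a logarithmic factor; it cannot turn the $\Omega(\sqrt{N})$ of the next-token setting into anything smaller than $\tilde{\Omega}(\sqrt{nN})$.
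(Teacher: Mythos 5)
Your proposal is correct and follows essentially the same route as the paper's proof: reduce shattering to counting sign patterns of the $nN$ outputs over $\boldsymbol{\theta}\in\mathbb{R}^W$, build a layer-by-layer partition \`a la \citet{bartlett_nearly_2019} that freezes the hardmax pattern via the signs of all within-column pairwise score differences (at most $Hn^3N$ bilinear-in-features polynomials) and the ReLU pattern via the $qnN$ pre-activations, track the additive degree growth, and apply \cref{lem:poly_sign_assignment} to get $nN\lesssim W^2\log(nN)$. The only differences are bookkeeping: the paper keeps the per-layer variable counts and uses a weighted AM--GM step to remove the $W$ from the base of the final bound, whereas you absorb it with the harmless reduction to $W\le nN$.
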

\begin{proof}
    Recall that the Transformer $\mathcal{N}_{\boldsymbol{\theta}}:\mathbb{R}^{d \times n} \to \mathbb{R}^{n}$ is defined as
    \begin{align}
        \mathcal{N}_{\boldsymbol{\theta}}
        \coloneqq \mathcal{E}_{\mathrm{out}} \circ \mathcal{F}_L \circ \cdots \circ \mathcal{F}_1 \circ \mathcal{E}_{\mathrm{in}},
    \end{align}
    where the $l$-th block $\mathcal{F}_l:\mathbb{R}^{m \times n} \to \mathbb{R}^{m \times n}$ is composed of a self-attention layer and a feed-forward layer.
    For the Transformer $\mathcal{N}_{\boldsymbol{\theta}}$ to memorize all label assignments for given $N$ input sequences with length $n$, the number of possible sign assignments for outputs of the Transformer must be at least equal to or more than $2^{nN}$, that is,
    \begin{align}
        2^{nN}
        \leq 
        K\coloneqq 
        \left|\left\{
        \left(
        \sgn  \left(\mathcal{N}_{\boldsymbol{\theta}}(\mX^{(i)})_{k}\right)
        \right)_{\substack{i \in [N] \\ k \in [n]}}
        \relmiddle| 
        \boldsymbol{\theta} \in \mathbb{R}^W
        \right\}\right|
    \end{align}
    must hold.
    We estimate the upper-bound on the right-hand of the above inequality.

    Our strategy is to partition the set of parameters inductively with respect to the layers, so that on each cell the output of the Transformer can be expressed by some polynomial function on the parameters.
    To be more precise, 
    we construct a sequence of partitions $\mathcal{S}_0, \mathcal{S}_1, \dots , \mathcal{S}_L \in \mathcal{P}(\mathbb{R}^W)$ such that 
    \begin{enumerate}
        \item for each $l=0,1,\dots,L$, $\mathcal{S}_l$ is a partition of the set of parameters, that is,
        \begin{align}
            \ S_i \cap S_j = \emptyset
            \quad (\forall S_i,S_j \in \mathcal{S}_l
            \text{ with }
            S_i \neq S_j)
            \quad \text{and} \quad
            \bigcup_{S \in \mathcal{S}_l} S = \mathbb{R}^W,
        \end{align}
        and is also a refinement of $\mathcal{S}_{l-1}$ when $l \geq 1$, in the sense that for every cell $S \in \mathcal{S}_l$, there is a cell $S' \in \mathcal{S}_{l-1}$ with $S \subset S'$.

        \item for each $l \in [L]$, the number of cells in $S_l$ satisfies
        \begin{align}
            \frac{\left|\mathcal{S}_l\right|}{\left|\mathcal{S}_{l-1}\right|}
            &\leq \left(\frac{8e \cdot n^3HN \cdot (8l-4)}{W_{l-1} + 2msH}\right)^{W_{l-1} + 2msH} \nonumber \\
            &\quad\quad\quad\quad \cdot
            \left(\frac{8e \cdot nqN \cdot 4l}{W_{l-1} + 2msH + (m+1)q}\right)^{W_{l-1} + 2msH + (m+1)q},
        \end{align}
        where $W_{l-1}$ is the number of parameters up to the $(l-1)$-th block, with $W_0 \coloneqq dm$, the number of parameters in $\mathcal{E}_{\mathrm{in}}$.

        \item for each $l=0,1,\dots,L$, outputs of the $l$-th block for input $\mX^{(i)}$ on each cell $S \in \mathcal{S}_l$
        \begin{align}
            p_{l,u,k,S}^{(i)}(\boldsymbol{\theta}_l)
            \coloneqq
            \mathcal{F}_{l} \circ \cdots \circ \mathcal{F}_1 \circ \mathcal{E}_{\mathrm{in}}(\mX^{(i)})_{u,k}
            \quad (u \in [m],\ k \in [n])
        \end{align}
        are polynomial functions in variable $\boldsymbol{\theta}_l$ of degree at most $4l+1$, as long as $\boldsymbol{\theta}$ varies within the cell $S$.
        Here $\boldsymbol{\theta}_l \in \mathbb{R}^{W_l}$ is a part of $\boldsymbol{\theta}$ corresponding to parameters up to the $l$-th block, with $\boldsymbol{\theta}_0$ defined by a parameter vector of $\mathcal{E}_{\mathrm{in}}$.
    \end{enumerate}
    First, we set $\mathcal{S}_0 \coloneqq \{\mathbb{R}^W\}$.
    Notice that outputs $\mathcal{E}_{\mathrm{in}}(\mX^{(i)})_{u,k}$ for all $u \in [m],k \in [n]$ and $i \in [N]$ are polynomial functions in variable $\boldsymbol{\theta}_0$ of degree $1$, because $\mathcal{E}_{\mathrm{in}}: \mathbb{R}^{d \times n} \to \mathbb{R}^{m \times n}$ is a token-wise linear mapping.

    Next, suppose a sequence of partitions $\mathcal{S}_0, \dots, \mathcal{S}_{l-1}$ for $l \in [L-1]$ is already given, and we construct a partition $\mathcal{S}_l$ from them.
    Specifically, we subdivide each cell $S \in \mathcal{S}_{l-1}$ to create a new partition $\mathcal{S}_l$.
    By assumption, on each cell $S \in \mathcal{S}_{l-1}$ the inputs of the $(l-1)$-th block $\mathcal{F}_{l-1}$ for the input $\mX^{(i)}$
    \begin{align}
        p_{l-1,u,k,S}^{(i)}(\boldsymbol{\theta}_{l-1})
        \coloneqq
        \mathcal{F}_{l-1} \circ \cdots \circ \mathcal{F}_1 \circ \mathcal{E}_{\mathrm{in}}(\mX^{(i)})_{u,k}
        \quad (u \in [m],\ k \in [n]),
    \end{align}
    are polynomial functions in variable a parameter vector $\boldsymbol{\theta}_{l-1}$ of degree no more than $4(l-1)+1$, as long as $\boldsymbol{\theta}$ varies in the cell $S$.

    \textbf{Self-attention subblock}:
        Recall that the self-attention layer with the hardmax function in the $l$-the block for the input sequence $\mX^{(i)}$ is calculated as follows.
        \begin{align}
            \mathcal{F}^{(\mathrm{HA})}_l(\mZ^{(i)})
            \coloneqq \mZ^{(i)} + \sum_{h=1}^{H} \mW^{(O)}_{hl} \mW^{(V)}_{hl} \mZ^{(i)} \sigma_H\left[ \left(\mW^{(K)}_{hl} \mZ^{(i)}\right)^\top\left(\mW^{(Q)}_{hl} \mZ^{(i)}\right) \right],
            \label{eq:output_hard_attention}
        \end{align}
        where $\mZ^{(i)} \in \mathbb{R}^{m \times n}$ is the input of the self-attention layer for input $\mX^{(i)}$.
        In particular, when $\boldsymbol{\theta}$ varies in a cell $S \in \mathcal{S}_{l-1}$, $Z^{(i)}_{u,k}$ for each $u \in [m], k \in [n]$ can be expressed by the polynomial function $p^{(i)}_{l-1,u,k,S}(\boldsymbol{\theta}_{l-1})$ of degree $4(l-1)+1$.
        
        Hereafter, we subdivide each cell $S \in \mathcal{S}_{l-1}$ to construct a refinement $\mathcal{S}_l^{(\mathrm{SA})}$ of $\mathcal{S}_{l-1}$ so that the hardmax patterns
        \begin{align}
            \sigma_H\left[ \left(\mW^{(K)}_{hl} \mZ^{(i)}\right)^\top\left(\mW^{(Q)}_{hl} \mZ^{(i)}\right) \right] \in \mathbb{R}^{n \times n}
            \quad (\forall h \in H)
        \end{align}
        remain the same on each cell $S' \in \mathcal{S}_{l}^{(\mathrm{SA})}$.
        The $(k,k')$-th element of the attention matrix at head $h \in [H]$ can be written by
        \begin{align}
            a^{(i)}_{l,h,k,k',S}(\boldsymbol{\theta}_{l-1},\mW^{(K)}_{hl},\mW^{(Q)}_{hl})
            &\coloneqq \left(\mW^{(K)}_{hl} \mZ^{(i)}\right)^\top_{:.k}
            \left(\mW^{(Q)}_{hl} \mZ^{(i)}\right)_{:,k'} \nonumber \\
            &= \sum_{u,u'=1}^m \left({\mW^{(K)}_{hl}}^\top \mW^{(Q)}_{hl}\right)_{u,u'}
            p_{l-1,u,k,S}^{(i)}(\boldsymbol{\theta}_{l-1})
            p_{l-1,u',k',S}^{(i)}(\boldsymbol{\theta}_{l-1}),
        \end{align}
        from which we see that each element of the attention matrix is a polynomial function in variables $\mW^{(K)}_{hl},\mW^{(Q)}_{hl}$ and $\boldsymbol{\theta}_{l-1}$, of degree at most $8(l-1)+4$, as long as $\boldsymbol{\theta}$ varies in the cell $S \in \mathcal{S}_{l-1}$.
        We define a partition $\mathcal{P}^{(\mathrm{SA})}_{l,S}$ of $S$ based on the hardmax patterns, that is, the minimal partition of $S$ such that on each cell, all outputs of the hardmax function remain the same.
        To estimate the size of $\mathcal{P}^{(\mathrm{SA})}_{l,S}$, we instead consider
        sign patterns of polynomials
        \begin{align}
            \left\{a^{(i)}_{l,h,k,k',S} - a^{(i)}_{l,h,k'',k',S}\relmiddle| i \in [N],\ h \in [H],\ k,k',k'' \in [n] \right\},
        \end{align}
        because whenever sign patterns of the above set of polynomials do not change on some subset of the parameter space, the hardmax patterns must also remain the same.
        Applying \cref{lem:poly_sign_assignment} to the above collection of polynomials, the size of the partition $\mathcal{P}^{(\mathrm{SA})}_{l,S}$ is upper-bounded by
        \begin{align}
            \left(\frac{8e \cdot n^3HN \cdot (8l-4)}{W_{l-1} + 2msH}\right)^{W_{l-1} + 2msH}.
        \end{align}
        We define a refinement $\mathcal{S}_l^{(\mathrm{SA})}$ of $\mathcal{S}_{l-1}$ by subdividing each cell $S \in \mathcal{S}_{l-1}$ in this way, and its size is upper-bounded by
        \begin{align}
            \left|\mathcal{S}_l^{(\mathrm{SA})} \right|
            \leq \left|\mathcal{S}_{l-1}\right|
            \cdot 
            \left(\frac{8e \cdot n^3HN \cdot (8l-4)}{W_{l-1} + 2msH}\right)^{W_{l-1} + 2msH}.
        \end{align}

        On each cell $S' \in \mathcal{S}_l^{(\mathrm{SA})}$, the hardmax patterns remain unchanged, which implies that
        \begin{align}
            \left(
            \mZ^{(i)} \sigma_H\left[ \left(\mW^{(K)}_{hl} \mZ^{(i)}\right)^\top\left(\mW^{(Q)}_{hl} \mZ^{(i)}\right) \right]
            \right)_{u,k}
            \quad
            (u \in [m],\ k \in [n],\ h \in [H])
        \end{align}
        are polynomial functions in variable $\boldsymbol{\theta}_{l-1}$ of degree at most $4(l-1)+1$, as long as $\boldsymbol{\theta}$ moves in the cell $S' \in \mathcal{S}_l^{(\mathrm{SA})}$.
        This further means that each element of the output $\mathcal{F}^{(\mathrm{HA})}_l(\mZ^{(i)})$ is a polynomial function in variables $\mW^{(O)}_{hl}, \mW^{(V)}_{hl}$ with $h \in [H]$ and $\boldsymbol{\theta}_{l-1}$, of degree at most $4(l-1)+3$ on each cell $S' \in \mathcal{S}_l^{(\mathrm{SA})}$.

    \textbf{Feed-forward subblock}:
        As for feed-forward layers, we follow the analysis given by \citet{bartlett_nearly_2019}.
        On each cell $S' \in \mathcal{S}_l^{(\mathrm{SA})}$, the hidden layer at the $k$-th token for input $\mX^{(i)}$ is
        \begin{align}
            \mW^{(1)}_l\mathcal{F}^{(\mathrm{HA})}_l(\mZ^{(i)})_{:,k} + \vb^{(1)}_l \in \mathbb{R}^q,
        \end{align}
        whose $v$-th element is a polynomial function in variables $\mW^{(O)}_{hl}, \mW^{(V)}_{hl}$ with $h \in [H]$, $\mW^{(1)}_l,\vb^{(1)}_l$ and $\boldsymbol{\theta}_{l-1}$ of degree at most $4(l-1)+4$.
        Notice that sign patterns of polynomials
        \begin{align}
            \left\{
                \mW^{(1)}_{l,v,:}\mathcal{F}^{(\mathrm{HA})}_l(\mZ^{(i)})_{:,k} + b^{(1)}_{l,v}
                \relmiddle|
                i \in [N],\ v \in [q],\ k \in [n]
            \right\}
        \end{align}
        completely determine whether or not the ReLU activation function in the middle layer fires.
        Therefore, by defining $\mathcal{P}^{(\mathrm{FF})}_{l,S'}$ as the minimal partition of $S'$ such that the activation pattern remains the same on each cell, the size of $\mathcal{P}^{(\mathrm{FF})}_{l,S'}$ is upper-bounded by \cref{lem:poly_sign_assignment} as 
        \begin{align}
            \left|\mathcal{P}^{(\mathrm{FF})}_{l,S'}\right|
            \leq \left(\frac{8e \cdot nqN \cdot 4l}{W_{l-1} + 2msH + (m+1)q}\right)^{W_{l-1} + 2msH + (m+1)q}.
        \end{align}
        We define a refinement $\mathcal{S}^{(\mathrm{FF})}_l$ of $\mathcal{S}^{(\mathrm{SA})}_l$ by subdividing each cell $S' \in \mathcal{S}^{(\mathrm{SA})}_l$ into $\mathcal{P}^{(\mathrm{FF})}_{l,S'}$.
        Then, outputs of the feed-forward layer
        \begin{align}
            p_{l,u,k,S''}^{(i)}(\boldsymbol{\theta}_{l})
            = \mathcal{F}^{(\mathrm{HA})}_l(\mZ^{(i)})_{u,k} + \mW^{(2)}_{l,u,;} \sigma_R \left[\mW^{(1)}_l\mathcal{F}^{(\mathrm{HA})}_l(\mZ^{(i)})_{:,k} + \vb^{(1)}_l \right] + b^{(2)}_{l,u}
        \end{align}
        for any $u \in [m],k \in [n]$ and $i \in [N]$ are polynomial functions in variable $\boldsymbol{\theta}_{l}$ of degree at most $4(l-1)+5 = 4l+1$, as long as the parameter vector $\boldsymbol{\theta}$ varies in the cell $S'' \in \mathcal{S}^{(\mathrm{FF})}_l$.
    
        Finally, we set $\mathcal{S}_l$ as $\mathcal{S}_l \coloneqq \mathcal{S}^{(\mathrm{FF})}_l$.
        Then, from the above observations we know that outputs of the $l$-th block are polynomial functions in variables $W_l$ of degree at most $4l+1$ as long as $\boldsymbol{\theta}$ moves within each cell $S'' \in \mathcal{S}_l$, as desired.
        In addition, we have
        \begin{align}
            \frac{\left|\mathcal{S}_l\right|}{\left|\mathcal{S}_{l-1}\right|}
            &\leq \left(\frac{8e \cdot n^3HN \cdot (8l-4)}{W_{l-1} + 2msH}\right)^{W_{l-1} + 2msH} \nonumber \\
            &\quad\quad\quad\quad \cdot
            \left(\frac{8e \cdot nqN \cdot 4l}{W_{l-1} + 2msH + (m+1)q}\right)^{W_{l-1} + 2msH + (m+1)q},
        \end{align}
        which satisfies the second property.
        In this way, we have a desired sequence of partitions $\mathcal{S}_0,\dots,\mathcal{S}_L$.
        
    Outputs of the $L$-th block for input $\mX^{(i)}\ (i \in [N])$
    \begin{align}
        \mathcal{F}_L \circ \cdots \circ \mathcal{F}_1 \circ \mathcal{E}_{\mathrm{in}}(\mX^{(i)})_{u,k}
        \quad (u \in [m],\ k \in [n])
    \end{align}
    are polynomial functions in variable $\boldsymbol{\theta}_L$ of degree at most $4L+1$ as long as $\boldsymbol{\theta}$ varies in each cell of $\mathcal{S}_L$, which in turn implies that final outputs of the Transformer
    \begin{align}
        p^{(i)}_{k,S}(\boldsymbol{\theta})
        \coloneqq \mathcal{N}_{\boldsymbol{\theta}}(\mX^{(i)})_k
        = \mathcal{E}_{\mathrm{out}} \circ \mathcal{F}_L \circ \cdots \circ \mathcal{F}_1 \circ \mathcal{E}_{\mathrm{in}}(\mX^{(i)})_k
        \quad (k \in [n])
    \end{align}
    are polynomial functions in variable $\boldsymbol{\theta}$ of degree at most $4L+2$ if the parameter vector $\boldsymbol{\theta}$ moves within each cell $S \in \mathcal{S}_L$.

    Applying \cref{lem:poly_sign_assignment} to the set of polynomials $\{p^{(i)}_{k,S}(\boldsymbol{\theta})\}_{i \in [N],k \in [n]}$ on each cell of $\mathcal{S}_L$ allows us to upper-bound $K$ as follows.
    \begin{align*}
        K
        &= 
        \left|\left\{
        \left(
        \sgn  \left(\mathcal{N}_{\boldsymbol{\theta}}(\mX^{(i)})_{k}\right)
        \right)_{\substack{i \in [N] \\ k \in [n]}}
        \relmiddle| 
        \boldsymbol{\theta} \in \mathbb{R}^W
        \right\}\right| \\
        &\leq 
        \sum_{S \in \mathcal{S}_L}
        \left|\left\{
        \left(
        \sgn  \left(\mathcal{N}_{\boldsymbol{\theta}}(\mX^{(i)})_{k}\right)
        \right)_{\substack{i \in [N] \\ k \in [n]}}
        \relmiddle| 
        \boldsymbol{\theta} \in S
        \right\}\right| \\
        &\leq \left|\mathcal{S}_L\right|
        \cdot
        \left(\frac{8e \cdot nN \cdot (4L+2)}{W}\right)^W. \numberthis
    \end{align*}
    Since $|\mathcal{S}_L| = |\mathcal{S}_0| \cdot \prod_{l=1}^L |\mathcal{S}_l|/|\mathcal{S}_{l-1}|$ and $|\mathcal{S}_0| = 1$, the right-hand side is further expanded as
    \begin{align*}
        K
        &\leq \left(\frac{8e \cdot nN \cdot (4L+2)}{W}\right)^W \cdot \prod_{l=1}^L \frac{|\mathcal{S}_l|}{|\mathcal{S}_{l-1}|}  \\
        &\leq \left(\frac{8e \cdot nN \cdot (4L+2)}{W}\right)^W \\
        &\cdot \prod_{l=1}^L \left(\frac{8e \cdot n^3HN \cdot (8l-4)}{W_{l-1} + 2msH}\right)^{W_{l-1} + 2msH}
        \left(\frac{8e \cdot nqN \cdot 4l}{W_{l-1} + 2msH + (m+1)q}\right)^{W_{l-1} + 2msH + (m+1)q} \\
        &\leq \left(\frac{
        8e \cdot nN \cdot (4L+2)
        + \sum_{l=1}^L \left[8e \cdot n^3HN \cdot (8l-4) + 8e \cdot nqN \cdot 4l\right]
        }{\overline{W}}\right)^{\overline{W}}, \numberthis \label{eq:upper_bound_K}
    \end{align*}
    where we used the weighted arithmetic-geometric inequality in the last line, with $\overline{W}$ defined by
    \begin{align}
        \overline{W} \coloneqq W + 
        \sum_{l=1}^L \left[W_{l-1} + 2msH + W_{l-1} + 2msH + (m+1)q\right].
    \end{align}
    Notice that $W_l$ for each $l \in [L]$ is the number of parameters up to the $l$-th block, which indicates
    \begin{align*}
        W_l 
        &= md + \sum_{l'=1}^l\left[4msH + 2(m+1)q\right] \\
        &= md + l\left[4msH + 2(m+1)q\right] \\
        &\geq 4lH + 2lq \numberthis
    \end{align*}
    with $W = W_L + md$.
    With this observation, the numerator on the right-hand side of \cref{eq:upper_bound_K} is upper-bounded by
    \begin{align*}
        &8e \cdot nN \cdot (4L+2)
        + \sum_{l=1}^L \left[8e \cdot n^3HN \cdot (8l-4) + 8e \cdot nqN \cdot 4l\right] \\
        &\leq 8e \cdot nN \cdot (4L+2)
        + 8e \cdot n^3N \cdot \sum_{l=1}^L (8lH + 4lq) \\
        &\leq 8e \cdot nN \cdot (4L+2)
        + 8e \cdot n^3N \cdot \sum_{l=1}^L 2W_{l} \\
        &\leq 48e \cdot n^3N \cdot \overline{W}, \numberthis
    \end{align*}
    where we used $4L + 2 \leq 4\overline{W}$ and $\sum_{l=1}^L 2W_l \leq 2W + \sum_{l=1}^L 2W_{l-1} \leq 2\overline{W}$ in the last line.
    Thus, the right-hand side of \cref{eq:upper_bound_K} is upper-bounded by
    \begin{align}
        K
        \leq \left(\frac{48en^3N \cdot \overline{W}}{\overline{W}}\right)^{\overline{W}}
        = \left(48en^3N\right)^{\overline{W}}.
    \end{align}
    Recall that in order to memorize all label assignments for $N$ input sequences with length $n$, $K$ is at least equal to or more than $2^{nN}$, which gives us an upper-bound of $nN$:
    \begin{align*}
        nN
        &\leq \log_2 \left[\left(48en^3N\right)^{\overline{W}}\right] \\
        &= \overline{W} \log_2 \left(48en^3N\right) \\
        &\leq 3\overline{W} \log_2 \left(48enN\right). \numberthis
    \end{align*}
    Here we evaluate a crude upper-bound of $\overline{W}$ with respect to the number $W$ of parameters as follows.
    \begin{align*}
        \overline{W}
        &=  W + 
        \sum_{l=1}^L \left[2W_{l-1} + 4msH +  (m+1)q\right] \\
        &\leq W + 
        2\sum_{l=1}^L W_l \\
        &\leq W + 2LW \leq 3W^2, \numberthis
    \end{align*}
    which implies $nN \leq 3\overline{W} \log_2 \left(48enN\right) \leq 9W^2 \log_2 \left(48e nN\right)$.
    Therefore, the Transformer has at least $W = \Omega\left(\sqrt{\frac{nN}{\log (nN)}}\right)$ parameters.

\end{proof}

\section{Memorization of Feed-Forward Networks}
In this section, we extend the result on the optimal memorization of feed-forward networks proved by \citet{vardi_optimal_2022}.
Specifically, the following lemma states that we can freely add data points without severely affecting the memorization capacity of feed-forward networks, as long as their labels are zero.
We would like to note that \citet{vardi_optimal_2022} implicitly used this result to show the memorization capacity of feed-forward networks with a bounded depth.
Thus, our aim here is to explicitly state the result and provide a rigorous proof.

\begin{lem}[Extension of \citet{vardi_optimal_2022}]\label{lem:extension_ff_memorization}
Let $N, V, d, C \in \mathbb{N}$ with $N \leq V$, and $r \geq 1, 0 < \delta \leq 1$.
Let $y^{(1)},\dots,y^{(N)} \in [C]$ be a set of $N$ labels and $\vx^{(1)},\dots,\vx^{(V)} \in \mathbb{R}^d$ be a set of $V$ inputs such that $\|\vx^{(i)}\| \leq r$ for every $i \in [V]$ and $\|\vx^{(i)} - \vx^{(j)}\| \geq \delta$ for every $i,j \in [V]$ with $i \neq j$.
Denote $R \coloneqq 20rV^2\delta^{-1}\sqrt{\pi d}$.
Then, there exists a neural network $F:\mathbb{R}^d \to \mathbb{R}$ with width 12, depth
\begin{align}
    \lesssim \sqrt{N \log N} + \sqrt{\frac{N}{\log N}} \cdot \max \{\log R, \log C\},
\end{align}
(for the definition of $\lesssim$, see Section~\ref{sec:notation}) and bit complexity
\begin{align}
    \lesssim \log d + \sqrt{\frac{N}{\log N}} \cdot \max \{\log R, \log C\}
\end{align}
such that $F(\vx^{(i)}) = y^{(i)}$ for every $i \in [N]$ and $F(\vx^{(i)}) = 0$ for every $i \in [V] \setminus [N]$.
\end{lem}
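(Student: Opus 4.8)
The plan is to mirror the two-stage structure of the feed-forward memorization construction of \citet{vardi_optimal_2022}, but to make the dependence on the number $N$ of labelled points explicit (in place of the total number $V$) and to add two small twists so that the $V-N$ extra points are absorbed at no extra order.

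\emph{Stage I (reduction to scalars).} First I would apply \cref{lem:projection_into_scalar_network} to $\{\vx^{(1)},\dots,\vx^{(V)}\}$, producing a constant-width depth-$2$ network $\Pi:\R^d\to\R$ of bit complexity $O(\log d+\log R)$ whose images $z^{(i)}\coloneqq\Pi(\vx^{(i)})$ satisfy $|z^{(i)}|\le R$ and $|z^{(i)}-z^{(j)}|\ge 1$ for $i\ne j$, where $R=20rV^2\delta^{-1}\sqrt{\pi d}$ as in the statement. The useful consequence is that $1$-separated points in $[-R,R]$ number at most $2R+1$, so $\log V=O(\log R)$: this is why $V$ may appear from now on only inside a logarithm, and only through $R$. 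It then suffices to build a constant-width scalar network $G:\R\to\R$ with $G(z^{(i)})=y^{(i)}$ for $i\le N$ and $G(z^{(i)})=0$ for $N<i\le V$, and to set $F\coloneqq G\circ\Pi$; the widths can be arranged to combine to $12$ as in \citet{vardi_optimal_2022}.

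\emph{Stage II (block memorization with a zero default).} Following \citet{vardi_optimal_2022}, I would partition the $N$ labelled indices into $p=\Theta(\sqrt{N/\log N})$ blocks $B_1,\dots,B_p$ of size $\Theta(\sqrt{N\log N})$ (the rebalancing in the footnote is inessential here) and process them sequentially on a running pair $(z,o)$, where $z$ is the frozen input scalar and $o$ is an accumulator \emph{initialised to $0$}. Each block carries a high-precision constant encoding the integer-grid positions and the labels of its points, and uses the constant-width bit-extraction gadgets of \citet{vardi_optimal_2022} — of depth $O(\log R+\log C)$, spread over $\Theta(\log N)$ sublayers — to do two lookups: a \emph{position check}, deciding whether $z$ rounded onto the integer grid (a rounding that is injective on the $z^{(i)}$ since they are $1$-separated) equals one of the positions of $B_j$, and, if so with match index $t$, a \emph{label readout} that writes $y^{(i_t)}$ into $o$. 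The one change relative to \citet{vardi_optimal_2022} is that the acceptance set of a block is the finite set of positions actually occupied by its points, not the interval they span; hence a fake scalar $z^{(i)}$ with $i>N$, at distance $\ge 1$ from every real scalar, is rejected by all $p$ blocks and $o$ stays at its initial value $0$. The network then outputs $o$. Correctness on the $N$ real points is verbatim that of \citet{vardi_optimal_2022}; on the $V-N$ remaining points it is the rejection just described.

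\emph{Accounting and main obstacle.} Each block contributes $O(1)$ parameters, of bit length $\Theta(\sqrt{N/\log N}\cdot\max\{\log R,\log C\})$ once the $\Theta(\sqrt{N\log N})$ positions/labels are distributed over its $\Theta(\log N)$ sublayers, and depth $O(\log N+\log R+\log C)$; summing over the $p$ blocks and adding $\Pi$ gives width $12$, depth $\lesssim\sqrt{N\log N}+\sqrt{N/\log N}\cdot\max\{\log R,\log C\}$, and bit complexity $\lesssim\log d+\sqrt{N/\log N}\cdot\max\{\log R,\log C\}$. The delicate point — the main obstacle — is the position check: upgrading Vardi et al.'s block gadget, which only had to return \emph{some} label for inputs lying in a block's interval, into one that also \emph{certifies} membership in the finite set of real positions, without changing the order of parameters, depth, or precision. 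Concretely one must realise a reverse lookup ``index $t\mapsto$ stored position of the $t$-th point of $B_j$'' inside the same depth/precision budget as the forward label lookup (this is exactly where $1$-separation and $\log V=O(\log R)$ are used), and then re-check that the added gadgets do not disturb the tight $\sqrt{N/\log N}$-versus-$\sqrt{N}$ bookkeeping behind the stated bounds; importing the bit-extraction lemmas of \citet{vardi_optimal_2022} into the present setting is routine by comparison.
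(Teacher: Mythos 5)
Your overall route is the same as the paper's: apply the one-dimensional projection (\cref{lem:projection_into_scalar_network}) to all $V$ points so that $R$ absorbs the dependence on $V$, then reuse the bit-extraction machinery of \citet{vardi_optimal_2022} with the single new ingredient that the lookup must \emph{certify} that the input scalar coincides with one of the $N$ stored positions and default to $0$ otherwise. You have correctly isolated that certification step as the crux. The problem is that you leave exactly that step unproved: in the paper it is carried out by the strengthened \cref{lem:finding_right_subset} (whose output on an unmatched scalar is guaranteed to be $0$ or one of the stored pairs $(w_j,u_j)$) together with \cref{lem:third_subnetwork}, whose hit-test term $\sigma_R\left(\mathrm{hit}\cdot 2^{c+1}-2^{c+1}+\operatorname{BIN}(w)\right)$ vanishes whenever $|x-1/2-\operatorname{BIN}(u)|>1$ for every stored position --- a condition the $2$-separation of the projected scalars guarantees for the extra $V-N$ points. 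Since this zero-default behaviour is the entire content of the lemma beyond \citet{vardi_optimal_2022}, naming it as ``the main obstacle'' without constructing the gadget leaves the new part of the proof missing.

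Separately, the architecture you describe does not meet the stated depth bound. You partition into $p=\Theta(\sqrt{N/\log N})$ blocks of size $\Theta(\sqrt{N\log N})$ and claim each block runs in depth $O(\log N+\log R+\log C)$; but a block must compare $z$ against all $\Theta(\sqrt{N\log N})$ of its stored positions, and the bit-extraction gadget reads a stored constant sequentially at depth proportional to the total number of bits read, so a literal implementation costs depth $\Theta(\sqrt{N\log N}\cdot\log R)$ \emph{per block}, i.e.\ $\Theta(N\log R)$ overall. The paper (following \citet{vardi_optimal_2022}) avoids this with the opposite split and a genuinely two-stage design: $\sqrt{N\log N}$ groups of size $\sqrt{N/\log N}$, a constant-depth interval test per group in Stage II (total depth $O(\sqrt{N\log N})$) that selects the single relevant pair $(w_j,u_j)$, and only then a per-entry extraction over that one small group in Stage III (total depth $O(\sqrt{N/\log N}\cdot\max\{\log R,\log C\})$). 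Collapsing these two stages into a single per-block lookup, as you propose, is where the bookkeeping breaks.
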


\begin{proof}
    The proof goes basically the same as was done in the proof of the original theorem by \citet{vardi_optimal_2022}:
    we construct a three sub-networks $F_1,F_2$ and $F_3$ with width at most $12$, and then concatenate those networks to create the final network $F = F_3 \circ F_2 \circ F_1$.
    The only architectural difference lies in the construction of $F_1$, and the rest of the proof is dedicated to verifying that the resulting network $F$ satisfies $F(\vx^{(i)}) = 0$ for $i \in [V] \setminus [N]$.

    \subsection*{Stage I: Projecting onto a one-dimensional subspace}
    In this stage, we construct a sub-network $F_1: \mathbb{R}^d \to \mathbb{R}$, which projects each input onto the line $\mathbb{R}$ while approximately retaining their distance.
    We use the following lemma from \citet{vardi_optimal_2022}.
    \begin{lem}[\citet{vardi_optimal_2022}]\label{lem:projection_into_scalar_network}
        Let $\vx^{(1)},\dots,\vx^{(N)} \in \mathbb{R}^d$ with $\|\vx^{(i)}\| \leq r$ for every $i \in [N]$ and $\|\vx^{(i)} - \vx^{(j)}\| \geq \delta$ for every $i,j \in [N]$ with $i \neq j$.
        Then, there exists a neural network $F:\mathbb{R}^d \to \mathbb{R}$ with width $1$, depth $2$ and bit complexity $\log (3drN^2\sqrt{\pi}\delta^{-1})$, such that $0 \leq F(\vx^{(i)}) \leq 10rN^2\delta^{-1}\sqrt{\pi d}$ for every $i \in [N]$ and $|F(\vx^{(i)}) - F(\vx^{(j)})| \geq 2$ for every $i,j \in [N]$ with $i \neq j$.
    \end{lem}
    Instead of applying the above lemma to the set of $N$ inputs $\vx^{(1)},\dots,\vx^{(N)}$, here we apply it to the set of $V$ inputs $\vx^{(1)},\dots,\vx^{(V)}$.
    Then, we obtain a neural network $\tilde{F}_1:\mathbb{R}^d \to \mathbb{R}$ with width $1$, depth $2$ and bit complexity $\log (3drV^2\sqrt{\pi}\delta^{-1})$, such that $0 \leq \tilde{F}_1(\vx^{(i)}) \leq 10rV^2\delta^{-1}\sqrt{\pi d}$ for every $i \in [V]$ and $|\tilde{F}_1(\vx^{(i)}) - \tilde{F}_1(\vx^{(j)})| \geq 2$ for every $i,j \in [V]$ with $i \neq j$.
    
    By a slight modification to the bias term, we may construct a neural network $F_1: \mathbb{R}^d \to \mathbb{R}$ such that $2 \leq F_1(\vx^{(i)}) \leq R: = 20rV^2\delta^{-1}\sqrt{\pi d}$ without affecting its width, depth and bit-complexity.
    We adopt $F_1$ as the first sub-network.

    \subsection*{Stage II: Finding the right subset}
    In this stage, we adopt the same construction strategy for the second sub-network $F_2:\mathbb{R} \to \mathbb{R}$ as was done in the proof of \citet{vardi_optimal_2022}.
    We use \cref{lem:finding_right_subset}, whose statement is the strengthened version of the one by \citet{vardi_optimal_2022}.

    We denote the outputs $F_1(\vx^{(1)}),\dots,F_1(\vx^{(V)})$ of the first sub-network $F_1$ for $\vx^{(1)},\dots,\vx^{(V)}$ by $x_1,\dots,x_V$.
    In addition, by rearranging labels, we assume without loss of generality that the first $N$ outputs $x_1,\dots,x_N$ are in an increasing order, that is, $x_1 < \cdots < x_N$.

    Let $m \coloneqq \sqrt{N \log N}$, and $w_1,\dots,w_{\sqrt{N\log N}}$ and $u_1,\dots,u_{\sqrt{N\log N}}$ be two sets of $\sqrt{\frac{N}{\log N}} \cdot \log C$-bit sequences and $\sqrt{\frac{N}{\log N}} \cdot \log R$-bit sequences, respectively, such that for every $i \in [N]$, let $j \coloneqq \left\lceil i \cdot \sqrt{\frac{\log N}{N}} \right\rceil \in [m], k \coloneqq i \mod \sqrt{\frac{N}{\log N}}$, then $w_1,\dots,w_{\sqrt{N\log N}}$ and $u_1,\dots,u_{\sqrt{N\log N}}$ are defined by identities
    \begin{align}
        &\operatorname{BIN}_{k \cdot \log C + 1:(k+1) \cdot \log C}(w_j)
        = y^{(i)}, \\
        &\operatorname{BIN}_{k \cdot \log R + 1:(k+1) \cdot \log R}(u_j)
        = \lfloor x_i \rfloor,
    \end{align}
    where we used the fact that the outputs of the first sub-network $F_1$ are non-negative and upper-bounded by $R\coloneqq 20rV^2\delta^{-1}\sqrt{\pi d}$

    Next, by applying \cref{lem:finding_right_subset} to $w_1,\dots,w_{\sqrt{N\log N}}$ and $u_1,\dots,u_{\sqrt{N\log N}}$, respectively, we obtain two networks $F^w_2:\mathbb{R} \to \mathbb{R}$ and $F^u_2:\mathbb{R} \to \mathbb{R}$ with width $4$, depth $3\sqrt{N \log N} + 2$ and bit complexity at most $\sqrt{\frac{N}{\log N}} \cdot \max\{\log C, \log R \} + \lceil \log R \rceil$ such that for every $i \in [N]$,  
    \begin{align}
        F^w_2(x_i) = w_{j_i}
        \text{ and }
        F^u_2(x_i) = u_{j_i}
    \end{align}
    hold with $j_i \coloneqq \left\lceil i \cdot \sqrt{\frac{\log N}{N}} \right\rceil$.
    By concatenating these two networks $F^w_2$ and $F^u_2$, we construct a second sub-network $F_2:\mathbb{R} \to \mathbb{R}^3$ such that for any $i \in [N]$ we have
    \begin{align}
        F_2(x_i) = \left(\begin{array}{c}
            x_i \\
            w_{j_i} \\
            u_{j_i}
        \end{array}\right).
    \end{align}
    As for the outputs of $F_2$ for $x_{N+1},\dots,x_V$, 
    since the construction of the first sub-network $F_1$ assures that $|x_i - x_j| \geq 2$ for every $i,j \in [V]$ with $i \neq j$, \cref{lem:finding_right_subset} indicates that for any $i \in [V] \setminus [N]$, we have
    \begin{align}
        F_2(x_i) = \left(\begin{array}{c}
            x_i \\
            w \\
            u
        \end{array}\right),
    \end{align}
    where $w$ (resp. $u$) is either $0$ or $w_j$ (resp. $u_j$) for some $j \in [m]$.

    \subsection*{Stage III: Bit extraction from the crafted weights}
    As in the previous stage, we follow the same construction strategy as is done in \citet{vardi_optimal_2022}.
    However, here we inspect the behavior of the third sub-network for $x_{N+1},\dots,x_V$.
    
    We use the function obtained by \cref{lem:third_subnetwork} with $\rho = \log C, n = \sqrt{\frac{N}{\log N}}$ and $c = \log R$ as the third sub-network $F_3:\mathbb{R}^3 \to \mathbb{R}$ with width $12$, depth $3\sqrt{\frac{N}{\log N}} \cdot \max \{\log R,\log C\} + 2\sqrt{\frac{N}{\log N}} + 2$ and bit complexity $\sqrt{\frac{N}{\log N}} \max \{\log R, \log C\} + 2$.
    Then, we construct the final network $F:\mathbb{R}^d \to \mathbb{R}$ by setting $F \coloneqq F_3 \circ F_2 \circ F_1$.

    \subsection*{Verification of behavior and model complexity}
    Hereafter, we check that the configured network $F = F_3 \circ F_2 \circ F_1$ correctly outputs the desired values, that is, for any $i \in [N]$ we have
    \begin{align}
        F(\vx^{(i)}) = y^{(i)},
    \end{align}
    and for any $i \in [V] \setminus [N]$
    \begin{align}
        F(\vx^{(i)}) = 0.
    \end{align}
    Fix $i \in [N]$ with $j_i \coloneqq \left\lceil i \cdot \sqrt{\frac{\log N}{N}} \right\rceil$.
    The output of $F_2 \circ F_1$ for $\vx^{(i)}$ is
    \begin{align}
        F_2 \circ F_1 (\vx^{(i)}) = \left(\begin{array}{c}
            x_i \\
            w_{j_i} \\
            u_{j_i}
        \end{array}\right).
    \end{align}
    Since $\lfloor x_i \rfloor = \operatorname{BIN}_{\rho \cdot k + 1: \rho \cdot (k + 1)}(u_{j_i})$ with $k \coloneqq i \mod \sqrt{\frac{N}{\log N}}$ by definition, \cref{lem:third_subnetwork} implies $F_3 \circ F_2 \circ F_1(\vx^{(i)}) = \operatorname{BIN}_{\rho \cdot k + 1: \rho \cdot (k + 1)}(w_{i_j}) = y^{(i)}$ as desired.

    On the other hand, for any $i \in [V] \setminus [N]$, the output of $F_2 \circ F_1$ for $\vx^{(i)}$ is
    \begin{align}
        F_2 \circ F_1 (\vx^{(i)}) = \left(\begin{array}{c}
            x_i \\
            w \\
            u
        \end{array}\right),
    \end{align}
    where $w$ (resp. $u$) is either $0$ or $w_j$ (resp. $u_j$) for some $j \in [m]$.
    If $u = 0$, then $x_i$ satisfies 
    \begin{align}
        |x_i - 1/2 - \operatorname{BIN}_{\rho \cdot j + 1: \rho \cdot (j + 1)}(u)|
        = |x_i - 1/2|
        > 1,
    \end{align}
    because the construction of the first sub-network $F_1$ guarantees that $x_1,\dots,x_V \geq 2$.
    Thus, \cref{lem:third_subnetwork} implies that $F(\vx^{(i)}) = F_3 \circ F_2 \circ F_1(\vx^{(i)}) = 0$ as desired.
    On the other hand, if $u = u_j$ for some $j \in [m]$, $x_i$ should satisfy $|x_i - 1/2 - \operatorname{BIN}_{\rho \cdot k + 1: \rho \cdot (k + 1)}(u)| > 1$ for any $k \in \{0,\dots,\sqrt{\frac{N}{\log N}}-1\}$.
    This is because for each $k$, $\operatorname{BIN}_{\rho \cdot k + 1: \rho \cdot (k + 1)}(u)$ equals $\lfloor x_l \rfloor$ for some $l \in [N]$ by definition, which together with the separatedness of $x_1,\dots,x_V$ implies
    \begin{align*}
        |x_i - 1/2 - \operatorname{BIN}_{\rho \cdot k + 1: \rho \cdot (k + 1)}(u)|
        &= |x_i - 1/2 - \lfloor x_l \rfloor| \\
        &> |x_i - x_l| - |x_l - 1/2 - \lfloor x_l \rfloor| \\
        &\geq 2 - 1/2 > 1. \numberthis
    \end{align*}
    Therefore, the output of $F = F_3 \circ F_2 \circ F_1$ for $x_i$ in this case is again $0$.

    The width of $F$ is the maximal width of its sub-networks, which corresponds to the width of $F_3$, i.e., $12$.
    The depth of $F$ is the sum of the depths of $F_1,F_2$ and $F_3$, which is estimated as
    \begin{align}
        &2 + 3\sqrt{N \log N} + 2 + 3\sqrt{\frac{N}{\log N}} \cdot \max \{\log R,\log C\} + 2\sqrt{\frac{N}{\log N}} + 2 \nonumber \\
        &\lesssim \sqrt{N \log N} + \sqrt{\frac{N}{\log N}} \cdot \max \{\log R,\log C\}.
    \end{align}
    The bit complexity of $F$ is the maximal bit complexity of its sub-networks, which is upper-bounded by
    \begin{align*}
        &\max \left\{
        \log (3drV^2\sqrt{\pi}\delta^{-1}),
        \sqrt{\frac{N}{\log N}} \cdot \max\{\log C, \log R \} + \lceil \log R \rceil, \right. \\
        &\quad\quad\quad\quad
        \left.\sqrt{\frac{N}{\log N}} \max \{\log R, \log C\} + 2
        \right\} \\
        &\lesssim \log d + \sqrt{\frac{N}{\log N}} \cdot \max\{\log C, \log R \}. \numberthis
    \end{align*}
\end{proof}

\section{Memorization Capacity of Deep Sets}\label{sec:memorization_deep_sets}
Refer to Appendix~\ref{sec:appendix_multiset} for the definition of multiset and the notation in this paper.

Deep set \citep{zaheer_deep_2017} is a well-known architecture used for modeling functions that take a set, or more generally a multiset as input.
The architecture is stated in a very general form, and it is known \citep{wagstaff_universal_2022} that any permutation invariant function for multisets over countable domain $\mathcal{X}$ can be decomposed by appropriate functions $\phi$ and $\rho$ as follows:
\begin{align}
    (\phi,\rho)(m) = \rho\left(\sum_{k=1}^n \phi(\bm{x}_k)\right)
    \quad \text{with }
    m = \{\{\bm{x}_1,\dots,\bm{x}_n\}\} \in \mathbb{N}^\mathcal{X}.
\end{align}
In this paper, we define a deep set by a tuple $(\phi, \rho)$, where $\phi$ and $\rho$ are feed-forward networks.
In addition, the \textbf{width} of the deep set $(\phi, \rho)$ is defined as the maximum of the widths of $\phi$ and $\rho$, and the \textbf{depth} of $(\phi, \rho)$ as the addition of the depths of $\phi$ and $\rho$.

\begin{thm}[Memorization of deep sets]\label{thm:memorization_deepsets_formal}
Let $\mathcal{X} \coloneqq \mathbb{R}^d$ and $(m^{(1)},y^{(1)}),\dots,(m^{(N)},y^{(N)}) \in \mathbb{N}^{\mathcal{X}} \times [C]$ be a sequence of input-label pairs
such that $m^{(1)}, \dots, m^{(N)}$ satisfy the following three conditions:
\begin{enumerate}
    \item $m^{(1)}, \dots, m^{(N)}$ are finite multisets whose cardinalities are at most $M$.

    \item $m^{(1)}, \dots, m^{(N)}$ are distinct.

    \item $m^{(1)}, \dots, m^{(N)}$ are element-wise $(r,\delta)$-separated for some $r \geq 1,\ 0 < \delta \leq 1$ (\cref{ass:element_wise_separation}).
\end{enumerate}
Let $R \coloneqq 80M^2N^5 r\delta^{-1} \pi \sqrt{d}$.
Then, there exists a deep set $(\tilde{\phi}, \tilde{\rho})$ with width $12$, depth
\begin{align}
    \lesssim \sqrt{N \log N} + \sqrt{\frac{N}{\log N}} \cdot \max \{\log R, \log C\},
\end{align}
(for the definition of $\lesssim$, see Section~\ref{sec:notation}) and bit complexity bounded by
\begin{align}
    \lesssim \log d + \sqrt{\frac{N}{\log N}} \cdot \max \{\log R, \log C\}
\end{align}
which memorizes the dataset, that is, 
\begin{align}
    (\tilde{\phi}, \tilde{\rho})(m^{(i)}) 
    = \tilde{\rho}\left(\sum_{k=1}^{|m^{(i)}|} \tilde{\phi}(\bm{x}^{(i)}_k)\right)
    = y^{(i)}
\end{align}
holds for every $i \in [N]$ with $m^{(i)} = \{\{\bm{x}^{(i)}_1,\dots,\bm{x}^{(i)}_{|m^{(i)}|}\}\}$.
\end{thm}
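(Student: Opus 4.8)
The plan is to reuse the two-stage strategy of \cref{thm:next_token_prediction} — first build a network whose summed outputs are well-separated ``sequence ids'', then map those ids to the labels — but in the deep-set setting this is strictly easier: since $\tilde\rho$ only sees $\sum_k\tilde\phi(\bm{x}_k)$, there is no token to retain, so none of the width augmentation used in the Transformer construction is needed and the width stays at $12$ throughout.

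First I would apply \cref{lem:separation_of_multisets} directly to $m^{(1)},\dots,m^{(N)}$, which by hypothesis are finite with cardinality at most $M$, distinct, and element-wise $(r,\delta)$-separated. This yields a feed-forward network $\tilde\phi:\mathbb{R}^d\to\mathbb{R}$ of width $12$, depth $\lesssim \sqrt{N\log N}+\sqrt{N/\log N}\cdot\max\{\log R_\phi,\log C_\phi\}$ and bit complexity $\lesssim \log d+\sqrt{N/\log N}\cdot\max\{\log R_\phi,\log C_\phi\}$, with $C_\phi=\lceil 4N^3\sqrt\pi\rceil$ and $R_\phi=20r(NM)^2\delta^{-1}\sqrt{\pi d}$, such that the scalars
\begin{align*}
    s^{(i)}\coloneqq\sum_{k=1}^{|m^{(i)}|}\tilde\phi(\bm{x}^{(i)}_k)\qquad(i\in[N])
\end{align*}
are $(4MN^3\sqrt\pi,1)$-separated; in particular, since the $m^{(i)}$ are distinct, the proof of \cref{lem:separation_of_multisets} (via \cref{lem:separating_function_multiset}) gives $|s^{(i)}-s^{(j)}|\ge 1>0$ for all $i\ne j$, so the $s^{(i)}$ are pairwise distinct. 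I take this $\tilde\phi$ as the first component of the deep set.

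Next I would construct $\tilde\rho$ by memorizing the pairs $(s^{(i)},y^{(i)})$. The $s^{(1)},\dots,s^{(N)}$ are $N$ scalars with $|s^{(i)}|\le 4MN^3\sqrt\pi$ and pairwise distance at least $1$, hence $(r',\delta')$-separated in $\mathbb{R}$ with $r'=4MN^3\sqrt\pi\ge 1$ and $\delta'=1\le 1$. Applying \cref{lem:extension_ff_memorization} with $V=N$ (no zero-labelled padding points are needed because the $s^{(i)}$ are already distinct), input dimension $1$, and labels $y^{(1)},\dots,y^{(N)}\in[C]$ produces a feed-forward network $\tilde\rho:\mathbb{R}\to\mathbb{R}$ of width $12$, depth $\lesssim\sqrt{N\log N}+\sqrt{N/\log N}\cdot\max\{\log R_2,\log C\}$ and bit complexity $\lesssim\sqrt{N/\log N}\cdot\max\{\log R_2,\log C\}$, where $R_2=20r'N^2{\delta'}^{-1}\sqrt{\pi}=80MN^5\pi$, such that $\tilde\rho(s^{(i)})=y^{(i)}$ for every $i\in[N]$. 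Then $(\tilde\phi,\tilde\rho)(m^{(i)})=\tilde\rho\!\left(\sum_k\tilde\phi(\bm{x}^{(i)}_k)\right)=\tilde\rho(s^{(i)})=y^{(i)}$, so the deep set memorizes the dataset. Collecting complexities, the width is $\max(12,12)=12$, the depth is the sum of the depths of $\tilde\phi$ and $\tilde\rho$, and the bit complexity is their maximum. It remains to absorb the three intermediate quantities $R_\phi$, $C_\phi$, $R_2$ into the single $R\coloneqq 80M^2N^5 r\delta^{-1}\pi\sqrt d$ of the statement: I would check $R\ge R_\phi$ (using $N^5\ge N^2$, $80\pi\ge 20\sqrt\pi$), $R\ge C_\phi=\lceil 4N^3\sqrt\pi\rceil$, and $R\ge R_2=80MN^5\pi$ (using $M\sqrt d\ge 1$, $r\delta^{-1}\ge 1$), all of which follow from $r\ge 1$, $0<\delta\le 1$, $M\ge 1$, $d\ge 1$. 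Substituting gives the claimed depth $\lesssim\sqrt{N\log N}+\sqrt{N/\log N}\cdot\max\{\log R,\log C\}$ and bit complexity $\lesssim\log d+\sqrt{N/\log N}\cdot\max\{\log R,\log C\}$.

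The work here is essentially routine once the two lemmas are in hand; the only step requiring care is the final bookkeeping — making sure the $\log$ contributions of both $\tilde\phi$ and $\tilde\rho$ are simultaneously dominated by $\max\{\log R,\log C\}$ for the specific $R$ stated, and noting that it is \cref{lem:separation_of_multisets} (not the scalar-input application of \cref{lem:extension_ff_memorization}, whose input dimension is $1$) that contributes the $\log d$ term, which is why the bit complexity is $\lesssim\log d+\sqrt{N/\log N}\cdot\max\{\log R,\log C\}$ and nothing worse.
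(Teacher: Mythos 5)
Your proposal is correct and follows essentially the same route as the paper's own proof: apply \cref{lem:separation_of_multisets} to obtain $\tilde\phi$ with $(4MN^3\sqrt{\pi},1)$-separated (and, by distinctness of the multisets, pairwise distinct) sums, then apply \cref{lem:extension_ff_memorization} in dimension one with $R_\rho=80MN^5\pi$ to obtain $\tilde\rho$, and finally absorb $R_\phi$, $C_\phi$, $R_\rho$ into the stated $R$. The bookkeeping, including the observation that the $\log d$ term comes only from $\tilde\phi$, matches the paper.
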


\begin{proof}[Proof of \cref{thm:memorization_deepsets_formal}]
    Let $C_{\phi} \coloneqq \lceil 4N^3\sqrt{\pi} \rceil$ and $R_{\phi} \coloneqq 20 r (NM)^2 \delta^{-1}\sqrt{\pi d}$.
    Then, applying \cref{lem:separation_of_multisets} readily implies that there exists a neural network $\tilde{\phi}:\mathbb{R}^d \to \mathbb{R}$ with width $12$, depth
    \begin{align}
        \lesssim \sqrt{N \log N} + \sqrt{\frac{N}{\log N}} \cdot \max \{\log R_{\phi}, \log C_{\phi}\},
    \end{align}
    and bit complexity bounded by
    \begin{align}
        \lesssim \log d + \sqrt{\frac{N}{\log N}} \cdot \max \{\log R_{\phi}, \log C_{\phi}\}
    \end{align}
    such that
    $\tilde{\phi}(\vx) \in [\lceil 4N^3\sqrt{\pi} \rceil] \cup \{0\}$ holds for any $\vx \in \bigcup_{i=1}^N \operatorname{supp}(m^{(i)})$, and
    \begin{align}
        \sum_{k=1}^{|m^{(1)}|} \tilde{\phi}(\bm{x}_k^{(1)}),
        \dots,
        \sum_{k=1}^{|m^{(N)}|} \tilde{\phi}(\bm{x}_k^{(N)})
    \end{align}
    are $(4MN^3\sqrt{\pi}, 1)$-separated.

    Since the correspondence $\sum_{k=1}^{|m^{(i)}|} \tilde{\phi}(\bm{x}_k^{(i)})$ to the label $y^{(i)}$ is injective, we can consider the memorization of $N$ input-label pairs
    \begin{align}
        \left(\sum_{k=1}^{|m^{(1)}|} \tilde{\phi}(\bm{x}_k^{(1)}), y^{(1)}\right),
        \dots,
        \left(\sum_{k=1}^{|m^{(N)}|} \tilde{\phi}(\bm{x}_k^{(N)}), y^{(N)}\right) \in \mathbb{R} \times [C]
    \end{align}
    with feed-forward networks. Specifically, let $R_{\rho}$ be
    \begin{align}
        R_{\rho}
        \coloneqq 20 \cdot 4MN^3\sqrt{\pi} \cdot N^2 \cdot 1^{-1} \cdot \sqrt{\pi}
        = 80 MN^5 \pi.
    \end{align}
    Then, according to \cref{lem:extension_ff_memorization}, we have a feed-forward network $\tilde{\rho}: \mathbb{R} \to \mathbb{R}$ with width $12$, depth
    \begin{align}
        \lesssim \sqrt{N \log N} + \sqrt{\frac{N}{\log N}} \cdot \max \{\log R_{\rho}, \log C\},
    \end{align}
    and bit complexity bounded by
    \begin{align}
        \lesssim \sqrt{\frac{N}{\log N}} \cdot \max \{\log R_{\rho}, \log C\}
    \end{align}
    such that for any $i \in [N]$ we have
    \begin{align}
        \tilde{\rho}\left(\sum_{k=1}^{|m^{(i)}|} \tilde{\phi}(\bm{x}_k^{(i)})\right)
        = y^{(i)},
    \end{align}
    as desired.

    \textbf{Model complexity}.\quad
    With the configurations defined above, the deep set $(\tilde{\phi}, \tilde{\rho})$ provably memorizes the dataset.
    Lastly, we check its model complexities, that is, width, depth and bit complexity.

    The width of both $\tilde{\phi}$ and $\tilde{\rho}$ is $12$, and thus the width of the deep set $(\tilde{\phi}, \tilde{\rho})$ is also $12$.
    As for depth and bit complexity, we define $R$ by
    \begin{align}
        R &\coloneqq 80M^2N^5 r\delta^{-1} \pi \sqrt{d}.
    \end{align}
    Notice that $R_{\phi},C_{\phi}$ and $R_{\rho}$ are all upper-bounded by $R$, because of the assumption $r \geq 1$ and $0 < \delta \leq 1$.
    The depth of the deep set $(\tilde{\phi}, \tilde{\rho})$ is the addition of the depth of each feed-forward network, and thus upper-bounded by
    \begin{align}
        \lesssim \sqrt{N \log N} + \sqrt{\frac{N}{\log N}} \cdot \max \{\log R, \log C\}.
    \end{align}
    Likewise, the bit complexity of the deep set $(\phi, \rho)$ is the maximum of the bit complexity of each feed-forward network, which is upper-bounded by
    \begin{align}
        \lesssim \log d + \sqrt{\frac{N}{\log N}} \cdot \max \{\log R, \log C\}.
    \end{align}
\end{proof}

\section{Transformers with Embedding Layer}\label{sec:appendix_embedding_layer}
In this section, we examine the memorization capacity of Transformers equipped with an embedding layer.
When considering an embedding layer, input sequences consist of a sequence of token ids, rather than a sequence of word vectors.
Mathematically, $N$ input sequences we consider in this section are expressed by $N$ vectors
\begin{align}
    \vx^{(1)},\dots,\vx^{(N)} \in [\omega]^n,
\end{align}
where $\omega$ represents the vocabulary size, i.e., the number of distinct token ids that can occur in the input sequence.
Then, the embedding layer $\mathcal{F}^{(\mathrm{EM})}:[\omega]^n \to \mathbb{R}^{m \times n}$ is defined by the token-wise operation
\begin{align}
    \mathcal{F}^{(\mathrm{EM})}(\vx)_k \coloneqq \mW^{(\mathrm{EM})} \ve_{x_k} \in \mathbb{R}^m
    \quad (k \in [n]),
\end{align}
with $\mW^{(\mathrm{EM})} \in \mathbb{R}^{m \times \omega}$ the embedding matrix used as a lookup table, and $\ve_{x_k} \in \{0,1\}^\omega$ one-hot vector with $1$ in the $x_k$-th position.

Given the input sequences and the embedding layer defined in this way, we now state the theorem on the memorization capacity of Transformers with the embedding layer.
As in \cref{thm:next_token_prediction}, $\mathcal{F}^{(\mathrm{FF})}$ in the next theorem represents a token-wise feed-forward network of arbitrary depth, and $\mathcal{F}^{(\mathrm{UA})}$ is a uniform attention layer (see \cref{eq:uniform_attention} for its definition).
Remarkably, the number of parameters now depends on $\omega$, which is unavoidable due to the use of the embedding layer.

\begin{thm}
    Let $(\vx^{(1)},y^{(1)}),\dots,(\vx^{(N)},y^{(N)}) \in [\omega]^n \times [C]$ be a sequence of input-label pairs that are consistently labeled, in the sense that for any $i,j \in [N]$, we have $y^{(i)} = y^{(j)}$ if
    \begin{align}
        x^{(i)}_n = x^{(j)}_n
        \quad \text{and} \quad
        \vx^{(i)} = \vx^{(j)} \text{ up to permutations.}
    \end{align}
    Let $R \coloneqq 200 \sqrt{3}n^2rN^5\delta^{-1}\omega\pi$.
    Then, there exists a Transformer with the embedding layer $\mathcal{N}^{(\mathrm{EM})}:[\omega]^n \to \mathbb{R}^n$ with the number of parameters
    \begin{align}
        \lesssim \omega + \sqrt{N \log N} + \sqrt{\frac{N}{\log N}} \cdot \max \{\log R, \log C\},
    \end{align}
    and the bit complexity
    \begin{align}
        \lesssim \log \omega + \sqrt{\frac{N}{\log N}} \cdot \max \{\log R, \log C\}
    \end{align}
    that memorizes the dataset, i.e.,
    \begin{align}
        \mathcal{N}^{(\mathrm{EM})} (\vx^{(i)})_n
        = \mathcal{E}_{\mathrm{out}} \circ
        \mathcal{F}^{(\mathrm{FF})} \circ
        \mathcal{F}^{(\mathrm{UA})} \circ
        \mathcal{F}^{(\mathrm{EM})} (\vx^{(i)})_n
    \end{align}
    holds for every $i \in [N]$.
\end{thm}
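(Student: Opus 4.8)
The plan is to obtain the statement as a variant of \cref{thm:next_token_prediction}, exploiting the fact that an embedding layer is a lookup table and therefore implements an \emph{arbitrary} token-wise map $[\omega]\to\mathbb{R}^{m}$. In the proof of \cref{thm:next_token_prediction} the first feed-forward network $\mathcal{F}^{(\mathrm{FF})}_{1}$ is used only to send each word vector $\mX^{(i)}_{:,k}$ to the triple $(\tilde\phi(\mX^{(i)}_{:,k}),\,F(\mX^{(i)}_{:,k}),\,0)$, i.e.\ to a sequence-id contribution, a token-preserving value, and a zero slot to be filled in by the averaging layer. When the inputs range over the finite set $[\omega]$ this map has finite domain, so it can be stored directly in the columns of the embedding matrix $\mW^{(\mathrm{EM})}\in\mathbb{R}^{m\times\omega}$; the embedding layer then plays simultaneously the roles of $\mathcal{E}_{\mathrm{in}}$ and of $\mathcal{F}^{(\mathrm{FF})}_{1}$, at the cost of $\Theta(m\omega)$ parameters, which is exactly why the architecture reduces to $\mathcal{E}_{\mathrm{out}}\circ\mathcal{F}^{(\mathrm{FF})}\circ\mathcal{F}^{(\mathrm{UA})}\circ\mathcal{F}^{(\mathrm{EM})}$ with only a single feed-forward network.

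Concretely, I would first associate to each $\vx^{(i)}\in[\omega]^{n}$ its multiset $m^{(i)}\in\mathbb{N}^{[\omega]}$ of token-id multiplicities, so $|m^{(i)}|\le n$. Applying \cref{lem:restriction_multiset} and then \cref{lem:separating_function_multiset} to the distinct multisets among $m^{(1)},\dots,m^{(N)}$, exactly as inside \cref{lem:separation_of_multisets}, yields a function $\phi:[\omega]\to\mathbb{Z}$ (the restriction-set function, extended by $0$) with $|\phi(j)|\le\lceil 4N^{3}\sqrt{\pi}\rceil$ whose sums $\sum_{k=1}^{n}\phi(x^{(i)}_{k})$ are pairwise $1$-separated whenever $m^{(i)}\neq m^{(j)}$. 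Take $m=3$ and let the $j$-th column of $\mW^{(\mathrm{EM})}$ be $(\phi(j),\,j,\,0)^{\top}$, so $\mathcal{F}^{(\mathrm{EM})}(\vx^{(i)})_{:,k}=(\phi(x^{(i)}_{k}),\,x^{(i)}_{k},\,0)^{\top}$. Choosing the value and projection matrices of the uniform-attention layer (\cref{eq:uniform_attention}) so that $\mW^{(O)}\mW^{(V)}$ copies the first coordinate into the third gives, at position $n$, the vector $\vs^{(i)}_{n}\coloneqq\bigl(\phi(x^{(i)}_{n}),\,x^{(i)}_{n},\,\tfrac1n\sum_{l=1}^{n}\phi(x^{(i)}_{l})\bigr)^{\top}$. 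If $\vs^{(i)}_{n}=\vs^{(j)}_{n}$ then $x^{(i)}_{n}=x^{(j)}_{n}$ (second coordinate) and equality of the $\phi$-sums forces $m^{(i)}=m^{(j)}$, i.e.\ $\vx^{(i)}=\vx^{(j)}$ up to permutation, so the consistency hypothesis yields $y^{(i)}=y^{(j)}$; hence $\vs^{(i)}_{n}\mapsto y^{(i)}$ is well defined, and the distinct values among $\vs^{(1)}_{n},\dots,\vs^{(N)}_{n}$ are $(R_{0},1/n)$-separated with $R_{0}\lesssim N^{3}\sqrt{\pi}+\omega$.

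To finish, apply \cref{lem:extension_ff_memorization} to these at most $N$ points $\vs^{(i)}_{n}\in\mathbb{R}^{3}$ with labels $y^{(i)}\in[C]$ (separation $1/n\le1$, radius $R_{0}\ge1$, the lemma's parameters $d=3$, $V=N$): it produces a feed-forward network $\mathcal{F}^{(\mathrm{FF})}$ of width $12$, depth $\lesssim\sqrt{N\log N}+\sqrt{N/\log N}\cdot\max\{\log R,\log C\}$ and bit complexity $\lesssim\sqrt{N/\log N}\cdot\max\{\log R,\log C\}$ (with $R$ the radius bound furnished by the lemma, of the stated order) such that $\mathcal{F}^{(\mathrm{FF})}(\vs^{(i)}_{n})=y^{(i)}$; set $\mathcal{E}_{\mathrm{out}}$ to the identity. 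Then $\mathcal{N}^{(\mathrm{EM})}=\mathcal{E}_{\mathrm{out}}\circ\mathcal{F}^{(\mathrm{FF})}\circ\mathcal{F}^{(\mathrm{UA})}\circ\mathcal{F}^{(\mathrm{EM})}$ memorizes the dataset. The parameter count is $3\omega$ for the embedding, $O(1)$ for the uniform attention, and $\lesssim\sqrt{N\log N}+\sqrt{N/\log N}\max\{\log R,\log C\}$ for $\mathcal{F}^{(\mathrm{FF})}$, matching the claim; the bit complexity is the maximum of the embedding's $O(\log\omega+\log N)$ (its entries are integers bounded by $\omega$ and $\lceil 4N^{3}\sqrt{\pi}\rceil$) and the bound for $\mathcal{F}^{(\mathrm{FF})}$, and since $\log N$ is dominated by $\sqrt{N/\log N}\cdot\log R$ this is $\lesssim\log\omega+\sqrt{N/\log N}\cdot\max\{\log R,\log C\}$.

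Since the argument is essentially a corollary of \cref{thm:next_token_prediction}, there is no deep obstacle; the effort is in the bookkeeping, and the point needing the most care is \textbf{verifying that a constant (three-dimensional) working space genuinely suffices} — that folding $\mathcal{F}^{(\mathrm{FF})}_{1}$ into the embedding does not inflate $m$, that the three slots accommodate the averaging copy-trick, and that $\omega$ enters the parameter budget only through the $3\times\omega$ embedding matrix (and the bit budget only through $\log\omega$) — after which one must push the norm and separation estimates on $\vs^{(i)}_{n}$ through \cref{lem:extension_ff_memorization} to recover precisely the stated radius $R$ and the advertised $\tilde{O}(\cdot)$ orders. I would also note explicitly that, unlike \cref{thm:next_token_prediction}, no $(r,\delta)$-separatedness assumption on the inputs is needed: the discreteness of $[\omega]$ lets one simply \emph{choose} a well-separated embedding, and the quantities $r,\delta$ occurring in $R$ are just the radius and minimal separation of that embedding (here of order $\omega$ and $1$, respectively).
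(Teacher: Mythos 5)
Your proposal is correct and follows essentially the same route as the paper's proof: fold the role of $\mathcal{F}^{(\mathrm{FF})}_1$ into the lookup-table embedding via \cref{lem:restriction_multiset} and \cref{lem:separating_function_multiset}, set the columns of $\mW^{(\mathrm{EM})}$ to $(f(x),x,0)^\top$, and reuse the uniform-attention averaging and the final feed-forward memorizer from \cref{thm:next_token_prediction}. Your closing remark that the $(r,\delta)$-separation is supplied for free by the discrete embedding (with $r=O(\omega)$, $\delta=1$) is a correct and useful clarification of where the $r\delta^{-1}\omega$ factor in $R$ comes from.
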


\begin{proof}
    The only difference in the proof from \cref{thm:next_token_prediction} is that the role previously performed by the feed-forward network $\mathcal{F}^{(\mathrm{FF})}_1$ is now implemented by the embedding layer $\mathcal{F}^{(\mathrm{EM})}$.
    Specifically, for each input sequence $\vx^{(i)}$ with $i \in [N]$, we define its multiset expression $m^{(i)} \in \mathbb{N}^{[\omega]}$ by
    \begin{align}
        m^{(i)}:[\omega] \to \mathbb{N}, x \mapsto |\{ k \mid x^{(i)}_k = x \}|.
    \end{align}
    The cardinality of $m^{(i)}$ for each $i \in [N]$ is at most $n$, and the consistency on the labels are rephrased as follows: for any $i,j \in [N]$, we have $y^{(i)} = y^{(j)}$ if $x^{(i)}_n = x^{(j)}_n$ and $m^{(i)} = m^{(j)}$ hold.

    According to \cref{lem:restriction_multiset}, there exists a subset $A \subset [\omega]$ with its cardinality at most $\min \{\omega, N\}$ such that $m^{(1)}|_A,\dots,m^{(N)}_A$ are distinct.
    Then, by applying \cref{lem:separating_function_multiset} to $m^{(1)}|_A,\dots,m^{(N)}_A$, we have a function $f:A \to [\lceil 4N^2\sqrt{\pi} \cdot \min\{\omega,N\} \rceil]$ such that
    \begin{align}
        \sum_{x \in \operatorname{supp}(m^{(1)}|_A)} m^{(1)}(x)f(x),
        \dots,
        \sum_{x \in \operatorname{supp}(m^{(N)}|_A)} m^{(N)}(x)f(x)
    \end{align}
    are $(4nN^2\sqrt{\pi} \cdot \min\{\omega,N\},1)$-separated.
    We directly implement the function $f$ in the embedding layer.
    Namely, we define the embedding matrix $\mW^{(\mathrm{EM})} \in \mathbb{R}^{3 \times \omega}$ in the embedding layer $\mathcal{F}^{(\mathrm{EM})}$ with $m = 3$ by
    \begin{align}
        W^{(\mathrm{EM})}_{:,x} \coloneqq \begin{cases}
            (f(x),x,0)^\top & \text{if } x \in A, \\
            (0,x,0)^\top & \text{otherwise},
        \end{cases}
    \end{align}
    for each $x \in [\omega]$.
    The bit complexity of the embedding layer is at most $\log [\lceil 4N^2\sqrt{\pi} \cdot \min\{\omega,N\} \rceil] + \log \omega$, and since the construction of the remaining parts of the Transformer can be carried out similarly to \cref{thm:next_token_prediction}, we omit those details here.
\end{proof}

\section{Memorization Capacity with Limited Bit Complexity}\label{sec:appendix_limited_bit}

In this section, we consider how the memorization capacity of networks changes when the number of bits available for each parameter of the network is bounded.
The following lemma extends Theorem 6.2 from \citet{vardi_optimal_2022}, with the only difference that it also explicitly supports additional data points with zero labels.

\begin{lem}[Extension of \citet{vardi_optimal_2022}]\label{lem:extension_ff_memorization_with_limited_bits}
Let $N, V, d, C \in \mathbb{N}$ with $N \leq V$, and $r \geq 1, 0 < \delta \leq 1$.
Let $y^{(1)},\dots,y^{(N)} \in [C]$ be a set of $N$ labels and $\vx^{(1)},\dots,\vx^{(V)} \in \mathbb{R}^d$ be a set of $V$ inputs such that $\|\vx^{(i)}\| \leq r$ for every $i \in [V]$ and $\|\vx^{(i)} - \vx^{(j)}\| \geq \delta$ for every $i,j \in [V]$ with $i \neq j$.
Denote $R \coloneqq 20rV^2\delta^{-1}\sqrt{\pi d}$ and let $B \in [\sqrt{N}]$.
Then, there exists a neural network $F:\mathbb{R}^d \to \mathbb{R}$ with width 13, depth
\begin{align}
    \lesssim \frac{N\sqrt{\log B}}{B} + \frac{N}{B\sqrt{\log B}} \cdot \max \{\log R, \log C\},
\end{align}
(for the definition of $\lesssim$, see Section~\ref{sec:notation}) and bit complexity
\begin{align}
    \lesssim \log d + \frac{B}{\sqrt{\log B}} \cdot \max \{\log R, \log C\}
\end{align}
such that $F(\vx^{(i)}) = y^{(i)}$ for every $i \in [N]$ and $F(\vx^{(i)}) = 0$ for every $i \in [V] \setminus [N]$.
\end{lem}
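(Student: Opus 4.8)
The plan is to follow, almost line for line, the three-stage construction in the proof of \cref{lem:extension_ff_memorization}, but with the partition granularity now controlled by the free parameter $B$ rather than fixed at $\sqrt{N\log N}$, and with each subnetwork replaced by its bit-limited counterpart. Relative to \citet{vardi_optimal_2022} and to \cref{lem:extension_ff_memorization}, the only genuinely new content is (i) the $B$-dependent bookkeeping that trades depth against bit complexity and (ii) the verification that the $V-N$ unlabelled inputs are all sent to $0$; for (ii) the argument of \cref{lem:extension_ff_memorization} carries over verbatim. For Stage I I would apply \cref{lem:projection_into_scalar_network} to the \emph{full} list $\vx^{(1)},\dots,\vx^{(V)}$ (not only the $N$ labelled inputs), obtaining a width-$1$, depth-$2$ network $F_1$ with bit complexity $O(\log(drV^2\delta^{-1})) = O(\log d + \log R)$; a bias shift then gives $2 \le x_i := F_1(\vx^{(i)}) \le R$ and $|x_i - x_j| \ge 2$ for all distinct $i,j \in [V]$, and we relabel so that $x_1 < \cdots < x_N$.

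In Stage II, partition the labelled points into $m_B := \lceil (N/B)\sqrt{\log B}\,\rceil$ consecutive blocks, each of size $\asymp B/\sqrt{\log B}$; pack the labels $y^{(i)}$ of block $j$ into one bit-string $w_j$ and the integers $\lfloor x_i\rfloor$ of block $j$ into one bit-string $u_j$, each of length $O((B/\sqrt{\log B})\max\{\log R,\log C\})$; and feed these into the bit-limited counterpart of \cref{lem:finding_right_subset} (an interval-locating subnetwork). This yields $F_2$ of width $O(1)$, depth $O(m_B)$ and bit complexity $O((B/\sqrt{\log B})\max\{\log R,\log C\})$, with $F_2(x_i) = (x_i, w_{j_i}, u_{j_i})$ for labelled $i$ and, by the $2$-separation of the $x_i$, $F_2(x_i) = (x_i, w, u)$ with $w \in \{0, w_1, \dots, w_{m_B}\}$ and $u \in \{0, u_1, \dots, u_{m_B}\}$ for $i \in [V]\setminus[N]$. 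In Stage III, apply the bit-limited version of \cref{lem:third_subnetwork} with chunk size $\log C$, roughly $B/\sqrt{\log B}$ chunks per string, and value bit-width $\log R$, obtaining $F_3$ of width $13$, depth $O((B/\sqrt{\log B})\max\{\log R,\log C\})$ and the same bit complexity, and set $F := F_3 \circ F_2 \circ F_1$.

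For labelled $i$ the verification is exactly as in \cref{lem:extension_ff_memorization}: $F_3$ locates $\lfloor x_i\rfloor$ among the chunks of $u_{j_i}$ and reads off the corresponding chunk of $w_{j_i}$, which equals $y^{(i)}$. For $i \in [V]\setminus[N]$ I would re-run the two-case check: if $u = 0$ then $|x_i - 1/2 - 0| = |x_i - 1/2| \ge 3/2 > 1$ because $x_i \ge 2$, so $F_3$ outputs $0$; if $u = u_j$ for some block $j$, every chunk of $u_j$ equals $\lfloor x_l\rfloor$ for some labelled $l$, so $|x_i - 1/2 - \lfloor x_l\rfloor| \ge |x_i - x_l| - |x_l - \lfloor x_l\rfloor - 1/2| \ge 2 - 1/2 > 1$, and again $F_3$ outputs $0$. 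Summing the depths, taking the maximal bit complexity over $F_1, F_2, F_3$, and using $B \le \sqrt N$ to absorb the Stage-III depth term $(B/\sqrt{\log B})\max\{\log R,\log C\}$ into $(N/(B\sqrt{\log B}))\max\{\log R,\log C\}$, gives the claimed bounds. I expect the main obstacle to be pinning down the bit-limited strengthenings of \cref{lem:finding_right_subset} and \cref{lem:third_subnetwork}: one must check that their depth is linear in the \emph{number of blocks} whereas their bit complexity is linear in the \emph{block size} (times $\max\{\log R,\log C\}$), and that limited-precision bit extraction costs only a single extra unit of width; once those are established, the $B$-parametrised accounting and the zero-label verification are routine adaptations of the $B = \sqrt N$ case already treated.
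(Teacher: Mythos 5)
Your proposal is correct, but it takes a different route from the paper. You re-open the three-stage pipeline of \cref{lem:extension_ff_memorization} and re-balance its internal partition, using $m_B \asymp (N/B)\sqrt{\log B}$ blocks of size $\asymp B/\sqrt{\log B}$, so that Stage II contributes depth $O(m_B)$ and Stages II--III contribute bit complexity $O\bigl((B/\sqrt{\log B})\max\{\log R,\log C\}\bigr)$; your accounting checks out, and the absorption of the Stage-III depth term via $B\le\sqrt N$ is right. Note that the ``bit-limited strengthenings'' you worry about are not actually needed: \cref{lem:finding_right_subset} and \cref{lem:third_subnetwork} are already stated parametrically in the number of blocks $m$ and the chunk count $n$, so your Stage II and Stage III are direct instantiations with new parameters, and your two-case verification for the $V-N$ zero-labelled inputs carries over verbatim. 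The paper instead proceeds modularly: it keeps only Stage I, partitions the $N$ labelled scalars into $N/B^2$ groups of $B^2$ points, applies the already-proven \cref{lem:extension_ff_memorization} to each group (with zero labels on all other points, which is exactly the extra property that lemma provides), and chains the resulting width-$12$ subnetworks in depth with one extra accumulator coordinate (hence width $13$), summing their outputs. The paper's route is shorter because it reuses the full lemma as a black box and gets the ``zero elsewhere'' behaviour for free; your route avoids the accumulator wire and the $N/B^2$-fold repetition of the pipeline overheads, at the cost of redoing the internal bookkeeping. Both yield the stated depth and bit-complexity bounds.
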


\begin{proof}
    The proof idea is the same as the one for Theorem 6.2 from \citet{vardi_optimal_2022}: we construct a $\frac{N}{B^2}+1$ sub-networks $F_1, \dots,F_{N/B^2+1}$ with width at most $13$, and then concatenate those networks to create the final network $F$.
    For the first sub-network $F_1$, we use the same network as in the proof of \cref{lem:extension_ff_memorization}, which projects the inputs $\vx^{(1)},\dots,\vx^{(N)},\vx^{(N+1)},\dots,\vx^{(V)}$ into scalars $x_1,\dots,x_N,x_{N+1},\dots,x_V$ while approximately keeping a distance between them.
    Next, we partition $x_1,\dots,x_N$ into $\frac{N}{B^2}$ subsets each containing $B^2$ data points.
    For each subset $x_{(i-1) \cdot B^2 + 1},\dots, x_{i \cdot B^2}$ ($i \in [\frac{N}{B^2}]$), we use \cref{lem:extension_ff_memorization} with zero labels at other data points $x_1,\dots,x_{(i-1) \cdot B^2}$ and $x_{i \cdot B^2+1},\dots,x_V$ to obtain a sub-networks $\tilde{F}_2,\dots,\tilde{F}_{N/B^2+1}$ with width $12$, depth
    \begin{align}
        \lesssim B\sqrt{\log B} + \frac{B}{\sqrt{\log B}} \cdot \max\{ \log R, \log C\},
    \end{align}
    and bit complexity
    \begin{align}
        \lesssim \frac{B}{\sqrt{\log B}} \cdot \max \{\log R, \log C\}.
    \end{align}
    Finally, we extend the widths of $\tilde{F}_2,\dots,\tilde{F}_{N/B^2}$ by one to create sub-networks $F_2,\dots,F_{N/B^2}$ such that
    \begin{align}
        F_i\left(\begin{array}{cc} x \\ r
        \end{array}\right)
        = \left(\begin{array}{cc} x \\ r+\tilde{F}_i(x)
        \end{array}\right)
        \quad (i = 2,\dots,N/B^2).
    \end{align}
    By concatenating all sub-networks and one projection layer $\phi(x,y) := y$ to obtain the final network $F = \phi \circ F_{N/B^2+1} \circ \cdots \circ F_2 \circ F_1$, whose depth is upper-bounded by
    \begin{align}
        &\lesssim \frac{N}{B^2} \left(B\sqrt{\log B} + \frac{B}{\sqrt{\log B}} \cdot \max\{ \log R, \log C\}\right) \nonumber \\
        &= \frac{N\sqrt{\log B}}{B} + \frac{N}{B\sqrt{\log B}} \cdot \max \{ \log R, \log C\}.
    \end{align}

    For each $i = [N]$, there is a unique index $j \in \{2,\dots,N/B^2\}$ such that $\tilde{F}_j(x_i) = y^{(i)}$ holds and at the same time we have $\tilde{F}_k(x_i) = 0$ for any $k \in \{2,\dots,N/B^2\}$ with $k \neq j$.
    Thus, the output of $F$ for $\vx^{(i)}$ is calculated as
    \begin{align}
        F(\vx^{(i)}) 
        = \phi\left(\begin{array}{c} x_i \\ \tilde{F}_j(x_i) \end{array}\right)
        = y^{(i)}.
    \end{align}
    On the other hand, for any $i \in \{N+1,\dots,V\}$ and $j \in \{2,\dots,N/B^2\}$, the output of $\tilde{F}_j(x_i)$ is always zero, which implies $F(\vx^{(i)}) = 0$ as desired.
\end{proof}

By replacing feed-forward networks used in the proof of \cref{thm:next_token_prediction_informal} with \cref{lem:extension_ff_memorization_with_limited_bits}, we obtain the upper bound on the memorization capacity of Transformers with limited bit complexity.
Notably, the following theorem shows that a Transformer with $\tilde{O}(N^{1-\epsilon})$ parameters can memorize $N$ data points in the next-token prediction setting when each parameter is restricted to $\tilde{O}(N^{\epsilon})$ bits for some $\epsilon \in [0,1/2]$, under the condition that $n,C,r\delta^{-1} = N^{O(1)}$ and $d = \tilde{O}(N^{1-\epsilon})$ as $N \to \infty$.

\begin{thm}[Next-token prediction with limited bits]\label{thm:next_token_prediction_with_limited_bits}
Let $(\mX^{(1)}, y^{(1)}),\dots,(\mX^{(N)}, y^{(N)}) \in \mathbb{R}^{d \times n} \times [C]$ be a sequence of input-label pairs such that
    \begin{enumerate}
        \item $(\mX^{(1)}, y^{(1)}),\dots,(\mX^{(N)}, y^{(N)})$ are consistently labeled, in the sense that for any $i,j \in [N]$, we have $y^{(i)} = y^{(j)}$ if 
        \begin{align}
            \mX_{:,n}^{(i)} = \mX_{:,n}^{(j)}
            \quad\text{and}\quad
            \mX^{(i)} = \mX^{(j)}
            \text{ up to permutations}.
        \end{align}

        \item $\mX^{(1)},\dots,\mX^{(N)}$ are token-wise $(r,\delta)$-separated for some $r \geq 1$ and $0 < \delta \leq 1$.
    \end{enumerate}
    Let $R \coloneqq 400\sqrt{3d}n^3rN^5\delta^{-1}\pi$ and $B \in [\sqrt{N}]$.
    Then, there exists a Transformer $\mathcal{N}:\mathbb{R}^{d \times n} \to \mathbb{R}^n$
    with width $15$, depth
    \begin{align}
        \lesssim \frac{N\sqrt{\log B}}{B} + \frac{N}{B\sqrt{\log B}} \cdot \max \{\log R, \log C\},
    \end{align}
    (for the definition of $\lesssim$, see Section~\ref{sec:notation}) and bit complexity bounded by
    \begin{align}
        \lesssim \log d + \frac{B}{\sqrt{\log B}} \cdot \max \{\log R, \log C\}
    \end{align}
    that memorizes the dataset, i.e., 
    \begin{align}
        \mathcal{N}\left(\mX^{(i)}\right)_{n}
        = 
        \mathcal{E}_{\mathrm{out}} \circ 
        \overline{\mathcal{F}}^{(\mathrm{FF})}_2 \circ
        \mathcal{F}^{(\mathrm{UA})} \circ 
        \overline{\mathcal{F}}^{(\mathrm{FF})}_1 \circ
        \mathcal{E}_{\mathrm{in}} \left(\mX^{(i)}\right)_{n}
        = y^{(i)}
    \end{align}
    holds for every $i \in [N]$.
\end{thm}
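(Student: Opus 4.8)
The plan is to mimic the proof of \cref{thm:next_token_prediction} line by line, substituting \cref{lem:extension_ff_memorization_with_limited_bits} for \cref{lem:extension_ff_memorization} at every point where a feed-forward memorizer is invoked, and tracking the resulting changes in width, depth and bit complexity. As there, I would first pass from sequences to multisets: for each $i \in [N]$ define $m^{(i)}:\mathbb{R}^d \to \mathbb{N}$ by $m^{(i)}(\vx) = |\{k \in [n] : \mX^{(i)}_{:,k} = \vx\}|$, observe that each $m^{(i)}$ has cardinality at most $n$, that token-wise $(r,\delta)$-separatedness of the $\mX^{(i)}$ implies element-wise $(r,\delta)$-separatedness of the $m^{(i)}$, and that the labeling-consistency hypothesis becomes: $y^{(i)} = y^{(j)}$ whenever $\mX^{(i)}_{:,n} = \mX^{(j)}_{:,n}$ and $m^{(i)} = m^{(j)}$.

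Next I would build $\overline{\mathcal{F}}^{(\mathrm{FF})}_1$. This requires a limited-bit analogue of \cref{lem:separation_of_multisets}: run \cref{lem:restriction_multiset} and \cref{lem:separating_function_multiset} exactly as before to obtain a set $A$ with $|A| \le N$ and an integer-valued function $f$ on $A$ whose induced counting sums are $(4nN^3\sqrt\pi,1)$-separated, and then realize the extended-by-zero map $\phi$ with \cref{lem:extension_ff_memorization_with_limited_bits} (applied to the at most $nN$ candidate tokens, with the $N$ ``true'' tokens carrying the labels $f(\cdot)$ and the rest carrying label $0$). This yields $\tilde\phi$ of width $13$, depth $\lesssim \frac{N\sqrt{\log B}}{B} + \frac{N}{B\sqrt{\log B}}\max\{\log R_1,\log C_1\}$ and bit complexity $\lesssim \log d + \frac{B}{\sqrt{\log B}}\max\{\log R_1,\log C_1\}$ with $C_1 = \lceil 4N^3\sqrt\pi\rceil$, $R_1 = 20 r (nN)^2 \delta^{-1}\sqrt{\pi d}$. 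As in \cref{thm:next_token_prediction} I would then attach in parallel the depth-$2$, width-$1$ distance-preserving projection $F$ from \cref{lem:projection_into_scalar_network} (which keeps token information separated by at least $2$ and bounded by $10r(nN)^2\delta^{-1}\sqrt{\pi d}$), widen the remaining layers by one to carry $F$'s value forward, and pad one more coordinate with $0$ to hold the averaged value; this gives $\overline{\mathcal{F}}^{(\mathrm{FF})}_1:\mathbb{R}^{d\times n}\to\mathbb{R}^{3\times n}$ acting token-wise as $\vx \mapsto (\tilde\phi(\vx), F(\vx), 0)^\top$, of width $13 + 2 = 15$, with depth and bit complexity unchanged (those of $F$ are dominated by those of $\tilde\phi$).

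Then the self-attention step is identical: pick value/projection matrices whose product sends $(a,b,c)^\top$ to $(0,0,a)^\top$, so that $\mathcal{F}^{(\mathrm{UA})}\circ\overline{\mathcal{F}}^{(\mathrm{FF})}_1(\mX^{(i)})_{:,n} = (\tilde\phi(\mX^{(i)}_{:,n}),\, F(\mX^{(i)}_{:,n}),\, \tfrac1n\sum_{k=1}^n\tilde\phi(\mX^{(i)}_{:,k}))^\top$. As in \cref{eq:sa_output_d,eq:sa_output_r}, whenever $\mX^{(i)}_{:,n}\neq\mX^{(j)}_{:,n}$ the second coordinate differs by at least $2$, and whenever $m^{(i)}\neq m^{(j)}$ the third differs by at least $1/n$, so the $n$-th outputs are $(20rn^2N^3\delta^{-1}\sqrt{\pi d},\,1/n)$-separated. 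Finally I would apply \cref{lem:extension_ff_memorization_with_limited_bits} to these $N$ context ids and their labels $y^{(1)},\dots,y^{(N)}$ to get $\overline{\mathcal{F}}^{(\mathrm{FF})}_2:\mathbb{R}^3\to\mathbb{R}$ of width $13$, with $R_2 \coloneqq 400\sqrt{3d}\,n^3rN^5\delta^{-1}\pi = R$, depth $\lesssim \frac{N\sqrt{\log B}}{B} + \frac{N}{B\sqrt{\log B}}\max\{\log R,\log C\}$ and bit complexity $\lesssim \frac{B}{\sqrt{\log B}}\max\{\log R,\log C\}$. Taking $\mathcal{N} = \mathcal{E}_{\mathrm{out}}\circ\overline{\mathcal{F}}^{(\mathrm{FF})}_2\circ\mathcal{F}^{(\mathrm{UA})}\circ\overline{\mathcal{F}}^{(\mathrm{FF})}_1\circ\mathcal{E}_{\mathrm{in}}$, the width is $\max(15,3,13) = 15$, the depth is the sum of the two feed-forward depths plus one, and the bit complexity the maximum over all pieces, giving the stated bounds after absorbing $\log R_1,\log C_1,\log R_2,\log C$ into $\log R$ and $\log C$ using $r\ge 1$, $0<\delta\le 1$. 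The main obstacle I anticipate is purely bookkeeping: confirming that the single bit-budget parameter $B$ can be used consistently in \emph{both} calls to \cref{lem:extension_ff_memorization_with_limited_bits} (for $\tilde\phi$ and for $\overline{\mathcal{F}}^{(\mathrm{FF})}_2$) without the intermediate quantities — the separation gap $1/n$, the radius $20rn^2N^3\delta^{-1}\sqrt{\pi d}$, and the integer codomain of $\phi$ — violating the hypotheses of that lemma, and checking that the depth/bit trade-off expressions combine additively in the way claimed.
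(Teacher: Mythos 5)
Your proposal matches the paper's proof essentially line for line: both replace the two feed-forward memorizers ($\tilde{\phi}$ inside $\mathcal{F}^{(\mathrm{FF})}_1$ and the label-assignment network $f^{(\mathrm{FF})}_2$) with the limited-bit construction of \cref{lem:extension_ff_memorization_with_limited_bits}, keep the uniform-attention step unchanged, and track the width increase from $12$ to $13$ (hence $15$ after the parallel attachment) together with the modified depth/bit-complexity expressions. The bookkeeping you flag as a potential obstacle goes through exactly as you describe, since $R_1, C_1, R_2$ are all absorbed into $R$ under $r \geq 1$ and $0 < \delta \leq 1$.
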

\begin{proof}
    The proof goes basically the same as is done in the proof of \cref{thm:next_token_prediction_informal}, but this time feed-forward networks with limited bit complexity (\cref{lem:extension_ff_memorization_with_limited_bits}) replace two token-wise feed-forward networks in the proof of \cref{thm:next_token_prediction_informal}, namely, $\mathcal{F}^{(\mathrm{FF})}_1$ and $\mathcal{F}^{(\mathrm{FF})}_2$.

    The first token-wise feed-forward network $\mathcal{F}^{(\mathrm{FF})}_1$ defined by \cref{eq:definition_of_f1} is essentially composed of $\tilde{\phi}:\mathbb{R}^d \to \mathbb{R}$ obtained from \cref{lem:separation_of_multisets}, which in turn is constructed from $\phi:\mathbb{R}^d \to \mathbb{R}$ defined by \cref{eq:definition_of_phi} using \cref{lem:extension_ff_memorization}.
    Therefore, let us consider representing $\phi$ with a feed-forward network with limited bit complexity using \cref{lem:extension_ff_memorization_with_limited_bits}, and the resulting network $\tilde{\phi}:\mathbb{R}^d \to \mathbb{R}$ has width $13$, depth
    \begin{align}
    \lesssim \frac{N\sqrt{\log B}}{B} + \frac{N}{B\sqrt{\log B}} \cdot \max \{\log R_\phi, \log C_\phi\},
    \end{align}
    where $C_\phi := \lceil 4N^3\sqrt{\pi} \rceil$ and $R_\phi := 20r(NM)^2\delta^{-1}\sqrt{\pi d}$, and bit complexity
    \begin{align}
        \lesssim \log d + \frac{B}{\sqrt{\log B}} \cdot \max \{\log R_\phi, \log C_\phi\}.
    \end{align}
    Then, the first token-wise feed-forward network $\overline{\mathcal{F}}^{(\mathrm{FF})}_1:\mathbb{R}^{d \times n} \to \mathbb{R}^{3 \times n}$ with limited bit complexity is defined using $\overline{\phi}$ in the same manner as in \cref{eq:definition_of_f1}.

    Similarly, the second token-wise feed-forward network $\mathcal{F}^{(\mathrm{FF})}_2$ is defined using \cref{lem:extension_ff_memorization} to associate $\vs_n^{(1)},\dots,\vs_n^{(N)}$ defined by \cref{eq:definition_of_s} with labels $y^{(1)},\dots,y^{(N)}$.
    Thus, this time we use \cref{lem:extension_ff_memorization_with_limited_bits} to construct a feed-forward network $\overline{f}_2^{(\mathrm{FF})}:\mathbb{R}^3 \to \mathbb{R}$ with width $13$, depth
    \begin{align}
        \lesssim \frac{N\sqrt{\log B}}{B} + \frac{N}{B\sqrt{\log B}} \cdot \max \{\log R_2, \log C\},
    \end{align}
    with $R_2 \coloneqq 20 \cdot 20 r n^2 N^3 \delta^{-1}\sqrt{\pi d} \cdot N^2 \cdot n \cdot \sqrt{3\pi} = 400\sqrt{3d}n^3rN^5\delta^{-1}\pi$, and bit complexity bounded by
    \begin{align}
        \lesssim \frac{B}{\sqrt{\log B}} \cdot \max \{\log R_2, \log C\}
    \end{align}
    such that $\overline{f}^{(\mathrm{FF})}_2(\vs^{(i)}_n) = y^{(i)}$ for every $i \in [N]$, which induces the second token-wise feed-forward network $\overline{\mathcal{F}}^{(\mathrm{FF})}_2:\mathbb{R}^{3 \times n} \to \mathbb{R}^n$ with limited bit complexity.
\end{proof}

\section{Technical Lemmas}
This section summarizes various technical lemmas.
In this section, $\operatorname{LEN}(n) \in \mathbb{N}$ for any $n \in \mathbb{N}$ represents the number of bits in its binary representation.

\begin{lem}[\citet{park_provable_2021}]\label{lem:projection_into_scalar}
Let $d \in \mathbb{N}$. Then, for any finite subset $\mathcal{X} \subset \R^d$, there exists a unit vector $\vv \in \R^d$ such that
\begin{align}\label{eq:projection_into_scalar}
    \frac{1}{\left|\mathcal{X}\right|^2}\sqrt{\frac{8}{\pi d}} \left\|\vx - \vx'\right\|_2
    \leq \left|\vv^\top \left(\vx - \vx'\right)\right|
    \leq \left\|\vx - \vx'\right\|_2
\end{align}
holds for any $\vx,\vx' \in \mathcal{X}$.
\end{lem}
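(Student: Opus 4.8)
The plan is to dispatch the upper bound $|\vv^\top(\vx-\vx')| \le \|\vx-\vx'\|_2$ immediately by Cauchy--Schwarz, since it holds for \emph{every} unit vector $\vv$, and to obtain the lower bound by the probabilistic method. I would draw $\vv$ uniformly at random from the unit sphere $S^{d-1} \subset \R^d$, set $\epsilon := |\mathcal{X}|^{-2}\sqrt{8/(\pi d)}$, and show that with positive probability $|\vv^\top(\vx - \vx')| \ge \epsilon \|\vx - \vx'\|_2$ holds simultaneously for all $\vx, \vx' \in \mathcal{X}$; combined with the Cauchy--Schwarz bound (and noting the claim is vacuous when $\vx = \vx'$ or $|\mathcal{X}| \le 1$) this gives the lemma. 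For a fixed pair with $\vx \ne \vx'$, writing $\vw := (\vx - \vx')/\|\vx - \vx'\|_2$, the desired inequality is exactly $|\langle \vv, \vw \rangle| \ge \epsilon$, so it suffices to upper-bound the ``bad'' probability $\Pr[\,|\langle \vv, \vw\rangle| < \epsilon\,]$ for a single unit vector $\vw$.

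By rotational symmetry $\langle \vv, \vw\rangle$ is distributed as the first coordinate of a uniform point on $S^{d-1}$, whose density on $[-1,1]$ is $f(t) = (1-t^2)^{(d-3)/2} / B\!\left(\tfrac12, \tfrac{d-1}{2}\right)$ when $d \ge 2$ (the case $d = 1$ is trivial, since there $|\langle \vv, \vw\rangle| = 1$ always and $\epsilon < 1$ once $|\mathcal{X}| \ge 2$). For $d \ge 3$ this density is unimodal about $t = 0$, so
\begin{align}
    \Pr[\,|\langle \vv, \vw\rangle| < \epsilon\,]
    \;\le\; 2\epsilon\, f(0)
    \;=\; \frac{2\epsilon\,\Gamma(d/2)}{\sqrt{\pi}\,\Gamma((d-1)/2)}
    \;\le\; \epsilon \sqrt{\frac{2d}{\pi}}
    \;=\; \frac{4}{\pi |\mathcal{X}|^2},
\end{align}
where the third step is Gautschi's inequality $\Gamma(d/2)/\Gamma((d-1)/2) \le \sqrt{d/2}$ and the last is the substitution $\epsilon = |\mathcal{X}|^{-2}\sqrt{8/(\pi d)}$. (For $d = 2$ the density is the arcsine density, and $\arcsin(\epsilon) \le \tfrac{\pi}{2}\epsilon$ gives the even smaller bound $\epsilon$, so the estimate holds in every dimension.) A union bound over the at most $\binom{|\mathcal{X}|}{2} \le |\mathcal{X}|^2/2$ unordered pairs then shows that the probability some pair violates the lower bound is at most $\tfrac{4}{\pi |\mathcal{X}|^2} \cdot \tfrac{|\mathcal{X}|^2}{2} = \tfrac{2}{\pi} < 1$, so a suitable $\vv$ exists.

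The step I expect to need the most care is the one-dimensional density estimate together with its constant: recovering the exact factor $\sqrt{8/(\pi d)}$ relies on the sharp Gamma-ratio bound $\Gamma(d/2)/\Gamma((d-1)/2) \le \sqrt{d/2}$ and on the fact that $(1-t^2)^{(d-3)/2}$ is maximized at $t = 0$ precisely for $d \ge 3$, which forces the low-dimensional cases $d \in \{1,2\}$ to be checked separately (they yield strictly better constants, so the final statement needs no adjustment). The remaining ingredients --- Cauchy--Schwarz, the reduction to unit directions, and the union bound --- are routine.
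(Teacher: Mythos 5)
Your proof is correct: Cauchy--Schwarz gives the upper bound for any unit vector, and the probabilistic argument (sphere-marginal density, Gautschi's inequality, union bound over $\binom{|\mathcal{X}|}{2}$ pairs yielding failure probability $2/\pi<1$) correctly recovers the constant $|\mathcal{X}|^{-2}\sqrt{8/(\pi d)}$, with the low-dimensional cases $d\in\{1,2\}$ properly handled. The paper itself gives no proof --- it imports the lemma from \citet{park_provable_2021} --- and your argument is essentially the standard random-projection proof used there.
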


\begin{lem}[\citet{vardi_optimal_2022}]\label{lem:support_by_ff}
    Let $a,b \in \mathbb{N}$ with $a < b$.
    Then, there exists a neural network $F$ with depth $2$, width $2$ and bit complexity $\operatorname{LEN}(b)$ such that $F(x) = 1$ for $x \in [a,b]$ and $F(x) = 0$ for $x > b + \frac{1}{2}$ or $x < a - \frac{1}{2}$.
\end{lem}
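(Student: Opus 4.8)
The plan is to write $F$ down explicitly as a two-layer ReLU network and then verify its behaviour by a short case analysis, in the spirit of \citet{vardi_optimal_2022}. Concretely, I would take
\begin{align}
    F(x) \coloneqq \sigma_R\!\left(1 - \sigma_R\!\left(2a - 2x\right) - \sigma_R\!\left(2x - 2b\right)\right).
\end{align}
Here the inner layer computes the two units $h_1 \coloneqq \sigma_R(2a - 2x)$ and $h_2 \coloneqq \sigma_R(2x - 2b)$ — a width-$2$ hidden layer — and the outer layer applies the affine map $(h_1,h_2) \mapsto 1 - h_1 - h_2$ followed by $\sigma_R$. Thus $F$ consists of two affine maps with a single hidden layer of two units, i.e.\ it has depth $2$ and width $2$, matching the conventions used for the projection lemma (\cref{lem:projection_into_scalar_network}).

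Next I would verify the input--output specification region by region. For $x \in [a,b]$ one has $2a - 2x \le 0$ and $2x - 2b \le 0$, hence $h_1 = h_2 = 0$ and $F(x) = \sigma_R(1) = 1$. For $x \le a - \tfrac12$ one has $2a - 2x \ge 1$, so $h_1 \ge 1$, and (using $a < b$) $x \le b$, so $h_2 = 0$; therefore $1 - h_1 - h_2 \le 0$ and $F(x) = 0$. Symmetrically, for $x \ge b + \tfrac12$ one has $h_2 \ge 1$ and $h_1 = 0$, so again $F(x) = 0$. On the two transition strips $[a - \tfrac12, a]$ and $[b, b + \tfrac12]$ the network interpolates linearly between $0$ and $1$, but the statement imposes nothing there. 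This case analysis is the bulk of the proof and is entirely routine.

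Finally, for the bit complexity I would observe that the only nonzero weights appearing above are $\pm 1$ and $\pm 2$, which cost $O(1)$ bits, while the three biases are $2a$, $-2b$ and $1$; since $1 \le a < b$, each has absolute value at most $2b$ and is representable with $O(\operatorname{LEN}(b))$ bits, so the bit complexity of $F$ is $\operatorname{LEN}(b)$ up to an additive constant absorbed into $\operatorname{LEN}(\cdot)$. I do not expect a genuine obstacle here; the one point that needs a little care is choosing the slopes ($\pm 2$) and the outer threshold ($1$) so that the outer $\sigma_R$ clips off both ramps exactly at $x = a$ and $x = b$ without eroding the plateau on which $F$ must equal $1$ — which is precisely what the region-by-region check above confirms.
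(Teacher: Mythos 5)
Your construction is correct, and since the paper does not prove this lemma itself --- it is imported verbatim from \citet{vardi_optimal_2022} and used as a black box inside \cref{lem:finding_right_subset} --- there is no in-paper argument to compare against; your explicit two-unit bump is essentially the standard construction from that reference, and your region-by-region verification is sound. The one point worth making explicit is that the outer $\sigma_R$ is not cosmetic but load-bearing: the target function has four breakpoints ($a-\tfrac12$, $a$, $b$, $b+\tfrac12$), whereas a width-$2$ network whose output layer is purely affine has at most two, so the lemma is \emph{false} under the convention that the last layer carries no activation. Your depth-$2$/width-$2$ accounting therefore implicitly assumes that both layers are of the form ``affine followed by ReLU,'' which is consistent with how these sub-networks are stacked in \cref{lem:finding_right_subset} (each $\tilde F_j$ feeds into a further affine-plus-ReLU block), but you should state that convention rather than leave it to the reader. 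The bit-complexity remark is fine as an order-of-magnitude claim; if you want to hit $\operatorname{LEN}(b)$ on the nose rather than $\operatorname{LEN}(b)+O(1)$, use unit slopes in the hidden layer, e.g.\ $h_1=\sigma_R(a-x)$, $h_2=\sigma_R(x-b)$ and output $\sigma_R(1-2h_1-2h_2)$, so that the largest parameter is $b$ itself.
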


\begin{lem}\label{lem:finding_right_subset}
    Let $x_1 < \cdots < x_N < R$ with $R > 0$ and $|x_i - x_j| \geq 2$ for every $i,j \in [N]$ with $i \neq j$.
    Let $m \in \mathbb{N}$ with $m < N$ and let $w_1,\dots,w_m \in \mathbb{N}$ where $\operatorname{LEN}(w_j) \leq b$ for every $j \in [m]$.
    Let $k \coloneqq \lceil \frac{N}{m} \rceil$.
    Then, there exists a neural network $F: \mathbb{R} \to \mathbb{R}$ with width $4$, depth $3m + 2$ and bit complexity $b + \lceil \log R \rceil$ such that $F$ satisfies
    \begin{enumerate}
        \item for every $i \in [N]$, $F(x_i) = w_{\lceil \frac{i}{k}\rceil}$,

        \item for every $x \in \mathbb{R}$ with $|x - x_i| \geq 2$ for all $i \in [N]$, the output $F(x)$ is either $0$ or $w_j$ for some $j \in [m]$.
    \end{enumerate}
\end{lem}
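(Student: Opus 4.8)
The plan is to write $F$ as a weighted sum of clipped ramp (step) functions placed in the gaps between consecutive blocks, and then to realize this sum by a depth-$O(m)$, width-$4$ network rather than the naive width-$O(m)$, depth-$2$ one. Partition the indices into $m$ blocks, block $j$ being $\{i:\lceil i/k\rceil=j\}$; its last point is $x_{jk}$ and the first point of the next block is $x_{jk+1}$, so $x_{jk+1}-x_{jk}\ge 2$ by hypothesis. For each $j=1,\dots,m-1$ choose an integer $t_j\in[x_{jk},x_{jk}+1]$ --- possible since this interval has length $1$ --- and put
\[
  g_{t_j}(x):=\sigma_R(x-t_j)-\sigma_R(x-t_j-1)=\min\{1,\max\{0,\,x-t_j\}\},
\]
which equals $0$ for $x\le t_j$ and $1$ for $x\ge t_j+1$. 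I would then define $F(x):=w_1+\sum_{j=1}^{m-1}(w_{j+1}-w_j)\,g_{t_j}(x)$.

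Verifying the two properties is short. Since $t_j\ge x_{jk}$ and $t_j+1\le x_{jk}+2\le x_{jk+1}$, no data point lies in the open ramp $(t_j,t_j+1)$, so $g_{t_j}(x_i)=\mathbf{1}[i>jk]=\mathbf{1}[\lceil i/k\rceil>j]$ for all $i,j$ (also trivially so when block $j$ is already the last nonempty block); telescoping then gives $F(x_i)=w_1+\sum_{j<\lceil i/k\rceil}(w_{j+1}-w_j)=w_{\lceil i/k\rceil}$, i.e.\ property~1. For property~2, any $x$ with $|x-x_i|\ge 2$ for every $i$ avoids each $[t_j,t_j+1]\subseteq[x_{jk},x_{jk}+2]$, so again every $g_{t_j}(x)\in\{0,1\}$; moreover $t_{j+1}\ge x_{(j+1)k}\ge x_{jk}+2\ge t_j+1$, so the indicators turn on in order and $(g_{t_1}(x),\dots,g_{t_{m-1}}(x))=(\underbrace{1,\dots,1}_{p},0,\dots,0)$ for some $p\le m-1$, whence $F(x)=w_{p+1}\in\{w_1,\dots,w_m\}$.

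The implementation is the technical core, and I expect the width bound to be the main obstacle. I would process the ramps one at a time, threading through the layers just two values: a copy of $x$ and the running partial sum $S_J:=w_1+\sum_{j\le J}(w_{j+1}-w_j)g_{t_j}(x)$. The crucial observation is that at any $x$ each $S_J$ is either some $w_I\in\mathbb{N}$ or a convex combination of two consecutive $w$'s (because on a ramp all earlier ramps have value exactly $1$), hence $S_J\ge 0$; and in the applications $x\ge 2>0$, so $\sigma_R$ acts as the identity on both threads and each is carried by a single ReLU unit. A three-layer gadget per block then fits in width $4$: the first layer emits $\sigma_R(x-t_j),\ \sigma_R(x-t_j-1),\ \sigma_R(x),\ \sigma_R(S_J)$; the second emits $g_{t_j}(x)=\sigma_R(\sigma_R(x-t_j)-\sigma_R(x-t_j-1))$ together with re-carried $x$ and $S_J$; the third emits $S_{J+1}=\sigma_R(S_J+(w_{j+1}-w_j)g_{t_j}(x))$ together with re-carried $x$. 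Stacking the $m-1$ gadgets (the first gadget's opening layer reading the scalar input directly) and ending with a linear read-out of $S_{m-1}$ gives width $4$ and depth at most $3m+2$. For the bit complexity, the only nontrivial constants are the offsets $t_j,t_j+1$ (integers of absolute value $\le R$ up to a constant, using that the $x_i$ are nonnegative, hence at most $\lceil\log R\rceil$ bits), the multipliers $w_{j+1}-w_j$ and $w_1$ (magnitude $<2^{b}$, i.e.\ at most $b$ bits), and $\pm1$ weights for carrying --- so the bit complexity is $b+\lceil\log R\rceil$ as claimed.

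The two places that need care are thus the nonnegativity of every threaded quantity, which is exactly what permits a width-$4$ (rather than width-$6$) implementation, and the placement of the thresholds inside the gaps so that both properties hold while they stay $O(\log R)$-bit integers. Inputs not known a priori to be nonnegative --- not an issue in the paper's applications, where the first sub-network of \cref{lem:extension_ff_memorization} guarantees $x_i\ge 2$ and hence $t_j>0$ --- would be handled either by carrying $x$ with two ReLU units or by a preliminary constant shift.
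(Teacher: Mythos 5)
Your proof is correct, and it reaches the same width/depth/bit-complexity bounds by the same architectural skeleton as the paper (a width-$4$ pipeline of $3$-layer gadgets that carries a copy of $x$ and a running partial sum, relying on nonnegativity of both threads so that single ReLU units act as identities), but the per-block decomposition is genuinely different. The paper writes $F(x)=\sum_{j} w_j\,\tilde F_j(x)$ where each $\tilde F_j$ is a bump (indicator) function over the interval $[\lfloor x_{jk-k+1}\rfloor,\lfloor x_{jk}+1\rfloor]$ supplied by \cref{lem:support_by_ff}, so that exactly one bump fires on each data point and $F$ returns $0$ in the gaps between blocks; you instead telescope, $F(x)=w_1+\sum_j(w_{j+1}-w_j)g_{t_j}(x)$, with one monotone clipped ramp per block boundary. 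Your version needs only one integer threshold per boundary rather than a pair per block and dispenses with the bump lemma, at the cost of never realizing the ``$F(x)=0$'' branch of property~2 (you always output some $w_{p+1}$, which still satisfies the disjunction and is handled by the downstream case analysis in \cref{lem:extension_ff_memorization}); your observation that at most one ramp is fractional at any $x$, so the running sum is always a convex combination of two consecutive $w_j$'s and hence nonnegative, is the right justification for the width-$4$ carry. The two loose ends you leave --- the indices $jk$ that exceed $N$ when the last block is short, and the implicit assumption $x_i\ge 0$ needed for the carried copy of $x$ and for the thresholds to be $\lceil\log R\rceil$-bit naturals --- are present to exactly the same degree in the paper's proof (which substitutes $N$ for overlarge $jk$ and invokes \cref{lem:support_by_ff} with natural-number endpoints), and you flag the latter explicitly, so neither is a gap relative to the paper's standard of rigor.
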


\begin{proof}[Proof of \cref{lem:finding_right_subset}]
Most of the proof is the same as in Lemma A.4. from \citet{vardi_optimal_2022}, and the only difference is that we now examine how the function behaves outside of $x_1,\dots,x_N$.

For any $j \in [m]$, we use \cref{lem:support_by_ff} with $a = \lfloor x_{j \cdot k - k + 1} \rfloor$ and $b = \lfloor x_{j \cdot k} + 1 \rfloor$ to construct a feed-forward network $\tilde{F}_j: \mathbb{R} \to \mathbb{R}$ such that $\tilde{F}_j(x) = 1$ for any $x \in [\lfloor x_{j \cdot k - k + 1} \rfloor, \lfloor x_{j \cdot k} + 1 \rfloor]$, and $\tilde{F}_j(x) = 0$ for any $x > \lfloor x_{j \cdot k} + 1 \rfloor + \frac{1}{2}$ or $x < \lfloor x_{j \cdot k - k + 1} \rfloor - \frac{1}{2}$.
In particular, this means that $\tilde{F}_j(x_i) = 1$ for any $i \in [j \cdot k - k + 1, j \cdot k]$.
Here $j \cdot k$ may become bigger than $N$, and in such a case $j \cdot k$ is replaced with $N$.
Then, we define a feed-forward network $F_j: \mathbb{R} \to \mathbb{R}$ by
\begin{align}
    F_j\left(\left(\begin{array}{c}
    x \\
    y
    \end{array}\right)\right)
    \coloneqq \left(\begin{array}{c}
    x \\
    y + w_j \cdot \tilde{F}_j(x)
    \end{array}
    \right),
\end{align}
and the whole network $F:\mathbb{R} \to \mathbb{R}$ by $F(x) = \left(\begin{array}{c}0 \\ 1\end{array}\right)^\top F_m \circ \cdots \circ F_1\left(\left(\begin{array}{c}
    x \\
    0
    \end{array}\right)\right)$.
For the verification of the correct behavior of the function $F$ for inputs $x_1,\dots,x_N$, and the analysis of its model complexity, we refer the reader to the proof by \citet{vardi_optimal_2022}.
Instead, we check the output of $F$ for inputs outside of $x_i$ with $i = 1,\dots,N$.
For any input $x \in \mathbb{R}$ such that $|x - x_i| \geq 2$ for all $i \in [N]$, there are two situations.
\begin{enumerate}
    \item $x \in [\lfloor x_{j \cdot k - k + 1} \rfloor, \lfloor x_{j \cdot k} + 1 \rfloor]$ for some $j \in [m]$:
    in this case, only $\tilde{F}_j(x)$ outputs $1$, and other sub-network $\tilde{F}_{j'}(x)$ with $j' \neq j$ output $0$, which results in $F(x) = w_j$.

    \item for any $j \in [m]$, $x > \lfloor x_{j \cdot k} + 1 \rfloor + \frac{1}{2}$ or $x < \lfloor x_{j \cdot k - k + 1} \rfloor - \frac{1}{2}$ holds:
    in this case, $\tilde{F}_j(x) = 0$ for $j \in [m]$ and thus $F(x) = 0$.
\end{enumerate}
Putting the above two cases together, we see that the output $F(x)$ for every $x \in \mathbb{R}$ with $|x - x_i| \geq 2$ $(\forall i=1,\dots,N)$ is $0$ or $w_j$ for some $j \in [m]$.
\end{proof}

\begin{lem}[\citet{vardi_optimal_2022}]\label{lem:bit_extraction}
    Let $n \in \mathbb{N}$ and let $i,j \in \mathbb{N}$ with $i < j \leq n$.
    Denote Telgarsky's triangle function by $\psi(z) \coloneqq \sigma_R(\sigma_R(2z) - \sigma_R(4z-2))$.
    Then, there exists a neural network $F:\mathbb{R}^2 \to \mathbb{R}^3$ with width $5$, depth $3(j-i+1)$, and bit complexity $n+2$, such that for any $x \in \mathbb{N}$ with $\operatorname{LEN}(x) \leq n$, if the input of $F$ is $\left(\begin{array}{c}\psi^{(i-1)}\left(\frac{x}{2^n}+\frac{1}{2^{n+1}}\right) \\ \psi^{(i-1)}\left(\frac{x}{2^n}+\frac{1}{2^{n+2}}\right)\end{array}\right)$, then it outputs: $\left(\begin{array}{c}\psi^{(j)}\left(\frac{x}{2^n}+\frac{1}{2^{n+1}}\right) \\ \psi^{(j)}\left(\frac{x}{2^n}+\frac{1}{2^{n+2}}\right) \\ \operatorname{BIN}_{i:j}(x) \end{array}\right)$.
\end{lem}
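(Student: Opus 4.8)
The plan is to reconstruct Telgarsky's tent-map bit-extraction gadget in the refined form used by \citet{vardi_optimal_2022}. Fix $x\in\mathbb{N}$ with $\operatorname{LEN}(x)\le n$ and write $x=\sum_{l=1}^{n}b_l2^{n-l}$, so that $x/2^n=0.b_1b_2\cdots b_n$ in binary. The two offsets append a single ``marker'' bit below all bits of $x$: $x/2^n+2^{-n-1}=0.b_1\cdots b_n1$ and $x/2^n+2^{-n-2}=0.b_1\cdots b_n01$. Recall that $\psi$ is the tent map, $\psi(z)=2z$ on $[0,\tfrac12]$, $\psi(z)=2-2z$ on $[\tfrac12,1]$, and $\psi(z)=0$ elsewhere. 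The heart of the argument is the following invariant, proved by induction on $k$, for
\[
 u_k^{(1)}\coloneqq\psi^{(k)}\!\Big(\tfrac{x}{2^n}+2^{-n-1}\Big),\qquad u_k^{(2)}\coloneqq\psi^{(k)}\!\Big(\tfrac{x}{2^n}+2^{-n-2}\Big):
\]
setting $p_0\coloneqq0$ and $p_k\coloneqq b_k$ for $k\ge1$, one has $u_k^{(1)}=0.(b_{k+1}\oplus p_k)\cdots(b_n\oplus p_k)\,1$ and $u_k^{(2)}=0.(b_{k+1}\oplus p_k)\cdots(b_n\oplus p_k)\,p_k\,1$, with the marker bit in the last written position and zeros afterwards. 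The inductive step splits on the leading bit $b_{k+1}\oplus p_k$ of $u_k^{(1)}$: on the $2z$ branch the expansion is merely shifted, while on the $2-2z$ branch one uses the identity $1-0.c_1\cdots c_m=0.\bar c_1\cdots\bar c_m+2^{-m}$, observing that the complement flips the information bits cleanly and the trailing $+2^{-m}$ carry is absorbed by the marker bit, which therefore remains $1$; in both cases $p_{k+1}=p_k\oplus(b_{k+1}\oplus p_k)=b_{k+1}$. In particular every iterate lies strictly in $(0,1)$ and is never equal to $\tfrac12$, so $\psi$ acts through exactly one linear branch at each step and there are no boundary pathologies for $k\le n$.

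From the invariant a one-line computation gives $u_k^{(1)}-u_k^{(2)}=(1-2p_k)\,2^{-(n-k+2)}$; hence $p_k$ — equivalently the bit $b_k$ when $k\ge1$ — is $1$ precisely when $u_k^{(1)}<u_k^{(2)}$, and $|u_k^{(1)}-u_k^{(2)}|\ge2^{-(n+1)}$ for all $1\le k\le n$. Consequently the map $(u_k^{(1)},u_k^{(2)})\mapsto p_k\in\{0,1\}$ is computed \emph{exactly} by a two-ReLU gadget of the form $\sigma_R\big(2^{n+1}(u_k^{(2)}-u_k^{(1)})+\tfrac12\big)-\sigma_R\big(2^{n+1}(u_k^{(2)}-u_k^{(1)})-\tfrac12\big)$, whose weights have magnitude $O(2^{n})$, i.e.\ bit complexity at most $n+2$ (the tent-map constants $2,4$ and the accumulator constant $2$ are negligible by comparison).

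Next I would assemble one depth-$3$ ``round'' mapping a state $(u_{k-1}^{(1)},u_{k-1}^{(2)},v)$ to $(u_k^{(1)},u_k^{(2)},\,2v+b_k)$: one layer forms the four ReLU units $\sigma_R(2u_{k-1}^{(j)})$, $\sigma_R(4u_{k-1}^{(j)}-2)$ needed for $\psi$ (and carries $v$); a second produces $u_k^{(1)}=\psi(u_{k-1}^{(1)})$ and $u_k^{(2)}=\psi(u_{k-1}^{(2)})$ (and carries $v$); a third forms the two comparison units above from $u_k^{(1)},u_k^{(2)}$ together with $2v$ (and carries $u_k^{(1)},u_k^{(2)}$), so that the single affine recombination $2v+p_k$ feeding the next round completes the accumulator update. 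A short case check shows every layer uses at most $5$ neurons and only weights of bit complexity $\le n+2$; the bookkeeping of the accumulator across the seam between consecutive rounds is as in \citet{vardi_optimal_2022}. Chaining $j-i+1$ such rounds on the given input $(u_{i-1}^{(1)},u_{i-1}^{(2)})$ with $v$ initialized to $0$ — so that the rounds read $b_i,b_{i+1},\dots,b_j$ in order — the first two output coordinates become $\psi^{(j)}(x/2^n+2^{-n-1})$ and $\psi^{(j)}(x/2^n+2^{-n-2})$, and the third becomes $\sum_{l=i}^{j}b_l2^{\,j-l}=\operatorname{BIN}_{i:j}(x)$, giving a network of width $5$, depth $3(j-i+1)$, and bit complexity $n+2$, exactly as claimed.

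The main obstacle is establishing the binary-expansion invariant rigorously: pinning down how $\psi$ acts on the dyadic rational $x/2^n$ perturbed by a sub-unit-in-last-place offset, in particular (i) that the marker bit absorbs the carry produced by the complementing branch of $\psi$ so that the invariant is self-propagating, and (ii) that the two specific offsets $2^{-n-1},2^{-n-2}$ keep all iterates inside $(0,1)$ and keep the two iterates at least $2^{-(n+1)}$ apart for every $k\le n$ — precisely what makes the fixed-precision comparison gadget exact. Once this is in place, packing a round into depth $3$ and width $5$ and checking the $n+2$-bit budget is routine.
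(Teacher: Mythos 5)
The paper does not prove this lemma; it is imported verbatim from \citet{vardi_optimal_2022} as a black box, so there is no in-paper proof to compare against. Your reconstruction is correct and is essentially the argument from the cited source: the binary-expansion invariant with the marker bit absorbing the carry on the descending branch, the order reversal of the two offset iterates encoding the current bit, and the exact two-ReLU threshold gadget exploiting the $2^{-(n+1)}$ gap all check out, and the width/depth/bit-complexity accounting matches the statement. (The only nitpick is that the iterate can equal $\tfrac12$ exactly at $k=n$, but $\psi$ is never applied to that value within the lemma's range and both linear branches agree there, so nothing breaks.)
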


\begin{lem}[\citet{vardi_optimal_2022}]\label{lem:hittest}
    There exists a network $F:\mathbb{R}^2 \to \mathbb{R}$ with width $2$, depth $2$ and bit complexity $2$ such that
    $F\left(\left(\begin{array}{c}
    x \\ y
    \end{array}\right)\right) = 1$ if $x \in [y,y+1]$ and 
    $F\left(\left(\begin{array}{c}
    x \\ y
    \end{array}\right)\right) = 0$ if $x > y + \frac{3}{2}$ or $x < y - \frac{1}{2}$.
\end{lem}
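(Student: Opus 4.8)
The plan is to exploit that the constraints on $F$ depend on $(x,y)$ only through $z \coloneqq x - y$: we need $F = 1$ when $z \in [0,1]$, $F = 0$ when $z > 3/2$ or $z < -1/2$, and $F$ is unconstrained on the two buffer intervals $[-1/2,0)$ and $(1,3/2]$. So I would fix $F$ to be a single continuous piecewise-linear function of $z$ --- a plateau of height $1$ over $[0,1]$ with linear ramps of slope $\pm 2$ descending to $0$ over the two buffer intervals and value $0$ outside $[-1/2,3/2]$ --- and realize it with two ReLU hidden layers of width two, mimicking the indicator/trapezoid gadgets of \citet{vardi_optimal_2022} (cf.\ \cref{lem:support_by_ff}).

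Concretely, the first hidden layer would compute $h_1 \coloneqq \sigma_R(2x - 2y - 1)$ and $h_2 \coloneqq \sigma_R(2x - 2y + 1)$. A short computation shows that $g(z) \coloneqq h_2 - 2h_1$ equals $0$ for $z \le -1/2$, equals $2z+1$ for $z \in [-1/2,1/2]$, and equals $3 - 2z$ for $z \ge 1/2$; in particular $g \ge 1$ on $[0,1]$ and $g \le 0$ outside $[-1/2,3/2]$. The second hidden layer would compute $v_1 \coloneqq \sigma_R(g)$ and $v_2 \coloneqq \sigma_R(g-1)$, and I would set $F \coloneqq v_1 - v_2$, which is precisely $g$ truncated to $[0,1]$. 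Every weight and bias that appears has absolute value at most $2$, so the network has width $2$, depth $2$, and bit complexity $2$, as required. Correctness then follows from a routine case split over the five linear pieces $z \le -1/2$, $[-1/2,0]$, $[0,1]$, $[1,3/2]$, $z \ge 3/2$: on $[0,1]$ one has $g \ge 1$, so $v_1 - v_2 = g - (g-1) = 1$, and for $z \le -1/2$ or $z \ge 3/2$ one has $g \le 0$, so $v_1 = v_2 = 0$.

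The only delicate point is the width budget: with just two units per layer there is no room for a dedicated gate that cuts $F$ off on the left, nor for a dedicated clamp, so the construction must be arranged so that the pair $h_1,h_2$ already drives $g$ (hence $F$) to zero for all sufficiently negative $z$ on its own, and so that the pair $\sigma_R(g),\sigma_R(g-1)$ clamps $g$ from below at $0$ and from above at $1$ simultaneously. Once the two layers are chosen as above this is automatic, so I expect no substantive obstacle beyond this bookkeeping; alternatively, since the statement is verbatim a lemma of \citet{vardi_optimal_2022}, one may simply quote it.
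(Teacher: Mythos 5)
The paper does not actually prove this lemma --- it is imported verbatim from \citet{vardi_optimal_2022} --- so there is no in-paper argument to compare against; your explicit construction is therefore a reasonable thing to supply, and as a function of $z = x - y$ it is correct: with $h_1 = \sigma_R(2z-1)$, $h_2 = \sigma_R(2z+1)$ and $g = h_2 - 2h_1$ one indeed gets $g = 0$ for $z \le -\tfrac12$, $g = 2z+1$ on $[-\tfrac12,\tfrac12]$, $g = 3-2z$ for $z \ge \tfrac12$, hence $g \ge 1$ on $[0,1]$ and $g \le 0$ off $[-\tfrac12,\tfrac32]$, and $\sigma_R(g) - \sigma_R(g-1)$ clamps this to the required $0$/$1$ values; the width and the $2$-bit weights also check out.

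The one point you should fix is the depth accounting. Your network consists of \emph{three} affine maps interleaved with \emph{two} ReLU layers, whereas in the convention of \citet{vardi_optimal_2022} (which the surrounding lemmas rely on --- note that \cref{lem:support_by_ff} is likewise ``width $2$, depth $2$'' and is realized as a single width-$2$ hidden layer followed by a ReLU on the output) your construction would count as depth $3$. The intended depth-$2$ realization avoids the second hidden layer by exploiting that the target only needs to be clipped from below at the output: take
$F(x,y) = \sigma_R\bigl(1 - 2\,\sigma_R(y-x) - 2\,\sigma_R(x-y-1)\bigr)$,
which equals $1$ for $z \in [0,1]$ (both inner ReLUs vanish), equals $\sigma_R(1+2z) = 0$ for $z < -\tfrac12$, and equals $\sigma_R(3-2z) = 0$ for $z > \tfrac32$, again with width $2$ and all parameters in $\{\pm 1,\pm 2\}$. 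Since this gadget's depth only enters downstream bounds (e.g.\ the ``$+2n+2$'' in \cref{lem:third_subnetwork}) as an additive constant, your extra layer would not change any asymptotic statement, but as written your network does not match the stated ``depth $2$'' under the paper's convention.
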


The following lemma is an extension of the lemma by \citet{vardi_optimal_2022}, in that the outputs for unexpected inputs are also considered.
\begin{lem}\label{lem:third_subnetwork}
    Let $\rho, n, c \in \mathbb{N}$.
    Let $u \in \mathbb{N}$ with $\operatorname{LEN}(u) = \rho \cdot n$ and let $w \in \mathbb{N}$ with $\operatorname{LEN}(w) = c \cdot n$.
    Assume that for any $\ell, k \in \{0,1,\dots,n-1\}$ with $\ell \neq k$ we have that $|\operatorname{BIN}_{\rho \cdot \ell + 1: \rho \cdot (\ell + 1)}(u) - \operatorname{BIN}_{\rho \cdot k + 1: \rho \cdot (k + 1)}(u)| \geq 2$.
    Then, there exists a network $F:\mathbb{R}^3 \to \mathbb{R}$ with width $12$, depth $3n \cdot \max \{\rho,c\} + 2n + 2$ and bit complexity $n \cdot \max\{\rho, c\} + 2$, such that for every $x > 0$, if there exist $j \in \{0,1,\dots,n-1\}$ where $\lfloor x \rfloor = \operatorname{BIN}_{\rho \cdot j + 1: \rho \cdot (j + 1)}(u)$, then:
    \begin{align}
        F\left(\left(\begin{array}{c}
        x \\
        w \\
        u
        \end{array}\right)\right)
        = \operatorname{BIN}_{\rho \cdot j + 1: \rho \cdot (j + 1)}(w).
    \end{align}
    In addition, if $x$ satisfies $|x - 1/2 - \operatorname{BIN}_{\rho \cdot j + 1: \rho \cdot (j + 1)}(u)| > 1$ for any $j \in \{0,\dots,n-1\}$, then
    \begin{align}
        F\left(\left(\begin{array}{c}
        x \\
        w \\
        u
        \end{array}\right)\right)
        = 0.
    \end{align}
\end{lem}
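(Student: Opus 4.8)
The plan is to follow the three-stage bit-extraction argument of \citet{vardi_optimal_2022} but track carefully what happens on inputs $x$ that do \emph{not} correspond to any stored chunk, so that the network outputs $0$ there. First I would use \cref{lem:bit_extraction} repeatedly (with Telgarsky's triangle function $\psi$) to build, from $u$, all $n$ blocks $b_\ell \coloneqq \operatorname{BIN}_{\rho \cdot \ell + 1:\rho \cdot(\ell+1)}(u)$ for $\ell = 0,\dots,n-1$, and similarly from $w$ all $n$ blocks $\operatorname{BIN}_{c \cdot \ell + 1:c \cdot(\ell+1)}(w)$; this consumes depth $\lesssim 3n\rho$ and $\lesssim 3nc$ respectively, which is why the final depth is $3n\max\{\rho,c\} + O(n)$ and the bit complexity is $n\max\{\rho,c\} + O(1)$. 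Since extraction of $u$ and $w$ can be carried out in parallel branches and then combined, the width stays bounded by a small constant (the claimed $12$), with each branch needing width $5$ from \cref{lem:bit_extraction} plus a couple of coordinates to carry $x$ and partial sums.

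Next I would, for each $\ell \in \{0,\dots,n-1\}$, feed $\big(x, b_\ell + \tfrac12\big)$ into the ``hit test'' network of \cref{lem:hittest}: its output $h_\ell \in \{0,1\}$ equals $1$ when $x \in [b_\ell + \tfrac12, b_\ell + \tfrac32]$ — in particular when $\lfloor x\rfloor = b_\ell$ and $x$ is not pathologically close to $b_\ell + \tfrac32$ — and equals $0$ when $x < b_\ell$ or $x > b_\ell + 2$. Actually, the cleaner choice is to test $x - \tfrac12$ against $b_\ell$ so that $\lfloor x\rfloor = b_\ell \iff x - \tfrac12 \in [b_\ell - \tfrac12, b_\ell + \tfrac12]$; I would pick whichever normalization matches the interface in \cref{lem:hittest}, adjusting the constant bias accordingly. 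The final output is then $F(x,w,u) = \sum_{\ell=0}^{n-1} h_\ell \cdot \operatorname{BIN}_{c\cdot\ell+1:c\cdot(\ell+1)}(w)$, a single ReLU-linear combination adding only $O(1)$ to width and depth.

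The correctness argument has two cases. If $\lfloor x\rfloor = b_j$ for some $j$, then $h_j = 1$; and for every $\ell \neq j$ the hypothesis $|b_\ell - b_j| \ge 2$ together with $x \in [b_j, b_j + 1)$ forces $x$ to lie outside $[b_\ell, b_\ell + 2)$, so $h_\ell = 0$, hence $F = \operatorname{BIN}_{c\cdot j+1:c\cdot(j+1)}(w)$ as required. If instead $|x - \tfrac12 - b_j| > 1$ for all $j$, then for each $j$ we have $x - \tfrac12 \notin [b_j - \tfrac12, b_j + \tfrac12]$ with margin, in fact $x$ is more than $1$ away from the whole interval that makes $h_j$ fire, so every $h_j = 0$ and $F = 0$. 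I would then tally the depth as $3n\rho$ (extract $u$) or $3nc$ (extract $w$), whichever is larger, plus $2n$ for the $n$ parallel hit-tests composed into the accumulator, plus $O(1)$, matching $3n\max\{\rho,c\} + 2n + 2$; the bit complexity bound follows from \cref{lem:bit_extraction} ($n\max\{\rho,c\}+2$) dominating the $O(1)$ from \cref{lem:hittest}. The main obstacle I anticipate is purely bookkeeping: aligning the off-by-$\tfrac12$ normalizations between \cref{lem:bit_extraction}, \cref{lem:hittest}, and the stated interface so that the ``$> 1$'' separation in the hypothesis translates cleanly into all hit-tests returning $0$, and confirming that the parallel composition really fits inside width $12$ rather than accumulating with $n$.
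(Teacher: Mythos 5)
Your proposal follows essentially the same route as the paper's proof: iteratively extract the $\rho$-bit blocks of $u$ and the $c$-bit blocks of $w$ via \cref{lem:bit_extraction}, gate each block of $w$ with the hit test of \cref{lem:hittest} applied to $(x, \operatorname{BIN}_{\rho\ell+1:\rho(\ell+1)}(u))$, and accumulate the gated values, with the hypotheses $\lfloor x\rfloor = b_j$ and $|x-\tfrac12-b_j|>1$ matching exactly the $1$/$0$ regimes of the hit test. The two bookkeeping issues you flag are resolved exactly as you anticipate: the paper interleaves extraction, hit test, and accumulation block-by-block (so only the current block is ever held, keeping the width at $12$), and implements the product $h_\ell\cdot\operatorname{BIN}_{c\ell+1:c(\ell+1)}(w)$ as $\sigma_R\bigl(h_\ell\cdot 2^{c+1}-2^{c+1}+\operatorname{BIN}_{c\ell+1:c(\ell+1)}(w)\bigr)$.
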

\begin{proof}
    We follow exactly the same construction of a neural network by \citet{vardi_optimal_2022}.
    As such, for a detailed analysis of the depth and bit complexity of each network defined here, we refer the reader to the original paper and omit it here.
    
    For each $i = 0,\dots,n-1$, we construct a network $F_i$ as follows.
    First, we use \cref{lem:bit_extraction} for $u$ and $w$, respectively to obtain two networks $F^w_i$ and $F^u_i$ with width $5$, depth at most $3\cdot \max\{\rho, c\}$ and bit complexity at most $n\max\{\rho,c\} + 2$ such that
    \begin{align}
        F^u_i\left(\begin{array}{c}
            \psi^{(i\cdot \rho)}\left(\frac{u}{2^{n\cdot\rho}}+\frac{1}{2^{n\cdot\rho+1}}\right) \\
            \psi^{(i\cdot \rho)}\left(\frac{u}{2^{n\cdot\rho}}+\frac{1}{2^{n\cdot\rho+2}}\right)
        \end{array}\right)
        &= \left(\begin{array}{c}
            \psi^{((i+1)\cdot \rho)}\left(\frac{u}{2^{n\cdot\rho}}+\frac{1}{2^{n\cdot\rho+1}}\right) \\
            \psi^{((i+1)\cdot \rho)}\left(\frac{u}{2^{n\cdot\rho}}+\frac{1}{2^{n\cdot\rho+2}}\right) \\
            \operatorname{BIN}_{i\cdot\rho+1:(i+1)\cdot\rho}(u)
        \end{array}\right), \\
        F^w_i\left(\begin{array}{c}
            \psi^{(i\cdot c)}\left(\frac{w}{2^{n\cdot c}}+\frac{1}{2^{n\cdot c+1}}\right) \\
            \psi^{(i\cdot c)}\left(\frac{w}{2^{n\cdot c}}+\frac{1}{2^{n\cdot c+2}}\right)
        \end{array}\right)
        &= \left(\begin{array}{c}
            \psi^{((i+1)\cdot c)}\left(\frac{w}{2^{n\cdot c}}+\frac{1}{2^{n\cdot c+1}}\right) \\
            \psi^{((i+1)\cdot c)}\left(\frac{w}{2^{n\cdot c}}+\frac{1}{2^{n\cdot c+2}}\right) \\
            \operatorname{BIN}_{i\cdot c+1:(i+1)\cdot  c}(w)
        \end{array}\right).
    \end{align}
    Next, we use \cref{lem:hittest} with inputs $x$ and $y = \operatorname{BIN}_{i\cdot\rho+1:(i+1)\cdot\rho}(u)$ to obtain the neural network $F^{\tilde{y}}_i:\mathbb{R} \to \mathbb{R}$ with width $2$, depth $2$ and bit complexity at most $\rho$ such that
    \begin{align}
        F^{\tilde{y}}_i\left(\left(\begin{array}{c}
            x \\ \operatorname{BIN}_{i\cdot\rho+1:(i+1)\cdot\rho}(u)
            \end{array}\right)\right) = \begin{cases}
            1 & \text{if } \operatorname{BIN}_{i\cdot\rho+1:(i+1)\cdot\rho}(u) \leq x \leq \operatorname{BIN}_{i\cdot\rho+1:(i+1)\cdot\rho}(u)+1, \\
            0 & \text{if } |x - 1/2 - \operatorname{BIN}_{i \cdot \rho + 1: (i + 1) \cdot \rho}(u)| > 1.
        \end{cases}
    \end{align}
    In addition, we construct a $1$-layer feed-forward network $F^y_i$ by
    \begin{align}
        F^y_i\left(\begin{array}{c}
        x \\
        y
        \end{array}\right)
        \coloneqq \sigma_R(x \cdot 2^{c+1} - 2^{c+1} + y).
    \end{align}
    Putting the networks defined above and trivial modifications together, we define a neural network $F_i$ such that $F_i$ satisfies
    \begin{align}
        F_i:
        &\left(\begin{array}{c}
            x \\
            \psi^{(i\cdot \rho)}\left(\frac{u}{2^{n\cdot\rho}}+\frac{1}{2^{n\cdot\rho+1}}\right) \\
            \psi^{(i\cdot \rho)}\left(\frac{u}{2^{n\cdot\rho}}+\frac{1}{2^{n\cdot\rho+2}}\right) \\
            \psi^{(i\cdot c)}\left(\frac{w}{2^{n\cdot c}}+\frac{1}{2^{n\cdot c+1}}\right) \\
            \psi^{(i\cdot c)}\left(\frac{w}{2^{n\cdot c}}+\frac{1}{2^{n\cdot c+2}}\right) \\
            y
        \end{array}
        \right) \nonumber \\
        &\mapsto
        \left(\begin{array}{c}
            x \\
            \psi^{((i+1)\cdot \rho)}\left(\frac{u}{2^{n\cdot\rho}}+\frac{1}{2^{n\cdot\rho+1}}\right) \\
            \psi^{((i+1)\cdot \rho)}\left(\frac{u}{2^{n\cdot\rho}}+\frac{1}{2^{n\cdot\rho+2}}\right) \\
            \psi^{((i+1)\cdot c)}\left(\frac{w}{2^{n\cdot c}}+\frac{1}{2^{n\cdot c+1}}\right) \\
            \psi^{((i+1)\cdot c)}\left(\frac{w}{2^{n\cdot c}}+\frac{1}{2^{n\cdot c+2}}\right) \\
            y + \sigma_R\left(F^{\tilde{y}}_i\left(\left(\begin{array}{c}
            x \\ \operatorname{BIN}_{i\cdot\rho+1:(i+1)\cdot\rho}(u)
            \end{array}\right)\right) \cdot 2^{c+1} - 2^{c+1} + \operatorname{BIN}_{i\cdot c+1:(i+1)\cdot  c}(w) \right) 
        \end{array}
        \right).\label{eq:bit_extraction_input}
    \end{align}
    Finally, we concatenate $F_i$ for each $i = 0,\dots,n-1$ to construct a network $F:\mathbb{R}^3 \to \mathbb{R}$ by
    \begin{align}
        F \coloneqq G \circ F_{n-1} \circ \cdots \circ F_0 \circ H
    \end{align}
    where $G$ and $H$ are additional $1$-layer feed-forward networks such that
    \begin{align}
        H:\mathbb{R}^3 \to \mathbb{R}^5,
        \left(\begin{array}{c}
            x \\
            w \\
            u
        \end{array}\right)
        \mapsto
        \left(\begin{array}{c}
            x \\
            \frac{u}{2^{n\cdot\rho}}+\frac{1}{2^{n\cdot\rho+1}} \\
            \frac{u}{2^{n\cdot\rho}}+\frac{1}{2^{n\cdot\rho+2}} \\
            \frac{w}{2^{n\cdot c}}+\frac{1}{2^{n\cdot c+1}} \\
            \frac{w}{2^{n\cdot c}}+\frac{1}{2^{n\cdot c+2}} \\
            0
        \end{array}
        \right),
    \end{align}
    and $G:\mathbb{R}^5 \to \mathbb{R}$ outputs the fifth coordinate of the input.
    Note that with these configurations, it can be proved by induction that inputs of $F_i$ for each $i = 0,\dots,n-1$ are always of the form \cref{eq:bit_extraction_input}.

    Hereafter, we verify that the network $F$ actually satisfies the desired behavior.
    Notice that the output of $F$ is expressed as
    \begin{align}
        F\left(\left(\begin{array}{c}
            x \\
            w \\
            u
        \end{array}\right)\right)
        = \sum_{i=0}^{n-1} \sigma_R\left(F^{\tilde{y}}_i\left(\left(\begin{array}{c}
            x \\ \operatorname{BIN}_{i\cdot\rho+1:(i+1)\cdot\rho}(u)
            \end{array}\right)\right) \cdot 2^{c+1} - 2^{c+1} + \operatorname{BIN}_{i\cdot c+1:(i+1)\cdot  c}(w) \right) .
    \end{align}
    If there exist $j \in \{0,1,\dots,n-1\}$ with $\lfloor x \rfloor = \operatorname{BIN}_{\rho \cdot j + 1: \rho \cdot (j + 1)}(u)$, the right-hand side becomes
    \begin{align}
        F\left(\left(\begin{array}{c}
            x \\
            w \\
            u
        \end{array}\right)\right)
        &= \sigma_R\left(1 \cdot 2^{c+1} - 2^{c+1} + \operatorname{BIN}_{j\cdot c+1:(j+1)\cdot  c}(w)\right) \nonumber \\
        &\quad\quad\quad\quad +\sum_{\substack{i=0 \\ i\neq j}}^{n-1} \sigma_R\left(0 \cdot 2^{c+1} - 2^{c+1} + \operatorname{BIN}_{i\cdot c+1:(i+1)\cdot  c}(w) \right) \nonumber \\
        &= \operatorname{BIN}_{j\cdot c+1:(j+1)\cdot  c}(w),
    \end{align}
    because $\operatorname{BIN}_{i\cdot c+1:(i+1)\cdot  c}(w) \leq 2^{c+1}$ holds for any $i =0,\dots,n-1$.

    On the other hand, if $x$ satisfies $|x - 1/2 - \operatorname{BIN}_{\rho \cdot j + 1: \rho \cdot (j + 1)}(u)| > 1$ for any $j \in \{0,\dots,n-1\}$, the output of $F$ becomes
    \begin{align}
        F\left(\left(\begin{array}{c}
            x \\
            w \\
            u
        \end{array}\right)\right)
        &= \sum_{i=0}^{n-1} \sigma_R\left(0 \cdot 2^{c+1} - 2^{c+1} + \operatorname{BIN}_{i\cdot c+1:(i+1)\cdot  c}(w) \right) \nonumber \\
        &= 0,
    \end{align}
    as desired.
\end{proof}

\section{Experiments}\label{sec:appendix_experiments}
In this section, we empirically investigate whether the memorization capacity of real-world Transformers aligns with the behavior predicted by our theoretical analysis when varying the size of the dataset and the length of input sequences.

\subsection{Setup}
We trained Transformers in the next-token prediction setting on two real-world datasets and one randomly generated dataset of various sizes and evaluated the minimum network size required to memorize each dataset, plotting the results to examine the correlation between dataset size and network size.
To validate our theoretical analysis, the architecture of the Transformer used in our experiments followed the same structure as the model described in \cref{thm:next_token_prediction}.
To be more precise, we consider the following architecture:
\begin{align}
    \mathcal{N}\left(\mX^{(i)}\right)_{n}
    = 
    \mathcal{E}_{\mathrm{out}} \circ 
    \mathcal{F}^{(\mathrm{FF})}_2 \circ
    \mathcal{F}^{(\mathrm{UA})} \circ 
    \mathcal{F}^{(\mathrm{FF})}_1 \circ
    \mathcal{E}_{\mathrm{in}} \left(\mX^{(i)}\right)_{n}, \label{eq:model_for_experiment}
\end{align}
where $\mathcal{F}^{(\mathrm{FF})}_1$ and $\mathcal{F}^{(\mathrm{FF})}_2$ are token-wise feed-forward layers (\cref{eq:def_ff_Transformer}) stacked for $\# \mathrm{blocks}$ blocks with the hidden dimension $q=4m$ and embedding dimension $m = 2$ \footnote{The embedding dimension was set to $2$ so that it becomes difficult for models to memorize even small datasets.}.
Since the number of parameters in the model is approximately proportional to $\# \mathrm{blocks}$, we use $\#\mathrm{blocks}$ as a proxy for memorization capacity in our experiments by varying it to evaluate the minimum network size required for memorization.
The model was trained using the AdamW optimizer \citep{loshchilov_decoupled_2019} with full-batch updates. 
To focus on the representational capacity of models and minimize the influence of optimization, 
we tuned hyperparameters such as a learning rate and warmup interval using Optuna \citep{akiba_optuna_2019}.

\subsection{Results}
\textbf{Validation of memorization with Transformers using single uniform-attention}:
We first validate that a single layer of uniform attention actually suffices for memorization.
Specifically, we trained a simplified Transformer defined in \cref{eq:model_for_experiment}, consisting of one uniform attention layer and two token-wise feed-forward networks, on two real-world datasets: MultiNLI dataset \citep{williams_broad_2018} from GLUE benchmark \citep{wang_glue_2018} and IMDb dataset \citep{maas_learning_2011}.
For both datasets, the length of input sequences was truncated to $8$, and outputs at the $0$-th token were compared with labels using cross-entropy loss. 
While this setup does not correspond to next-token prediction in the traditional sense, it aligns with the next-token prediction setting considered in this paper.

The results are summarized in \figref{fig:mnli_train_acc} for MultiNLI dataset and \figref{fig:imdb_train_acc} for IMDb dataset.
Overall, our experiments confirmed that as the number of blocks increases, the training loss can be reduced to nearly zero, and the accuracy tends to approach one.
This outcome aligns with the predictions of \cref{thm:next_token_prediction}, demonstrating that even a single layer of uniform attention, when paired with an appropriate number of feed-forward networks, is sufficient for memorization.

\begin{figure}[h]
  \centering
  \begin{minipage}[b]{0.49\columnwidth}
    \centering
    \includegraphics[keepaspectratio, width=0.9\columnwidth]{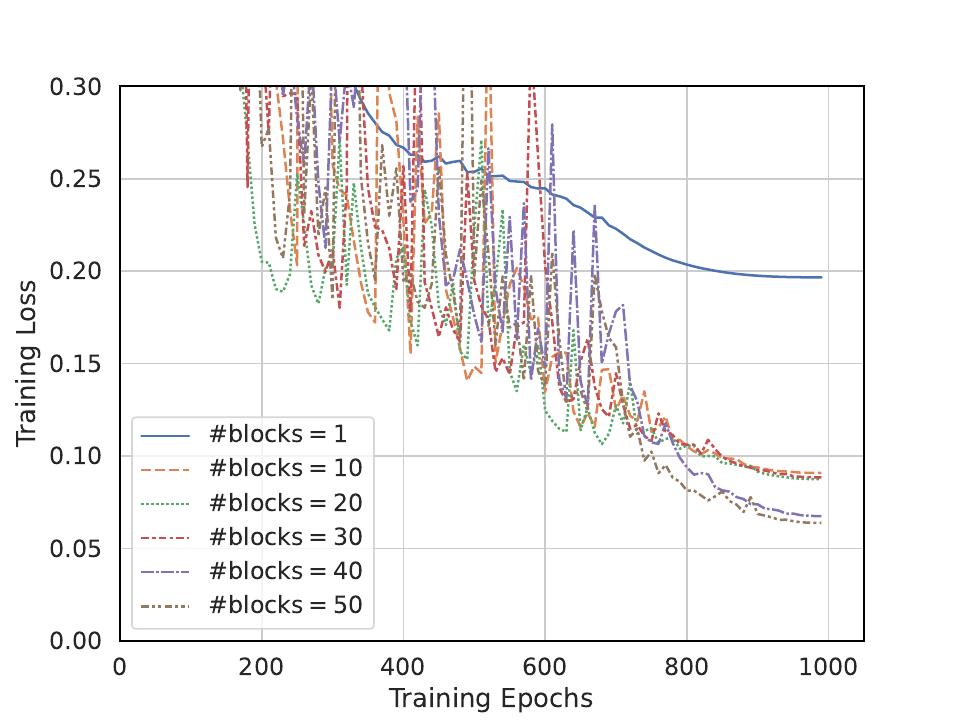}
    \subcaption{Training loss on MultiNLI dataset}
  \end{minipage}
  \begin{minipage}[b]{0.49\columnwidth}
    \centering
    \includegraphics[keepaspectratio, width=0.9\columnwidth]{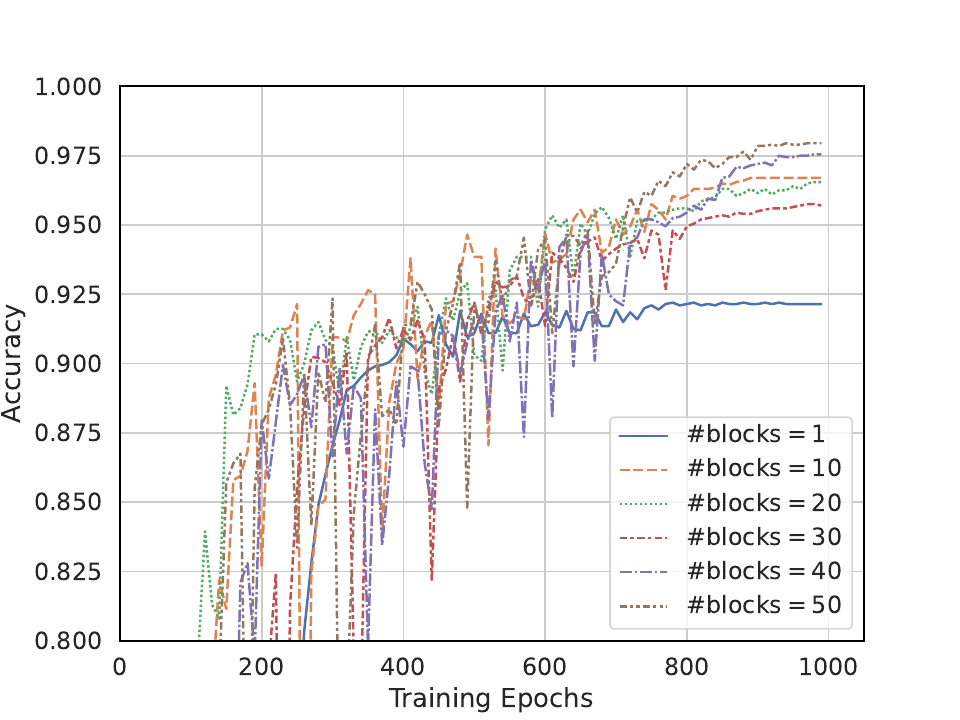}
    \subcaption{Accuracy on MultiNLI dataset}
  \end{minipage}
  \caption{Training losses and accuracies of Transformers with $\#\mathrm{blocks} = 1,10,20,30,40,50$ on a dataset of size $N=2000$ with input sequence length $n=8$ sampled from MultiNLI dataset.
  Each model was trained using full-batch gradient descent for $1000$ epochs, and the best-performing model was selected after running two trials of hyperparameter tuning with Optuna.}\label{fig:mnli_train_acc}
\end{figure}

\begin{figure}[h]
  \centering
  \begin{minipage}[b]{0.49\columnwidth}
    \centering
    \includegraphics[keepaspectratio, width=0.9\columnwidth]{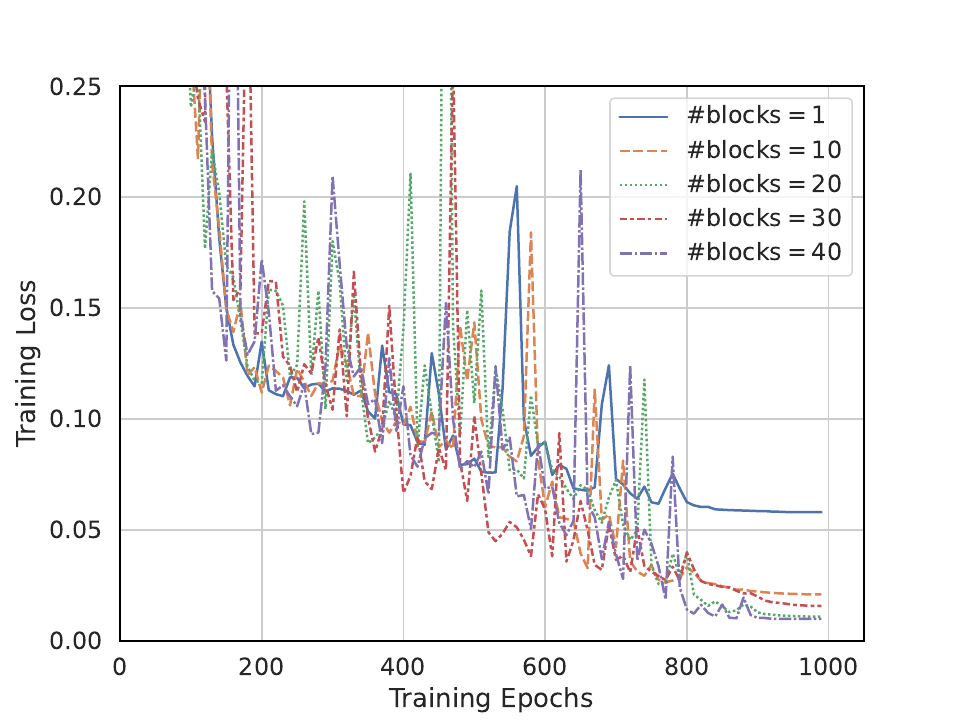}
    \subcaption{Training loss on IMDb dataset}
  \end{minipage}
  \begin{minipage}[b]{0.49\columnwidth}
    \centering
    \includegraphics[keepaspectratio, width=0.9\columnwidth]{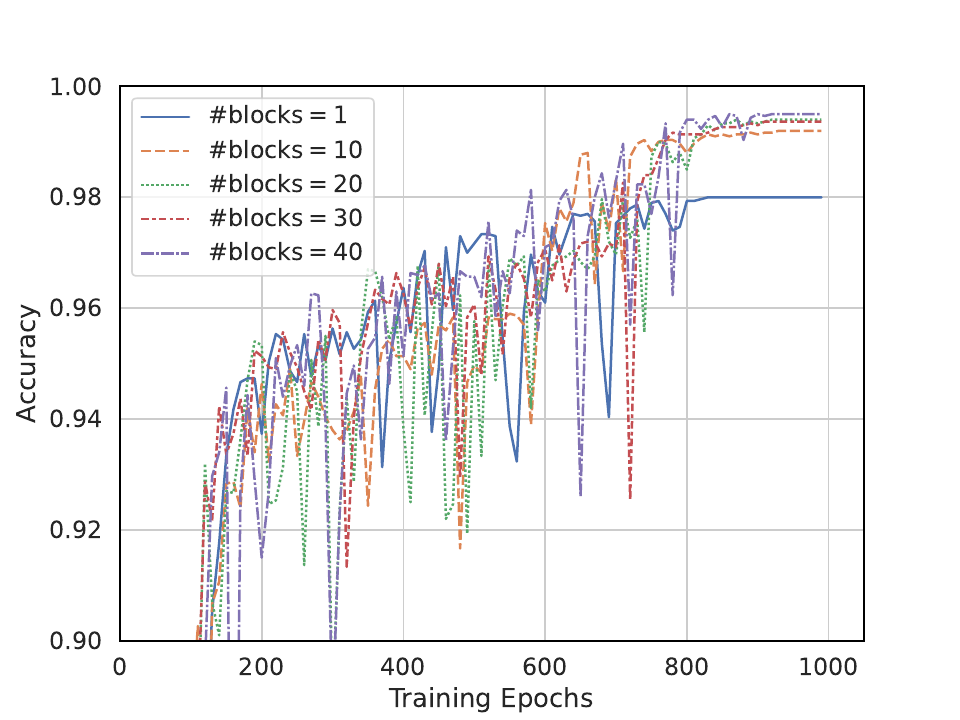}
    \subcaption{Accuracy on IMDb dataset}
  \end{minipage}
  \caption{Training losses and accuracies of Transformers with $\#\mathrm{blocks} = 1,10,20,30,40$ on a dataset of size $N=3000$ with input sequence length $n=8$ sampled from IMDb dataset.
  Each model was trained using full-batch gradient descent for $1000$ epochs, and the best-performing model was selected after running two trials of hyperparameter tuning with Optuna.}\label{fig:imdb_train_acc}
\end{figure}

\textbf{Varying the dataset size while the sequence length is fixed}:
We next examined how the memorization capacity of Transformers changes when varying the dataset size while keeping the sequence length fixed.
Specifically, we trained Transformers on datasets sampled from the MultiNLI dataset, where the sequence length was fixed at $n=8$ and the dataset size $N$ ranged from $600$ to $1700$ in increments of $100$.
For each dataset, we determined the minimum number of $\# \mathrm{blocks}$ required for the network to memorize the data. 
Here, a network was considered to have successfully memorized the dataset when the training error fell below a threshold of $\epsilon = 0.01$.

\Figref{fig:mem_capacity_vs_varying_dataset_size} summarizes the evaluation of the memorization capacity of Transformers on MultiNLI datasets of varying sizes.
From this figure, we can observe the following two points.
\begin{enumerate}
    \item \textbf{Square-root scaling for small datasets}:
    For smaller dataset sizes, particularly in the range of $N=600$ to $N=1400$, the memorization capacity of the Transformer scales approximately as $\sqrt{N}$.
    This behavior aligns well with the theoretical prediction of \cref{thm:next_token_prediction_informal} and \cref{thm:next_token_lower_bound_informal}, which suggests that the memorization capacity of Transformers in the next-token prediction setting scales as $\Theta{\sqrt{N}}$.

    \item \textbf{Rapid increase for larger datasets}:
    Beyond $N=1400$, the memorization capacity exhibits a sharp increase, deviating from the earlier $\sqrt{N}$ scaling.
    This phenomenon has also been observed in the experiments conducted by \citet{kim_provable_2023}.
    A plausible explanation is that the bit-length of each parameter in the network is fixed during the experiments.
    As the dataset size grows, the precision of the parameters becomes insufficient for optimal memorization.
    Under this regime, the analysis of Transformers with limited bit complexity, as discussed in Appendix~\ref{sec:appendix_limited_bit}, becomes applicable, predicting that the memorization capacity scales linearly with the dataset size $N$.
\end{enumerate}

\begin{figure}[h]
  \centering
  \begin{minipage}[b]{0.7\linewidth}
    \centering
    \includegraphics[keepaspectratio, width=\linewidth]{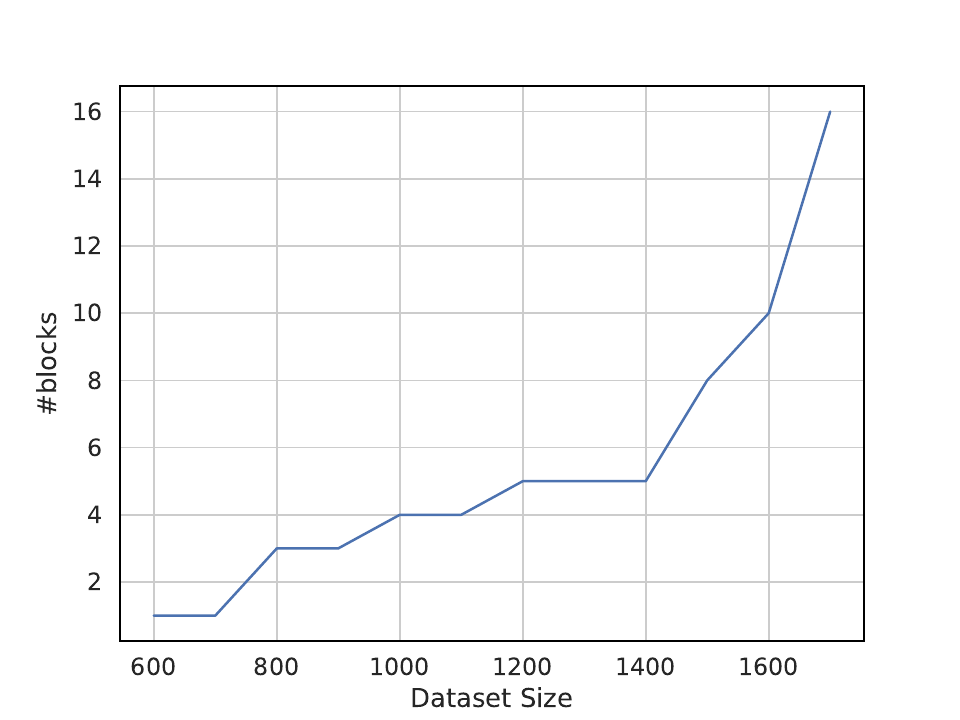}
  \end{minipage}
  \caption{Memorization capacity, that is, the minimum size of Transformers required for memorizing MultiNLI dataset with size $N=600,\dots,1700$ in increments of $100$.
  In this figure, the depth $\# \mathrm{blocks}$ of the two token-wise feed-forward networks $\mathcal{F}^{(\mathrm{FF})}_1$ and $\mathcal{F}^{(\mathrm{FF})}_2$ in \cref{eq:model_for_experiment} is used as the variable on the vertical axis to control the size of the network.
  Each model was trained using full-batch gradient descent for $1000$ epochs, and the best-performing model was selected after running ten trials of hyperparameter tuning with Optuna.}\label{fig:mem_capacity_vs_varying_dataset_size}
\end{figure}

\textbf{Varying the sequence length while the dataset size is fixed}:
We also investigated how the memorization capacity changes when the size of a randomly generated dataset is fixed at $N=500$ and the input sequence length
$n$ is varied across $10, 100, 1000$ and $10000$. 
In this experiment, each word token is a $6$-dimensional vector, with each element sampled independently from the uniform distribution over the interval $[0,1)$.
Similarly, each label is either $+1$ or $-1$, sampled independently from the Rademacher distribution.
Using the mean squared error as the loss function, a network was considered to have successfully memorized the dataset when the training error fell below a threshold of $\epsilon = 0.01$.
Surprisingly, the results showed that, for all sequence lengths, a Transformer with $\# \mathrm{blocks} = 4$ was the smallest model capable of achieving memorization.
An insight from this experimental result is that, while the upper bound of the memorization capacity given by \cref{thm:next_token_prediction_informal} has a gap of $O(\log n)$ compared to the lower bound in \cref{thm:next_token_lower_bound_informal}, real-world Transformers appear to align more closely with the lower bound of \cref{thm:next_token_lower_bound_informal}, that is, the memorization capacity might be nearly independent of the input sequence length $n$.

\end{document}